\newtheorem{thm}{Theorem}[section]
\newtheorem{corollary}[thm]{Corollary}
\newtheorem{lemma}[thm]{Lemma}
\newtheorem{example}[thm]{Example}
\newtheorem{definition}[thm]{Definition}
\newtheorem{proposition}[thm]{Proposition}
\numberwithin{equation}{section}
\crefname{thm}{Theorem}{Theorems}
\Crefname{thm}{Theorem}{Theorems}
\crefname{lemma}{Lemma}{Lemmas}
\Crefname{lemma}{Lemma}{Lemmas}
\crefname{proposition}{Proposition}{Propositions}
\Crefname{proposition}{Proposition}{Propositions}
\crefname{corollary}{Corollary}{Corollaries}
\Crefname{corollary}{Corollary}{Corollaries}
\crefname{definition}{Definition}{Definitions}
\Crefname{definition}{Definition}{Definitions}
\crefname{assumption}{Assumption}{Assumptions}
\Crefname{assumption}{Assumption}{Assumptions}
\crefname{remark}{Remark}{Remarks}
\Crefname{remark}{Remark}{Remarks}
\crefname{example}{Example}{Examples}
\Crefname{example}{Example}{Examples}
\newcommand{\eps}{\varepsilon
}
\newcommand{\pcal}{{\cal P}}
\newcommand{\ind}[1]{\one\ens{#1}}
\def\ce{\mathrm{CE}}
\def\uc{\mathrm{UC}}
\newcommand{\ucal}{{\cal U}}
\newcommand{\bcal}{{\cal B}}
\newcommand{\xcal}{{\cal X}}
\newcommand{\ycal}{{\cal Y}}
\newcommand{\rad}{\mathfrak{R}}
\newcommand{\gcal}{\mathcal{G}}
\newcommand{\Gcal}{\mathcal{G}}
\newcommand{\Hcal}{\mathcal{H}}
\newcommand{\E}{\mathbb{E}}
\newcommand{\R}{\mathbb{R}}
\newcommand{\dcal}{{\mathcal D}}
\newcommand{\fcal}{{\cal F}}
\newcommand{\hcal}{{\cal H}}
\newcommand{\beq}{\begin{equation}}
\newcommand{\eeq}{\end{equation}}
\newcommand{\one}{\mathbbm{1}}
\def\rset{\mathbb{R}}
\def\nset{\mathbb{N}}
\newcommand{\old}[1]{\textbf{\textcolor{gray}{}}}
\DeclareMathOperator*{\expec}{\displaystyle \mathbb{E}}
\DeclareMathOperator*{\prob}{\displaystyle \mathbb{P}}
\newcommand{\mce}{\mathrm{MCE}}
\newtheorem{result}{Result}
\newcommand{\1}{\mathds{1}}
\DeclareMathOperator*{\argmax}{\displaystyle  \text{argmax}}
\DeclareMathOperator{\Sub}{Sub}
\DeclareMathOperator{\VC}{VC}
\DeclareMathOperator{\Pdim}{Pdim}
\newcommand{\norm}[1]{\left \lVert #1 \right \rVert}
\newcommand{\open}[1]{\left ( #1 \right )}
\newcommand{\closed}[1]{\left [#1 \right]}
\newcommand{\ens}[1]{\left \{ #1 \right \} }
\newcommand{\modu}[1]{\left \lvert #1 \right \rvert }
\newcommand\inner[2]{\left \langle #1, #2 \right \rangle}
\newtcolorbox[auto counter,number within=section]{pabox}[2][]{%
colback=red!5!white,colframe=red!75!black,coltitle = white!20!white,fonttitle=\bfseries,
title=~\thetcbcounter: #2,#1}
\newcounter{algbox}
\newcommand{\algbox}[3][]{%
  \begin{tcolorbox}[colframe=black, colback=white, boxrule=0.5mm, width=\linewidth, arc=0mm, outer arc=0mm]
  #3
  \end{tcolorbox}
  \begin{center}
    \ifx&#1&%
    \refstepcounter{algbox}
      \textbf{Algorithm \thealgbox:} #2
    \else
      \textbf{#1}{#2}
    \fi
  \end{center}
}
\definecolor{blush}{rgb}{0.87, 0.36, 0.51}
\newcommand{\tri}{\Delta}
\newcounter{mydefcounter}
\newcounter{myrescounter}
\def\xcal{\mathcal{X}}
\def\pcal{\mathcal{P}}
\def\ycal{\mathcal{Y}}
\def\wcal{\mathcal{W}}
 \def\rset{\mathbb{R}}
\def\nset{\mathbb{N}}  
\def\eqdef{:=}
\def\argmax{\mathrm{argmax}}
\newcommand{\CPE}[3][]
{\ifthenelse{\equal{#1}{}}{{\mathbb E}\left[\left. #2 \, \right| #3 \right]}{{\mathbb E}_{#1}\left[\left. #2 \, \right | #3 \right]}}
\def\1{\mathsf{1}}
\def\MM12{{\bf {\sf MM1\&2}}}
\newcommand{\cinv}[2]{\mathbb{I}\closed{#1,#2}}
\def\cutoff{\Delta_{\operatorname{Cutoff}}}
\def\lin{\mathrm{lin}}
\def\rank{\mathrm{rank}}
\begin{document}

\twocolumn[

\aistatstitle{Scalable Utility-Aware Multiclass Calibration}

\aistatsauthor{ Mahmoud Hegazy \And Michael I. Jordan \And Aymeric Dieuleveut }

\aistatsaddress{ CMAP, \'Ecole polytechnique,\\ IP Paris, France; \\Inria Paris, France \And  Inria Paris, France; \\ EECS, UC Berkeley, USA \And CMAP, \'Ecole polytechnique,\\ IP Paris, France } ]

\begin{abstract}
    Ensuring that classifiers are well-calibrated, i.e., their predictions align with observed frequencies, is a minimal and fundamental requirement for classifiers to be viewed as trustworthy. Existing methods for assessing multiclass calibration often focus on specific aspects associated with prediction (e.g., top-class confidence, class-wise calibration) or utilize computationally challenging variational formulations. In this work, we study scalable \emph{evaluation} of multiclass calibration. To this end, we propose utility calibration, a general framework that measures the calibration error relative to a specific utility function that encapsulates the goals or decision criteria relevant to the end user. We demonstrate how this framework can unify and re-interpret several existing calibration metrics, particularly allowing for more robust versions of the top-class and class-wise calibration metrics, and, going beyond such binarized approaches, toward assessing calibration for richer classes of downstream utilities.
\end{abstract}

\section{INTRODUCTION}

Calibration is a fundamental property of probabilistic predictors. A calibrated model produces predictions that, on average, align with observed frequencies. For instance, if a weather forecaster predicts a 30\% chance of rain on a given day, rain should occur on approximately 30\% of such days. In multiclass classification problems, calibration ensures that the predicted probabilities reflect the true likelihood of each class. Formally, let $\mathcal{X}$ denote the input space, $\mathcal{Y} = \{e_1, \ldots, e_C\}$ the output space, where $e_i$ is the $i$-th canonical basis vector in $\rset^C$, and $\tri^{C-1}\coloneqq\ens{x\in \rset_+^C|\sum_i x_i =  1}$ denote the simplex in $\rset^C$. A predictor $f: \mathcal{X} \to \Delta^{C-1}$ is said to be perfectly calibrated with respect to a distribution $D$ over $\mathcal{X} \times \mathcal{Y}$ if $\mathbb{E}[Y \mid f(X)] = f(X)$.
The most direct metric for quantifying the deviation from perfect calibration is the Mean Calibration Error ($\mce$).
\begin{definition}[Mean Calibration Error]
For a distribution $D$ such that $(X, Y) \sim D$ and a predictor~$f$, the mean calibration error is defined as $    \mathrm{MCE}(f) \coloneqq  \expec\closed{  \norm{\mathbb{E}[Y \mid f(X)] - f(X) }_2}.$
\end{definition}
Without further assumptions, the $\mathrm{MCE}$ is fundamentally impossible to estimate, even in the binary setting \citep{lee2023t,duchi2024lecturenotes}. While assumptions like H\"older continuity of $\expec\closed{Y|f(X)}$ allow for consistent estimators of $\expec\closed{Y|f(X)}$ or minimax optimal tests for $\mathrm{MCE}(f)$ \citep{lee2023t,popordanoska2022consistent,tsybakov2009nonparametric}, their sample complexity scales exponentially with the dimension $C$.

Due to the difficulty of measuring $\mce$, multiple relaxations have been proposed, falling into two main categories: \emph{binarized} and \emph{variational}. First, binarized approaches \citep{gupta2022top,panchenko2022class,guo2017calibration} simplify the problem by focusing on specific binary events derived from the multiclass predictions, e.g.~top-class or class-wise calibration. However, these methods are by nature presumptive of downstream tasks. Moreover, their reliance on binning schemes or kernel estimators for the underlying binary subproblems introduces sensitivity to estimator choices and can suffer from high bias \citep{roelofs2022mitigating}. Second, variational approaches \citep{jung2021moment,blasiok2023unifying,gopalan2024computationally,kumar2018trainable,widmann2019calibration,zhao2021calibrating} assess calibration through optimization problems, such as the distance to the nearest perfectly calibrated predictor or the worst-case error against a class of witness functions. Unfortunately, these methods can be computationally intensive and can scale poorly in $C$.

To address these limitations and provide an application-focused perspective on calibration, we introduce \textit{utility calibration}. This framework evaluates a model $f$ by considering a downstream user who employs its predictions $f(X)$. The core idea is to measure calibration error relative to a specific \textit{utility function}, denoted $u$, which encapsulates the goals, costs, or decision criteria relevant to this end user. Utility calibration then assesses how well the \textit{expected utility} (as estimated by the user based on $f(X)$ and $u$) aligns with the \textit{realized utility} (obtained when the true outcome $Y$ is observed). 

In practice, models often serve diverse users or a single user with multiple objectives. We thus extend utility calibration to handle \textit{classes of utility functions}. The overall utility calibration for  a class~$\cal U$ can be defined as the worst-case error over $u \in \ucal$, denoted $\uc(f, \ucal)$. A notable aspect of this formulation is that it provides a structured way to express and analyze various existing calibration notions. In particular, by defining appropriate utility functions within $\ucal$, concepts such as top-class and class-wise calibration can be cast within the utility calibration framework. This offers a unified perspective and a superior alternative to binning for examining those notions of calibration.

\vspace{-0.75em}

\paragraph{Contributions and outline:} In \Cref{sec:related_works}, we review related literature on calibration metrics and post-hoc calibration methods.
In \Cref{sec:utility_calibration}, we define utility calibration and relate it to existing measures of calibration. In addition, we demonstrate how this framework can be used to frame several existing calibration concepts within a common utility-aware perspective, offering consistent interpretations and providing examples of relevant utility classes. Then, in \Cref{sec:measure_uc}, to characterize the difficulty of achieving utility calibration for classes of utility functions, we introduce the notions of \textit{proactive} and \textit{interactive} measurability. While for rich utility classes, proactive measurability is not possible, we show that interactive measurability is achievable for many classes of interest. Drawing on these insights, in \Cref{sec:experiments}, we empirically demonstrate the application of our proposed metrics and evaluation methodology. While the main motivation of this work is developing a better methodology for calibration assessment, we also show that our framework naturally yields a simple patching-style post-hoc calibration algorithm that achieves competitive performance against commonly used post-hoc calibration methods.

\vspace{-0.75em}

\paragraph{Notation:} For any vector $w \in \rset^C$, $w_i$ denotes its $i$-th component and $\gamma(w)\coloneqq\argmax_iw_i$. For a probability vector $p \in \Delta^{C-1}$, we write $Z \sim p$ to denote a categorical random variable $Z$ taking values in $\ycal = \{e_1, \dots, e_C\}$ such that $\prob\ens{Z=e_i} = p_i$, where $e_i$ is the $i$-th canonical basis vector.  We use $\one_E$ for the indicator function of an event $E$. $\mathbb{E}[\cdot]$ denotes expectation, which is typically taken w.r.t.~$(X,Y)\sim D$ and, for $k\in \nset_+$, $[k]=\ens{1,\ldots, k}$. Finally, for $a,b \in \rset$ with $a<b$, we denote $\cinv{a}{b}$ to be the set of closed interval subsets of $[a,b]$.

\section{RELATED WORK}\label{sec:related_works}
\vspace{-0.25cm}
\def\tce{\mathrm{TCE}}
\def\cwe{\mathrm{CWE}}
\def\topk{\mathrm{topK}}
\def\rank{\mathrm{rank}}

In this section, we review three classical and related approaches to measuring or ensuring a form of calibration, namely post-hoc calibration methods, binarized relaxations, and variational approaches.

First, \textbf{post-hoc calibration} refers to techniques applied to a pre-trained model's outputs to improve the alignment between its predicted probabilities and the true likelihood of outcomes without altering the original model parameters.  Such methods are advantageous as they decouple calibration concerns from model training.

Popular post-hoc calibration algorithms include Temperature Scaling and its multi-parameter extensions, Vector Scaling and Matrix Scaling \citep{guo2017calibration}, which may all be regarded as a multiclass extension of Platt's scaling \citep{platt1999probabilistic}. Dirichlet calibration assumes the model's predicted probability vectors can be modeled by a Dirichlet distribution, whose parameters are learned on a calibration set to transform the original probabilities \citep{kull2019beyond}. Nonparametric methods such as Histogram Binning \citep{zadrozny2001obtaining} and Isotonic Regression \citep{zadrozny2002transforming} learn calibration maps by discretizing the probability space or fitting monotonic (order-preserving) functions, respectively. Other methods also include: \citep{patel2020multi}, which applies a specific binning strategy followed by recalibration to minimize class-wise calibration error, \citep{rahimi2020intra}, which uses order-preserving transformations for recalibration to maintain accuracy.
Finally, a related body of literature aims to improve calibration by regularizing the training objective, e.g.~\citep{mukhoti2020calibrating, popordanoska2022consistent, marx2024calibration}.

While post-hoc methods aim to minimize the calibration error, assessing the calibration error itself remains a non-trivial task.  \textbf{Binarized relaxations} aim to circumvent the difficulty of measuring the calibration error of a high-dimensional predictor $f$ by measuring the $\mathrm{MCE}$ of a single or multiple downstream binary versions of $f$ instead. Two commonly used relaxations are the Top-Class calibration Error ($\tce$) \citep{guo2017calibration} and the Class-Wise calibration Error ($\cwe$) \citep{panchenko2022class}, defined as

{%
\setlength{\abovedisplayskip}{4pt}
\setlength{\belowdisplayskip}{4pt}
\setlength{\abovedisplayshortskip}{4pt}
\setlength{\belowdisplayshortskip}{4pt}
\setlength{\jot}{2pt} 
\begin{align*}
\tce(f) &= \expec\big[\,|\,\expec[Y_*\mid p_*]-p_*\,|\,\big],\\
\cwe(f) &= \sum_{i\in[C]} w_i\,\expec\big[\,|\,\expec[\one_{Y=e_i}\mid f_i]-f_i\,|\,\big],
\end{align*}
}

with the shorthands $f_i\coloneqq f(X)_i$, $p_*\coloneq f(X)_{i_*}$, and $Y_*\coloneq \one_{Y=e_{i_*}}$, where $i_*\coloneq \argmax_i f(X)_i$. In addition, $w_i$ is a class-dependent weight, which can be set to $1/C$, $w_i=\prob\{Y = e_i\}$, or another choice $w\in \tri^{C-1}$.

Both $\tce$ and $\cwe$ require the estimation of conditional expectation, which is typically approximated using binning schemes. For $(B_j)_{j\in[m]}$ a partition of $[0,1]$, the binned estimators are

{%
\setlength{\abovedisplayskip}{4pt}
\setlength{\belowdisplayskip}{4pt}
\setlength{\abovedisplayshortskip}{4pt}
\setlength{\belowdisplayshortskip}{4pt}
\setlength{\jot}{2pt} 
\begin{equation}
\label{eq:tce}
\tce^{\mathrm{bin}}(f) =  \sum_{j\in[m]} \big|\expec\closed{(p_* - Y_*)\one_{p_*\in B_j}} \big|
\end{equation}
\begin{equation}
\label{eq:cwce}
\cwe^{\mathrm{bin}}(f) = \sum_{\mathclap{{i\in[C],j \in [m]}}} w_i \modu{\expec\closed{\big(f_i - \one_{Y=e_i}\big) \one_{f_i\in B_j}}}.
\end{equation}
}

\citet{gupta2022top} unified multiple instances of binarized proxies of $\mce$, such as $\tce, \cwe$ and $\topk$ confidence calibration, introduced in \citep{gupta2021calibration}, and proposed additional binarized reductions which offer stronger notions of calibration.  Unfortunately, the binning schemes used in such binarized proxies are known to have a large effect on the estimated error \citep{roelofs2022mitigating, gruber2022better}. Apart from the simpler equal-size bins \citep{guo2017calibration} and equal-weight bins \citep{zadrozny2001obtaining}, multiple binning schemes built on top of different heuristics have been proposed~\citep[see, e.g.,][]{roelofs2022mitigating,patel2020multi, naeini2015obtaining, nixon2019measuring}. \citet{gupta2021distribution} showed a simple equal-weight binning scheme with better sample complexity guarantees for estimating bin averages. \citet{kumaar2018VUC} developed adaptive binning schemes with guarantees for discrete $f$ and showed that for any binning scheme, there exists a worst-case continuous $f$ such that the bias of $\tce^{\mathrm{bin}}(f)$ as an estimate of $\tce(f)$ is lower bounded by $0.49$ (noting that $\tce$ is bounded between $0$ and $1$). 

On the other hand, there exist binning-free alternatives for binarized reductions~\citep[see, e.g.,][]{popordanoska2022consistent, gupta2021calibration}. Nonetheless, in an assumption-free setting, it is generally impossible to consistently estimate the $\mce$ of binary predictors \citep{lee2023t,duchi2024lecturenotes, rossellini2025can}. As such, it is generally difficult to control the calibration error defined by binarized relaxations. 
 
On the other hand, \textbf{variational approaches} do not strictly aim to measure the $\mce$. Instead, they consider alternative formulations that do not require direct estimation of the conditional expectation. For example, Distance to Calibration ($\mathrm{DC}$) 
measures the distance between $f$ and the nearest perfectly calibrated predictors \citep{blasiok2023unifying}:

{
  \setlength{\abovedisplayskip}{4pt}
  \setlength{\belowdisplayskip}{4pt}
  \setlength{\abovedisplayshortskip}{4pt}
  \setlength{\belowdisplayshortskip}{4pt}
  \begin{equation*}
    \mathrm{DC}(f) \,\coloneqq\, \inf_{\mce(g)=0} \, \mathbb{E}\bigl[\|f(X)-g(X)\|_1 \bigr].
    \end{equation*}
}

A unified formulation of variational measures of calibration is weighted calibration, which assesses the calibration error against a class of witness functions \citep{jung2021moment}. Concretely, let $\wcal$ be a class of functions mapping $\Delta^{C-1}$ to $[-1,1]^{C}$. Then, weighted calibration error with witness class $\wcal$ is
\label{def:CE_witness}
\begin{equation}\label{eq:weightCE}
 \ce_{\mathcal{W}}(f) = \mathop{\sup}_{w \in \wcal} \mathbb{E}_{X,Y}\closed{\inner{w\bigl(f(X)\bigr)}{f(X) - Y}}.
\end{equation}
A specific instance of weighted calibration is the Kernel Calibration Error (KCE) \citep{lin2023taking}, which sets $\wcal$ to be the unit ball of the reproducing kernel Hilbert space (RKHS) of a multivariate universal kernel. This allows for efficient computation of the supremum but it remains hard to interpret the impact of low KCE for a user of $f$. \citet{blasiok2023unifying} showed that in the binary setting, $\mathrm{DC}(f)$ and $\ce_{\mathrm{Lip(1)}}(f)$ are equivalent up to a (low-degree) polynomial scaling and that $\ce_{\mathrm{Lip}(1)}(f)$ can be well approximated by the RKHS of the Laplace kernel, where $\mathrm{Lip}(1)$ is the class of $1$-Lipschitz functions from $\tri^{C-1}$ to $[-1,1]$.

The result on the equivalence between $\mathrm{CE}_{\mathrm{Lip}(1)}(f)$ and $\mathrm{DC}(f)$ was further extended to the multiclass setting in \cite[Theorem 15.5.5]{duchi2024lecturenotes} and \cite[Lemma 3.3]{gopalan2024computationally}. In particular, \citet{gopalan2022low} showed that measuring either $\mathrm{DC}(f)$ or $\mathrm{CE}_{\mathrm{Lip}(1)}(f)$ requires an exponential number of samples in $C$ \citep[Theorem 3.2. and Theorem 3.4.]{gopalan2024computationally}. Thus, even though $\mathrm{DC}\open{f}$ can be efficiently assessed in the binary setting, it is quickly intractable as $C$ increases.

A particular case is \textit{Decision calibration}, introduced by \citet{zhao2021calibrating}, that tailors calibration guarantees to downstream decision-making tasks. A predictor $f$ is considered decision calibrated of order $K$ if, for any decision problem involving at most $K$ actions, the expected loss computed using the model's predictions $f(X)$ accurately matches the true expected loss incurred. Formally, for any loss function $\ell$ mapping an outcome-action pair to a real-valued loss, decision calibration of order $K$ requires:
\begin{equation*}
\mathbb{E}\big[\ell\big(\hat{Y}, \delta(f(X))\big)\big] \,=\, \mathbb{E}\big[\ell\big(Y, \delta(f(X))\big)\big].
\end{equation*}
where $\hat{Y}\sim f(X)$  and $\delta$ is a decision rule that picks the best action among $K$ actions under the model's prediction $f(X)$. This ensures that decision-makers can reliably estimate the consequences of actions when using the predictor. A key contribution of \citet{zhao2021calibrating} is showing that decision calibration of order $K$ can be achieved by having $\sup_{p\in P(K)}\|\expec[(Y-f(X))\one\ens{f(X)\in p}]\|=0$, where $P(K)$ is the set of polytopes with at most $K$ supporting hyperplanes. Moreover, low decision calibration guarantees a notion of no-regret to downstream users, i.e., they cannot improve their utility by using any other best response policy; associated with another loss $\ell'$ instead of $\ell$.

In a similar aim to \citet{zhao2021calibrating}, multiple works have studied calibration through a downstream decision-theoretic perspective. Broadly, two complementary approaches have emerged: (i) online, regret-driven forecasting that enforces event-conditional notions of calibration and yields no (swap)-regret guarantees for best-responding agents \citep{noarov2025highdimensional, roth2024forecasting}; and (ii) decision-centric objectives that guarantee good performance across rich families of utilities or proper scoring rules, e.g., U-calibration \citep{kleinberg2023u}. 

Foundationally, these formulations target an ambitious goal, ensuring that downstream agents who best-respond using the predictor $f$ achieve (near-)optimal utility across scenarios and losses. However, this ambition often comes with scalability costs: U-calibration is developed in the binary online setting; the swap-regret guarantees in \cite{roth2024forecasting} rely on Lipschitz utilities in low dimensions (e.g., \(d\in\{1,2\}\)) and, in higher dimensions, assume a bounded action set and smooth best responses; and measuring decision calibration is computationally intractable in $C$, with exponential complexity even for $K=2$ \citep{gopalan2024computationally}.

By contrast, we adopt a scalable assessment perspective. Our utility calibration (shortly introduced in \Cref{sec:utility_calibration}) requires that the predicted utility \(v_u(X)\) is a reliable regressor of the realized utility \(u(f(X),Y)\). This prioritizes the reliability of utility estimation--not enforcing optimal downstream decisions--so users can recognize when their expected utility is poor rather than be misled by optimistic forecasts. Operationally, it reduces multiclass assessment to a binning-free worst-interval deviation in \(v_u\), scales well with \(C\), and supports interactively measurable audits across broad utility classes, complementing online/regret-focused formulations.

\section{UTILITY CALIBRATION (UC)}
\label{sec:utility_calibration}

We consider the following utility-centric formulation of calibration. In particular, we are interested in the setting where, for some input $X$, a downstream user leverages $f(X)$ as an estimate of $\expec\closed{Y\mid X}$. Based on this estimate of the conditional expectation, the user may then take arbitrary actions or decisions. Finally, the user observes the true realization of the label $Y$ and, based on this realization, may then suffer some loss or achieve some gain. To model such a pipeline of observation, action, then consequences, we consider a utility function $u:\tri^{C-1}\times\ycal\rightarrow[-1,1]$ such that $u(f(X),Y)$ models the reward obtained or the loss suffered by the decision-makers after using $f(X)$ to take arbitrary actions/decisions. In such a setting, predictability is highly desirable, in the sense that when using the predictor $f$, the utility obtained is similar to the utility expected. More concretely, for $\hat{Y}\sim f(X)$ and a given input $X$, the user can use $f(X)$ to construct the following estimate of utility:
\begin{equation}
    v_u(X) \coloneqq \expec\closed{u(f(X),\hat{Y}) \mid X }= \inner{f(X)}{\vec{u}(X)}, \label{eq:vU}
\end{equation}
where $\vec{u}:\xcal\rightarrow[-1,1]^C$ is defined as $\vec{u}(X)\coloneqq (u\open{f(X),e_i})_{i\in [C]}$. Ideally, we want the function $v_u(X)$ to be an unbiased estimator of the true utility. As such, we define the utility calibration with respect to a utility function $u$, denoted by $\uc\open{f,u}$, as
\begin{equation}\label{eq:defUC}
    \sup_{I\in\cinv{-1}{1}}  \modu{\expec\closed{(u(f(X),Y)-v_u(X))\one_{v_u(X)\in I}}}.
\end{equation}
We say that $f$ is $\eps$-calibrated with respect to a utility function $u$ if $\uc(f,u)\leq \eps$. Note that for $I = [a,b]$, the inner term in \eqref{eq:defUC} can be rewritten as
\begin{equation*}
\modu{\expec\closed{\open{u(f(X),Y)-v_u(X)} \big|  v_u(X)\in[a,b]}} p_{a,b}, 
\end{equation*}
where $p_{a,b} =  \prob\ens{v_u(X)\in [a,b]}$.  In words, looking at the instances where $v_u(X)\in [a,b]$, the bias between the utility the decision-maker expects to get (while using $f(X)$ to take decisions and to estimate the utility) and the actual utility the decision-maker achieves (when using $f(X)$ to take decisions), is at most $\eps$ after being weighted by the probability of $\ens{v_u(X)\in [a,b]}$. 

Combining \eqref{eq:vU} and \eqref{eq:defUC} above, one obtains that $\uc\open{f,u}$ is equivalent to

\begin{equation}
\label{eq:UCasWCE}
\sup_{I\in \cinv{-1}{1}} \left| \expec\!\left[ \inner{Y-f(X)}{\vec{u}(X)}\one\{v_u(X)\in I\} \right] \right|.
\end{equation}

Thus, utility calibration is equivalent to weighted calibration \eqref{eq:weightCE}, with the witness class $\wcal$ set to $\wcal\open{u} \coloneqq  \ens{x\mapsto  \xi\vec{u}(x)\one\ens{v_u(x)\in I}|I\in\cinv{-1}{1}}$. In addition, our notion of utility calibration requires that the predicted label $\hat{Y}\sim f(X)$ can be used for an unbiased estimate of the utility. This is related to {Outcome Indistinguishability (OI)} \citep{dwork2021outcome}, where a predictor $f$ is considered reliable if its simulated outcomes $\hat{Y} \sim f(X)$ are computationally indistinguishable from Nature's true outcomes $Y$. We also note that this perspective connects to recent work that leverages OI variants to establish links between loss minimization guarantees, omnipredictors, and multicalibration \citep{gopalan2022omnipredictors,gopalan2023loss, gopalan2023swap}. 

\subsection{Decision-Theoretic Implications of UC
}

In a recent work, for the binary classification setting, \citet{rossellini2025can} introduced the CutOff calibration metric, which assesses the calibration error by measuring against the worst-case bin, and demonstrated that it provides robust decision-theoretic guarantees. We defer a more detailed discussion of CutOff calibration to Appendix \ref{app:cutoff_calibration}. By assessing the $\uc\open{f,u}$ on the worst-case interval of $v_u(\cdot)$, our construction of utility calibration can be seen as a generalization of CutOff calibration to multiple dimensions and arbitrary utility functions, and it in fact inherits analogous decision-theoretic guarantees to those shown in \citet[Prop 2.1 and 3.2]{rossellini2025can}. 

In particular, consider a decision rule based on thresholding the predicted utility $v_u(X)$ at some level $t_0 \in [-1,1]$, i.e., taking the action $\hat{U}_{t_0}\coloneqq\one\ens{v_u(X) \ge t_0}$. This models the situation in which a user needs to commit a binary decision after estimating the utility using $f(X)$. Then, the quality of this decision can be assessed by the loss  $\ell_{\mathrm{util}}(\tilde u, \widehat{U}; t) = \modu{ \tilde u- t}\one\{\hat{U} \neq \ind{u \ge t}\}$, which penalizes the \textit{deviation} between the true utility $u_Y$  and the decision threshold $t_0$ when a mismatch between $\hat{U}_{t_0}$ and the ideal decision occurs. Consequently, let $R_{\mathrm{util}}(g; t_0) = \mathbb{E}[\ell_{\mathrm{util}}(u(f(X),Y), \hat{U}_{t_0}; t_0)]$ be the associated risk. Then, we  show that the decision process $\hat{U}_{t_0}$ cannot significantly be improved by any simple post-processing of $v_u(\cdot)$ through a composition with a monotone function. 

\begin{proposition}[Utility Risk Gap] \label{prop:uc_risk_gap_main}
For any utility $u$ and threshold $t_0\in[-1,1]$,
\begin{equation*}
\begin{aligned}
R_{\mathrm{util}}(v_u& (X); t_0) - 
\\&\inf_{\substack{h:[-1,1]\to[-1,1]\\ \textnormal{monotone}}} R_{\mathrm{util}}(h(v_u(X)); t_0)\le 2\,\uc(f,u).
\end{aligned}
\end{equation*}
\end{proposition}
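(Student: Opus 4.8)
The plan is to reduce everything to a one–dimensional statement about the pair $W\coloneqq u(f(X),Y)$ (realized utility) and $V\coloneqq v_u(X)$ (predicted utility), both valued in $[-1,1]$. First I would unpack $\ell_{\mathrm{util}}$: a decision $\hat U$ committed at level $t_0$ costs $(t_0-W)_+$ when $\hat U=1$ and $(W-t_0)_+$ when $\hat U=0$ (with $(\cdot)_+=\max(\cdot,0)$), so for a post-processing $g$ inducing the rule $\one\ens{g(X)\ge t_0}$,
\begin{equation*}
R_{\mathrm{util}}(g;t_0)=\expec\closed{(t_0-W)_+\one\ens{g(X)\ge t_0}+(W-t_0)_+\one\ens{g(X)<t_0}},
\end{equation*}
and in particular $R_{\mathrm{util}}(v_u(X);t_0)$ is the risk of the honest rule $\one\ens{V\ge t_0}$. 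I would also record the elementary identity $(W-t_0)_+-(t_0-W)_+=W-t_0$ and rewrite $\uc(f,u)$ from \eqref{eq:defUC} as $\sup_{I\in\cinv{-1}{1}}\modu{\expec\closed{(W-V)\one\ens{V\in I}}}$; approximating a half-open interval from the inside by closed sub-intervals shows that this quantity also upper bounds $\modu{\expec\closed{(W-V)\one\ens{V\in I}}}$ for every interval $I\subseteq[-1,1]$, which is the only property of $\uc$ I will use.

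The second step is the structural reduction. Composing a monotone $h:[-1,1]\to[-1,1]$ with a threshold at $t_0$ always yields a threshold rule in $V$: the set $\ens{v:h(v)\ge t_0}$ is a half-line, so the induced decision is $\one\ens{V\ge s}$ (or $\one\ens{V>s}$) when $h$ is non-decreasing and $\one\ens{V\le s}$ (or $\one\ens{V<s}$) when $h$ is non-increasing, the constant rules $0,1$ being the degenerate cases; conversely every such threshold rule comes from some $h$. Hence $\inf_{h\text{ monotone}}R_{\mathrm{util}}(h(v_u(X));t_0)$ is exactly the infimum of $R_{\mathrm{util}}$ over threshold rules in $V$, and it suffices to bound, uniformly over competing threshold rules, the gap $R_{\mathrm{util}}(v_u(X);t_0)-R_{\mathrm{util}}(\text{rule};t_0)$.

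For the third step I would run a short case analysis on the direction of the competitor and on whether its threshold $s$ lies below or above $t_0$. Since $\one\ens{V\ge t_0}$ and the competitor agree outside the interval $J$ between the two thresholds, the gap lives on $\ens{V\in J}$; collecting terms with the identity above, it collapses to a sum of at most two pieces of the form $\pm\expec\closed{(W-t_0)\one\ens{V\in J'}}$ in which $V-t_0$ keeps a constant sign on $J'$, opposite to the outer sign (e.g.\ for a non-decreasing competitor with $s\le t_0$ the gap is exactly $\expec\closed{(W-t_0)\one\ens{s\le V<t_0}}$, and for the reversed competitor $\one\ens{V\le s}$ with $s\ge t_0$ it is $\expec\closed{(t_0-W)\one\ens{V>s}}+\expec\closed{(W-t_0)\one\ens{V<t_0}}$). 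On each piece I would split $W-t_0=(W-V)+(V-t_0)$ (or $t_0-W=(t_0-V)+(V-W)$): the summand carrying $V-t_0$ is $\le 0$ pointwise on $J'$, so it only helps and can be discarded, and the surviving term $\pm\expec\closed{(W-V)\one\ens{V\in J'}}$ is $\le\uc(f,u)$ by step one. A non-decreasing competitor contributes one such piece, so its gap is $\le\uc(f,u)$; a reversed competitor contributes two, so its gap is $\le 2\uc(f,u)$. Taking the supremum over competitors (strict-threshold and constant rules being limiting or identical cases) gives the claim, the factor $2$ being due solely to non-increasing post-processings.

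The part I expect to require the most care is not the algebra but the bookkeeping in steps two and three: handling strict versus non-strict thresholds, possible atoms in the law of $V$, and the degenerate endpoints $t_0\in\ens{-1,1}$, and checking that each witness interval $J'$ produced in step three really sits inside $[-1,1]$ so that $\uc(f,u)$ controls it — this last point is immediate because $V\in[-1,1]$ and $s,t_0\in[-1,1]$, but it does rely on the half-open-interval remark from step one.
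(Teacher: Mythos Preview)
Your proposal is correct and follows essentially the same route as the paper: both arguments reduce to the difference set where the two threshold rules disagree, split $W-t_0=(W-V)+(V-t_0)$, discard the $(V-t_0)$ piece by its sign on the relevant interval, and bound each surviving $(W-V)$ piece by $\uc(f,u)$. The paper's proof is slightly more compact in that it does not first reduce to threshold rules but directly observes that the disagreement sets $E_1,E_2$ are intervals in $v_u(X)$; on the other hand, your treatment is more careful in that it explicitly handles both non-decreasing and non-increasing post-processings (the paper's proof only writes out the non-decreasing case) and along the way records the sharper observation that non-decreasing competitors alone yield a gap of at most $\uc(f,u)$ rather than $2\,\uc(f,u)$.
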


In words, \Cref{prop:uc_risk_gap_main} indicates that, if $f$ is utility calibrated,  in such a binary decision-making scenario, the user can barely benefit from any monotonic post-processing to $v_u$. Another interpretation of $v_u(X)$ is as a regressor for the realized utility $u_Y \coloneqq u(f(X),Y) \in[-1,1]$. Similar to \citet[Prop 2.1]{rossellini2025can}, we can show that the regressor $v_u$ satisfies a notion of calibration itself. First, note that distance from calibration naturally extends to such a single-dimension regression problem by considering a function $g_u(X)$ to be a perfectly calibrated predictor of $u_Y$ if $\mathbb{E}[u_Y \mid g_u(X)] = g_u(X)$, a.s. We denote this extended notion of distance from calibration as $\mathrm{DCU}(f,u)$, the Distance to Calibrated Utility Predictor for $v_u(X)$ with respect to the realized utility $u(f(X),Y)$:
\begin{align*}
\mathrm{DCU}\open{f}
&\coloneqq \inf_{g_u: \mathcal{X} \to [-1,1]} \; \mathbb{E}\left| g_u(X) - v_u(X) \right|
\\
&\text{s.t. } \; \mathbb{E}[u_Y \mid g_u(X)] = g_u(X) \, .
\end{align*}
We show that $\mathrm{DCU}(f,u)$ can be effectively controlled through $\uc(f,u)$.
\begin{proposition} [Utility Calibration Upper Bounds $\mathrm{DCU}$] \label{prop:uc_bounds_dcu_main}
Let $u:\tri^{C-1}\times\ycal\rightarrow[-1,1]$ be a utility function. Then,
\begin{equation*}
\mathrm{DCU}\open{f} \;\le\; \sqrt{8\,\uc(f,u)} + \uc(f,u).
\end{equation*}
\end{proposition}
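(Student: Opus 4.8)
\textit{Proof strategy.} The plan is to construct an explicit perfectly calibrated predictor of the realized utility $u_Y\coloneqq u(f(X),Y)$ by discretizing $v_u(X)$ into finitely many bins, to bound its $L^1$ distance to $v_u(X)$ by a quantity of the form $m\,\uc(f,u)+2/m$ with $m$ the number of bins, and then to optimize over $m$. Write $\eta\coloneqq\uc(f,u)$. Fix an integer $m\ge 1$ and partition $[-1,1]$ into $m$ consecutive length-$(2/m)$ intervals $I_1,\dots,I_m$, taken half-open so as to be disjoint and to cover $[-1,1]$. For $x$ with $v_u(x)\in I_j$ set $j(x)=j$, and define $g(x)\coloneqq c_{j(x)}$, where $c_j\coloneqq\expec[u_Y\mid v_u(X)\in I_j]\in[-1,1]$, discarding bins of zero probability.

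First I would check that $g$ is a perfectly calibrated predictor of $u_Y$, i.e.\ $\expec[u_Y\mid g(X)]=g(X)$ almost surely. This is immediate from the structure of $g$: it is $\sigma(v_u(X))$-measurable and takes at most $m$ distinct values, and for each value $c$ in its range, $\{g(X)=c\}$ is a disjoint union of bins $I_j$ each satisfying $\expec[u_Y\mid v_u(X)\in I_j]=c$, so averaging $u_Y$ over that union again returns $c$. Consequently $g$ is feasible in the infimum defining $\mathrm{DCU}(f)$, whence $\mathrm{DCU}(f)\le\expec\left|g(X)-v_u(X)\right|$.

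Next I would bound $\expec\left|g(X)-v_u(X)\right|$ bin by bin. Writing $\bar v_j\coloneqq\expec[v_u(X)\mid v_u(X)\in I_j]$, on the event $\{v_u(X)\in I_j\}$ we have $|g(X)-v_u(X)|=|c_j-v_u(X)|\le|c_j-\bar v_j|+|\bar v_j-v_u(X)|\le|c_j-\bar v_j|+2/m$, the last step because $v_u(X)$ and $\bar v_j$ both lie in the length-$(2/m)$ interval $I_j$. Multiplying by $\one\{v_u(X)\in I_j\}$ and taking expectations, the first term contributes $|c_j-\bar v_j|\,\prob\{v_u(X)\in I_j\}=\bigl|\expec\bigl[(u_Y-v_u(X))\one\{v_u(X)\in I_j\}\bigr]\bigr|\le\eta$, using that $\uc(f,u)$ is a supremum of such moments over intervals; here I would note that, although $\cinv{-1}{1}$ consists of closed intervals, the same moment bound holds for a half-open interval $[a,b)$ by the limiting argument $\one\{v\in[a,b)\}=\lim_{\varepsilon\downarrow0}\one\{v\in[a,b-\varepsilon]\}$ together with dominated convergence. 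Summing over $j=1,\dots,m$ (empty bins contributing $0$) gives $\mathrm{DCU}(f)\le\expec\left|g(X)-v_u(X)\right|\le m\eta+2/m$.

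Finally I would optimize the free parameter $m$. If $\eta=0$, letting $m\to\infty$ shows $\mathrm{DCU}(f)=0$ and the claim is trivial; otherwise, taking $m=\ceil{\sqrt{2/\eta}}$ gives $\sqrt{2/\eta}\le m\le\sqrt{2/\eta}+1$, hence $m\eta\le\sqrt{2\eta}+\eta$ and $2/m\le\sqrt{2\eta}$, so $\mathrm{DCU}(f)\le 2\sqrt{2\eta}+\eta=\sqrt{8\,\uc(f,u)}+\uc(f,u)$. I do not anticipate a serious technical obstacle; the one genuinely substantive point is the choice of coarseness. The fully refined calibrated predictor $\expec[u_Y\mid v_u(X)]$ will not deliver the bound---its $L^1$ deviation from $v_u$ can be large even when every interval moment is small, since interval moments only detect non-cancelling deviations---so one must deliberately bin and balance the bin width $2/m$ against the accumulated calibration error $m\eta$. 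Everything else is routine bookkeeping, together with the minor measure-theoretic care about bin boundaries and empty bins.
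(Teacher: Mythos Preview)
Your proof is correct and follows essentially the same approach as the paper: both construct a binned calibrated predictor $g(X)=\mathbb{E}[u_Y\mid \text{bin of }v_u(X)]$, bound its $L^1$ distance to $v_u(X)$ by $(\text{number of bins})\cdot\uc(f,u)+(\text{bin width})$ via the same triangle-inequality decomposition through $\bar v_j$, and optimize the tradeoff to obtain $2\sqrt{2\eta}+\eta$. The only cosmetic difference is that the paper parametrizes by bin width $W$ (with $K_W=\lceil 2/W\rceil$ bins) and plugs in $W=\sqrt{2\eta}$, whereas you parametrize directly by the integer number of bins $m$ and take $m=\lceil\sqrt{2/\eta}\rceil$; you are also slightly more careful about interval endpoints, empty bins, and the possibility that distinct bins share the same $c_j$.
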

 
\Cref{prop:uc_bounds_dcu_main} implies that if $\uc(f,u)$ is small, then $v_u(X)$, seen as a regressor for the true utility $u(f(X),Y)$, is a calibrated predictor itself. This further strengthens the interpretation of $\uc(f,u)$: not only does it \textit{ensure actionable decisions based on $v_u(X)$}, but it also \textit{guarantees that $v_u(X)$ is not far from calibration}. 

\subsection{Measuring \texorpdfstring{$\uc\open{f,u}$}{UC(f,u)}}
A naturally arising question is on the difficulty of measuring and achieving a small utility calibration error. We show in \Cref{lem:estimate_uc_single} that both the computational and sample complexity of estimating $\uc\open{f,u}$ are generally feasible and of limited dependence on the dimension, allowing its scalability to predictors with thousands of classes.

\begin{lemma}[Estimating Utility Calibration Against a Single Function]\label{lem:estimate_uc_single}
Let $u:\tri^{C-1}\times\ycal\rightarrow[-1,1]$ be a fixed utility function and $f: \mathcal{X} \to \Delta^{C-1}$ be a given predictor. Define the empirical estimator $\widehat{\uc}(f,u; S)$ based on $n$ i.i.d. samples $S = \{(X_i, Y_i)\}_{i=1}^n \sim D^n$ as
\begin{align*}
 \sup_{{I\in \cinv{-1}{1}}}\Big|\frac{1}{n}\sum_{i=1}^n \big[{(u(f(X_i),Y_i)-v_u(X_i))} \one_{v_u(X_i)\in I}\big]\Big|.
\end{align*}
Then, for any $\delta > 0$, with probability at least $1-\delta$ over the draws of the sample $S$,
\begin{equation}\label{eq:error_single_uc}
 \textstyle   |\widehat{\uc}(f,u; S) - \uc(f,u)| \le \Tilde{O} \open{ \sqrt{\frac{\log(1/\delta)}{n}}}.
\end{equation}
Furthermore, $\widehat{\uc}(f,u; S)$ can be computed from $S$ in $O(n\log(n) + n T_{eval})$ time, where $T_{eval}$ is the time to evaluate $f(X_i)$ and $u(\cdot, \cdot)$.
\end{lemma}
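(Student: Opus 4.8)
The plan is to separate the claim into a statistical part---the high-probability bound \eqref{eq:error_single_uc}---and an algorithmic part---the running time. Fix $f$ and $u$, write $W(x):=v_u(x)=\inner{f(x)}{\vec u(x)}\in[-1,1]$, and for a closed interval $I\in\cinv{-1}{1}$ set $\phi_I(x,y):=\big(u(f(x),y)-v_u(x)\big)\one\{W(x)\in I\}$, so that $\uc(f,u)=\sup_{I}\modu{\mathbb{E}[\phi_I]}$ and $\widehat{\uc}(f,u;S)=\sup_{I}\modu{\frac1n\sum_{i=1}^{n}\phi_I(X_i,Y_i)}$. Since $u$ and $v_u$ both take values in $[-1,1]$, every $\phi_I$ is bounded in absolute value by $2$. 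By the elementary inequality $\modu{\sup_I a_I-\sup_I b_I}\le\sup_I\modu{a_I-b_I}$,
\[ \modu{\widehat{\uc}(f,u;S)-\uc(f,u)}\ \le\ \sup_{I\in\cinv{-1}{1}}\modu{\frac1n\sum_{i=1}^{n}\phi_I(X_i,Y_i)-\mathbb{E}[\phi_I]}, \]
so it suffices to control the empirical process indexed by $\mathcal{F}:=\{\phi_I:I\in\cinv{-1}{1}\}$. The decisive structural point---and the reason the final rate has no dependence on $C$---is that membership in $I$ is decided by the single scalar $W(\cdot)$: the class $\{x\mapsto\one\{W(x)\in I\}:I\text{ a closed interval}\}$ is the pull-back through the fixed map $W$ of the indicators of intervals of $\mathbb{R}$, which has VC dimension $2$; multiplying these indicators by the fixed $[-2,2]$-valued function $(x,y)\mapsto u(f(x),y)-v_u(x)$ yields a VC-subgraph class of index $O(1)$, uniformly in $C$ (equivalently: on any $n$ points, as $I$ varies the vector $(\phi_I(x_i,y_i))_i$ takes at most $O(n^2)$ distinct values).

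Next I would turn this into the stated deviation bound by combining concentration with in-expectation control: $\modu{\widehat{\uc}-\uc}\le\modu{\widehat{\uc}-\mathbb{E}\widehat{\uc}}+\modu{\mathbb{E}\widehat{\uc}-\uc}$. For the first term, $\widehat{\uc}(f,u;\cdot)$ has bounded differences---changing one sample alters $\frac1n\sum_j\phi_I(X_j,Y_j)$ by at most $4/n$ for every $I$, hence alters the supremum over $I$ by at most $4/n$---so McDiarmid's inequality gives $\modu{\widehat{\uc}-\mathbb{E}\widehat{\uc}}\le\sqrt{8\log(2/\delta)/n}$ with probability at least $1-\delta$. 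For the second term, symmetrization bounds it by $\mathbb{E}\sup_I\modu{\frac1n\sum_i\phi_I(X_i,Y_i)-\mathbb{E}[\phi_I]}\le 2\,\rad_n(\mathcal{F})$, and either Dudley's entropy integral for the VC-subgraph class above or Massart's finite-class lemma applied to the $O(n^2)$ attainable index subsets yields $\rad_n(\mathcal{F})=O(\sqrt{\log n/n})$. Summing the two contributions gives $\modu{\widehat{\uc}(f,u;S)-\uc(f,u)}\le\tilde{O}\big(\sqrt{\log(1/\delta)/n}\big)$, which is \eqref{eq:error_single_uc}.

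For the running time, the plan is as follows. First compute, for each $i\in[n]$, the vector $f(X_i)$, the coordinates $\vec u(X_i)=(u(f(X_i),e_j))_{j\in[C]}$, the scalar $W_i=\inner{f(X_i)}{\vec u(X_i)}$, and $g_i:=u(f(X_i),Y_i)-W_i$; this costs $O(n\,T_{eval})$, absorbing the $O(C)$ inner product and coordinatewise evaluations into $T_{eval}$. Then sort the samples by $W_i$ in time $O(n\log n)$ and let $g_{(1)},\dots,g_{(n)}$ be the $g$-values in sorted order. A closed interval $I$ selects exactly a contiguous block of this sorted list (a union of maximal runs of equal $W$-values), so, after collapsing each such run into a single element carrying the sum of its $g_i$'s, we have $\widehat{\uc}(f,u;S)=\frac1n\max_{j\le k}\modu{\sum_{i=j}^{k}g_{(i)}}$, the maximum also ranging over the empty block (value $0$). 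This maximum absolute contiguous-subarray sum is computed in $O(n)$ time by running Kadane's maximum-subarray algorithm on the collapsed array and on its negation. Altogether the cost is $O(n\log n+n\,T_{eval})$.

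I expect the only genuine obstacle to be the structural observation of the first paragraph: one must recognize that, although $f$ takes values in the $(C-1)$-simplex, the witness family underlying $\uc(f,u)$ is combinatorially one-dimensional---it is generated by thresholding the scalar $v_u(\cdot)$---so its VC-subgraph index and uniform covering numbers are absolute constants, independent of $C$. Once that is in hand, the remaining ingredients (McDiarmid plus symmetrization plus a finite-class or Dudley bound for the rate; sorting plus a linear-time contiguous-subarray scan for the running time) are routine, with only minor bookkeeping for ties among the $W_i$ and for the endpoint conventions in the definition of $\cinv{-1}{1}$.
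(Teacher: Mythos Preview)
Your proposal is correct and establishes the claim, but it takes a somewhat different route from the paper on the statistical side. The paper first reduces from closed intervals to one-sided thresholds via $h_{(a,b]}=h_b-h_a$, obtaining $\sup_I|T(h_I)|\le 2\sup_t|T(h_t)|$, and then bounds the empirical Rademacher complexity of the threshold class $\{(X,Y)\mapsto A(X,Y)\one\{V\le t\}\}$ \emph{directly} via Doob's $L^2$ maximal inequality: after sorting by $V$, the supremum over $t$ becomes the running maximum of a martingale in the Rademacher signs, yielding $\rad_n\le 4/\sqrt{n}$ with no logarithmic factor. You instead observe that the interval indicators have VC dimension $2$, so the class $\{\phi_I\}$ is VC-subgraph of constant index, and apply Massart/Dudley to get $\rad_n(\mathcal{F})=O(\sqrt{\log n/n})$. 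Both arguments are fine for the stated $\tilde O$ bound; the paper's martingale trick buys a slightly sharper constant-free Rademacher estimate, while your VC argument is more standard and transparently dimension-free. On the computational claim, your sort-then-Kadane argument is exactly right and more detailed than what the paper writes out; the paper's appendix proof in fact only spells out the statistical part.
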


First,  we note that the constants hidden in the $\Tilde{O}(\cdot)$ in \eqref{eq:error_single_uc} are dimension-independent. Similarly, the only dimension-dependent term in the computational complexity is $T_{eval}$. As such, $\uc\open{f,u}$ is a completely scalable notion of calibration, allowing it to be implemented for classifiers with a thousand classes -- as exemplified in \Cref{sec:experiments}.  In addition, given that $\uc(f,u)$ can be formulated as weighted calibration (see eq.~\eqref{eq:UCasWCE}) and that $\widehat{\uc}(f,u; S)$ is both computationally and sample efficient, we can leverage standard patching-style post-hoc calibration \citep{jung2021moment, hebert2018multicalibration, duchi2024lecturenotes} to recalibrate $f$ so as to minimize $\uc\open{f,u}$ while decreasing its Brier score. Concretely, the recalibrator iteratively identifies worst-interval witnesses for $u$ and applies corrections to $f(X)$ that reduce the utility calibration error and the Brier score; see Appendix \ref{app:post_hoc} for a detailed description and Appendix \ref{app:experiments} for implementational details and experiments. With these encouraging facts on the utility calibration w.r.t.~a single $u$ being established, we next turn our attention to Utility Calibration against a function class $\mathcal U$.

\subsection{Utility calibration against a function class}

In many real-world scenarios, a single probabilistic predictor $f$ might serve multiple downstream users, or a single user might employ it under varying conditions or objectives. The exact utility function relevant at the time of decision-making may not be known beforehand, or it might even change over time (e.g., due to changing costs, available actions, or strategic goals), or it might be fundamentally user-dependent.

Therefore, ensuring reliability often requires guarantees that hold not just for a single, pre-specified utility function, but for an entire class of plausible or relevant utility functions, denoted by $\ucal$. This provides a more robust assurance that the model's predictions are trustworthy across a range of potential downstream applications. To capture this requirement, overloading the notation, we define utility calibration against a function class as the worst-case performance over the class, i.e. 
\begin{equation}
    \uc\open{f,\ucal} = \sup_{u\in \ucal} \uc\open{f,u}. \label{eq:utilcalclass}
\end{equation}

To illustrate the practical relevance of this concept, we exhibit hereafter several examples of utility classes, each motivated by different downstream tasks. We first demonstrate how to recover similar notions to top-class \eqref{eq:tce} and class-wise $\eqref{eq:cwce}$ using the framework of utility calibration~\eqref{eq:utilcalclass}.

\begin{example}[Top-Class and Class-Wise Utilities ($\ucal_{\tce}, \ucal_{\cwe}$)]\label{ex:top_class_wise}
Again using the shorthands $f_i\coloneqq f(X)_i$, $p_*\coloneq f(X)_{i_*}$, and $Y_*\coloneq \one_{Y=e_{i_*}}$, where $i_*\coloneq \argmax_i f(X)_i$, define 
\begin{align*}
\uc(f,\ucal_{\tce}) &= \sup_{I\subseteq[0,1]} |\expec[(Y_* - p_*)\one_{p_*\in I}] |, \\
\uc(f,\ucal_{\cwe}) &= \sup_{\mathclap{c\in[C], I \subseteq[0,1]}}| \expec[(\one_{Y=e_c} - f(X)_c) \one_{f(X)_c\in I}] |.
\end{align*}
\end{example}
In contrast to the binned estimators $\tce^{\mathrm{bin}}$ \eqref{eq:tce} and $\cwe^{\mathrm{bin}}$ \eqref{eq:cwce}, utility calibration with the classes $\ucal_{\tce}$ and $\ucal_{\cwe}$ offers a more robust, binning-free, computable assessment. Specifically, $\uc\open{f,\ucal_{\tce}}$ and $\uc\open{f,\ucal_{\cwe}}$ are determined by maximizing the calibration deviation over \textit{any} possible  interval $I \subseteq [0,1]$ (and additionally over classes for $\ucal_{\cwe}$), effectively identifying the worst-case interval-based error. This approach inherently avoids fixed binning schemes, thereby circumventing pathologies where bin choices drastically alter estimated errors \citep{roelofs2022mitigating, kumaar2018VUC}. Consequently, for any binning scheme using $m$ bins,  $m \cdot \uc\open{f, \ucal_\tce}$ and $m \cdot \uc\open{f, \ucal_\cwe}$ upper bound $\tce^{\mathrm{bin}}(f)$ and  $\cwe^{\mathrm{bin}}(f)$ respectively, while the converse is not true. We refer to Appendix \ref{app:uc_bound_binarized} for the formal statement. Furthermore, by \Cref{prop:uc_risk_gap_main}, a small $\uc\open{f, \ucal_{\tce}}$ guarantees that decisions based on thresholding top-class confidence are robust to monotonic recalibration, and  by \Cref{prop:uc_bounds_dcu_main}  that this confidence is a calibrated predictor of actual top-class accuracy. Analogous guarantees hold for $\uc\open{f, \ucal_{\cwe}}$ for individual class confidences, offering assurances for downstream applications.

Beyond the binarized perspectives offered by $\ucal_{\tce}$ and $\ucal_{\cwe}$, the utility calibration framework readily accommodates richer and more complex classes of utility functions. This allows us to move beyond presumptive binary events and consider more nuanced downstream applications. In particular, consider settings where the utility derived from an outcome $Y$ is intrinsic to the outcome itself, independent of the model's prediction $f(X)$. For example, in medical diagnosis, the cost or severity tied to a specific disease $Y=e_j$ might be a fixed value $a_j$, irrespective of the diagnostic prediction. Formally, such situations can be modeled using a utility function $u_a: \Delta^{C-1} \times \mathcal{Y} \to [-1, 1]$ defined by a payoff vector $a \in [-1, 1]^C$, where the utility function and the expected utility are respectively $u_a(\cdot, e_j) = a_j$ and $v_{u_a}(X) = \inner{f(X)}{a}$, with $a_j$ representing the utility if the true outcome is $e_j$.

\begin{example}[Linear Utilities ($\ucal_{\mathrm{lin}}$)]\label{ex:util_linear}
Define the class of linear utilities as $\ucal_{\mathrm{lin}} \coloneqq \{ u_a \mid a \in [-1, 1]^C \}$, noting that the predicted utility $v_{u_a}(X)$ is linear in the prediction $f(X)$.
\end{example}
 A small $\uc(f, \ucal_{\mathrm{lin}})$ ensures that for any payoff vector $a$, the predicted expected utility $v_{u_a}(X)$, as a regressor of the realized utility, is close to calibration. 

Alternatively, in applications like information retrieval or recommender systems, the realized utility depends on the rank assigned to the true outcome $Y=e_j$. Given a model's prediction $p=f(X)$, assuming $p_1,\ldots,p_C$ are distinct (or that ties are broken arbitrarily/randomly among equal coordinates), the rank of class $j$, denoted $\rank(p, j)$, is its position across $p$, i.e. $\rank(p, j) \coloneqq\sum_{i\in [C]}\one\ens{p_j\leq p_i}$. Using a valuation vector $\theta \in [-1,1]^C$, a rank-based utility function can then be constructed as $u_\theta(p, e_j) = \theta_{\rank(p,j)}$ with the associated expected utility function $v_{u_\theta}(X) = \sum_{i=1}^C f(X)_i \theta_{\rank(f(X), i)}$. Calibrating for such utilities ensures the model's expected rank-based performance aligns with reality.
A prominent special case is $\topk$ utility, where the valuation vector $\theta^{(K)}$ for a given $K \in [C]$ is defined such that $\theta^{(K)}_r = 1$ if $r \le K$ and $\theta^{(K)}_r = 0$ if $r > K$.

\begin{example}[Rank-Based and Top-$K$ Utilities ($\ucal_{\rank}, \ucal_{\topk}$)]\label{ex:util_rank}
The class of general rank-based utilities is $\ucal_{\rank} \coloneqq \{ u_\theta \mid \theta \in [-1, 1]^C \}$.
The class of top-$K$ utilities is then $\ucal_{\topk} \coloneqq \{ u_{\theta^{(K)}} \mid K \in [C] \}$, where $\theta^{(K)}_r = \one\{r \le K\}$. Equivalently, $u_K(p, e_j) = \one\{\rank(p, j) \le K\}$.
A small $\uc(f, \ucal_{\rank})$ (or $\uc(f, \ucal_{\topk}$)) ensures reliable prediction for general rank (or specifically top-$K$ accuracy) valuations, validating the model's ranking capabilities.
\end{example}

As discussed in \Cref{sec:related_works}, decision calibration \citep{zhao2021calibrating} ensures that for problems with up to $K$ actions, the model's predicted utility for its recommended action matches the actual realized utility. We can frame a similar guarantee within utility calibration. For any bounded loss function $l: \mathcal{Y} \times [K] \to [-1,1]$ and a prediction $p=f(X)$, the optimal action is $\delta_l(p) = \arg\min_{a \in [K]} \mathbb{E}_{\hat{Y} \sim p}[l(\hat{Y}, a)]$. The utility function is then $u_l(p, y) = -l(y, \delta_l(p))$, representing the negative loss from outcome $y$ under action $\delta_l(p)$. The predicted expected utility is $v_{u_l}(X) = -\mathbb{E}_{\hat{Y} \sim f(X)}[l(\hat{Y}, \delta_l(f(X)))|X]$.

\begin{example}[Decision Calibration Utilities ($\ucal_{\mathrm{dec}, K}$)]
Let $\mathcal{L}_K = \{ l : \mathcal{Y} \times [K] \to [-1,1] \}$ be the class of all bounded $K$-action loss functions, and the utility class is $\ucal_{\mathrm{dec}, K} \coloneqq \{ u_l,  l \in \mathcal{L}_K \}$.
A small $\uc(f, \ucal_{\mathrm{dec}, K})$ implies that for any $K$-action decision problem $l \in \mathcal{L}_K$, the model's prediction of expected utility for its chosen action $\delta_l(f(X))$ reliably reflects the achieved 
utility $-l(Y, \delta_l(f(X)))$.
\end{example}

These aforementioned examples illustrate that calibrating against classes $\ucal$ provides guarantees tailored to diverse user needs, moving beyond simplistic binarized assessments. A critical question then arises: how can $\uc\open{f, \ucal}$ be measured for a given class $\ucal$, which we address in the next section. 
\section{SCALABLE EVALUATION OF UTILITY CALIBRATION}\label{sec:measure_uc}
Estimating $\sup_{u\in \ucal}\uc\open{f,u}$ in \eqref{eq:utilcalclass} presents two key challenges: the \textit{computational complexity} of the optimization,  and the \textit{sample complexity} required for the empirical supremum to converge to its true value. We introduce the two notions of proactive and interactive measurability to decouple these two aspects.

\begin{definition}[Proactive Measurability] \label{def:proactive}
The utility calibration error w.r.t.~class $\ucal$ is \textit{proactively measurable} if there exists an algorithm $A$ and polynomial functions $N_{poly}, T_{poly}$ such that for any $\eps, \delta > 0$ and $n \ge N_{poly}(C , 1/\eps, 1/\delta)$ samples $S \sim D^n$, algorithm $A(S)$ outputs $\hat{u}$ satisfying $|\uc\open{f,\hat{u}} - \uc\open{f,\ucal}| \le \eps$ with probability at least $1-\delta$ and the runtime of $A(S)$ is bounded by $T_{poly}(C, n)$.
\end{definition}

Generally, for a finite class $\ucal$, if $\modu{\ucal}$ grows polynomially in $C$ then by \Cref{lem:estimate_uc_single} we can guarantee proactive measurability. Nonetheless, even for simple infinite classes such as $\ucal_{\mathrm{lin}}$, proactive measurability reduces to a non-convex optimization problem that cannot be generally solved in polynomial time. In fact, even aiming for a weaker notion, namely \textit{improper auditing}, \citet{gopalan2024computationally} showed that assessing both weaker and stronger notions than $\uc\open{f, \ucal_{\mathrm{lin}}}$ cannot be done in polynomial time in both the error ${\eps}^{-1}$ and the dimension $C$ \citep[Theorem 1.3, Theorem 5.2, and Theorem 8.6]{gopalan2024computationally}. A more detailed description of \citet{gopalan2024computationally} hardness results is in \Cref{app:proactive_cal_hard}. Next, we thus propose an alternative criterion of measurability that decouples the statistical guarantee from the computational complexity of verifying the supremum.

\begin{definition}[Interactive Measurability] \label{def:interactive}
The utility calibration error w.r.t.~class $\ucal$ is \textit{interactively measurable} if there exists an estimator $\widehat{\uc}(f,u; S)$ and a polynomial function $N_{poly}$ such that for $n \ge N_{poly}(C, 1/\eps, 1/\delta)$ samples $S \sim D^n$, it holds with probability at least $1-\delta$ that $\textstyle \sup_{u \in \ucal} |\widehat{\uc}(f,u; S) - \uc(f, u)| \le \eps$.
\end{definition}

Interactive measurability represents a much more achievable goal. For example, while decision calibration is computationally hard to measure, \citet{zhao2021calibrating} showed that it admits polynomial sample complexity. In Appendix~\ref{app:interactive_measure}, we prove the interactive measurability of $\ucal_\lin$ and $\ucal_\rank$ and that for both the sample complexity if $\tilde{O}(C)$. In addition, we demonstrate that for a class of utilities $\ucal$, it is interactively measurable if the functions of the form $f(X)\rightarrow u(f(X),e_i)\one_{v_u(X)\in I}$ for $u\in \ucal$, $i\in [C]$, and $I\in \cinv{-1}{1}$ admit a controlled Rademacher complexity. 

In summary, while proactively measuring the worst-case utility calibration error $\uc(f, \ucal) = \sup_{u \in \ucal} \uc(f, u)$ is often computationally prohibitive for expressive utility classes $\ucal$, interactive measurability allows for efficient estimation of $\uc(f,u)$ uniformly for any \textit{specific} $u \in \ucal$. Next, we leverage this distinction to propose a scalable evaluation methodology that, instead of pursuing the intractable worst-case error, characterizes the \textit{distribution} of utility calibration errors across $\ucal$. This provides a more nuanced understanding of a model $f$'s calibration reliability over a spectrum of potential downstream applications, that we then evaluate in experiments.

Our approach considers a probability distribution $\mathcal{D}_{\ucal}$ over the utility class $\ucal$. Many utility classes of interest admit a finite-dimensional parameterization, making sampling from $\mathcal{D}_{\ucal}$ practical. We sample $M$ utility functions $\{u_m\}_{m=1}^M$ from $\mathcal{D}_{\ucal}$ and, for each $u_m$, compute its estimated error $\widehat{E}_{m,n} \coloneqq \widehat{\uc}(f, u_m; S)$ using $n$ data points from a sample~$S$. These $M$ error estimates then form an \textit{empirical Cumulative Distribution Function (eCDF)}, $\textstyle \widehat{F}_{E,M,n}(e) \coloneqq \frac{1}{M} \sum_{m=1}^M \one\{\widehat{E}_{m,n} \le e\}$, which serves as an empirical proxy for the true CDF,  $F_E(e) \coloneqq \prob_{u \sim \mathcal{D}_{\ucal}}(\uc(f, u) \le e)$. We provide  guarantees on the difference between $F_E(e)$ and $\hat{F}_{E,M,n}(e)$ in Appendix~\ref{app:quantile_gaurantees}. Informally, if individual utility-calibration errors can be uniformly estimated to accuracy $\eps_{\text{stat}}$ from $n$ samples, then with high probability over the draw of the dataset and the $M$ sampled utilities, the deviation between the true CDF $F_E$ and the empirical eCDF $\widehat{F}_{E,M,n}$ in $L_2$ scales as $\tilde{O}\!\left(\sqrt{\eps_{\text{stat}}} + \sqrt{\tfrac{1}{M}}\right)$.

In particular, $\ucal_{\mathrm{lin}}$ (\Cref{ex:util_linear}) and $\ucal_\mathrm{rank}$  (\Cref{ex:util_rank}) both admit finite-dimension parameterization. For $\ucal_{\mathrm{lin}}$, we construct $\dcal_{\ucal_{\mathrm{lin}}}$ by sampling the payoff vectors $a$ uniformly in $\partial B_\infty \coloneq \ens{a\in \rset^C:\norm{a}_\infty = 1}$. Meanwhile, for $\ucal_{\mathrm{rank}}$, we also sample from $\dcal_{\ucal_{\mathrm{rank}}}$ by uniformly sampling valuation vectors $\partial B_\infty$, which satisfy $\theta_1 \ge \theta_2 \ge \dots \ge \theta_C$. This is to reflect a rational preference for better ranks, i.e. the higher the rank of the true realization within the predictions of $f(X)$, the higher the utility.

\vspace{-0.25cm}
\section{EXPERIMENTS}
\label{sec:experiments}
\label{sec:quantile_eval}
\vspace{-0.2cm}

\begin{table}[t!]
  \centering
  \resizebox{\columnwidth}{!}{
  \begin{tabular}{@{}l@{\hspace{0.5em}}@{}c@{\hspace{0.5em}}@{}c@{\hspace{0.5em}}@{}c@{\hspace{0.5em}}@{}c@{}}
    \toprule
    Method & Brier Score ($\times 10^{2}$) & $\mathrm{CWE}^{\mathrm{bin}}$ ($\times 10^{4}$) & $\mathrm{TCE}^{\mathrm{bin}}$ ($\times 10^{3}$) & $\mathcal{U}_{\mathrm{comb}}$ ($\times 10^{3}$) \\
    \midrule
    Uncalibrated & $22.5 \pm 0.13$ & $2.47 \pm 0.01$ & $94.50 \pm 0.83$ & $124.0 \pm 0.48$ \\
    Dirichlet & $\bm{21.3 \pm 0.10}$ & $1.35 \pm 0.01$ & $13.80 \pm 0.41$ & $26.1 \pm 0.67$ \\
    IR & $22.8 \pm 0.11$ & $\bm{1.12 \pm 0.01}$ & $31.70 \pm 0.59$ & $54.1 \pm 0.94$ \\
    Temp. Scaling & $21.6 \pm 0.10$ & $1.50 \pm 0.01$ & $21.20 \pm 0.41$ & $45.2 \pm 0.53$ \\
    Vector Scaling & $22.7 \pm 0.12$ & $1.55 \pm 0.02$ & $34.60 \pm 1.03$ & $37.4 \pm 2.13$ \\
    Patching & $21.6 \pm 0.13$ & $1.56 \pm 0.03$ & $\bm{11.10 \pm 0.66}$ & $\bm{22.1 \pm 1.66}$ \\
    \bottomrule
\end{tabular}
  }
  \caption{\texttt{ViT}-ImageNet-1K results. We report mean $\pm$ 2 standard errors over $10$ train/test splits.}\label{tab:imagenet_results_specific_uc} 
  \vspace{-0.15cm}
\end{table}

\begin{figure}[t!]
    \centering
    \includegraphics[width=\columnwidth]{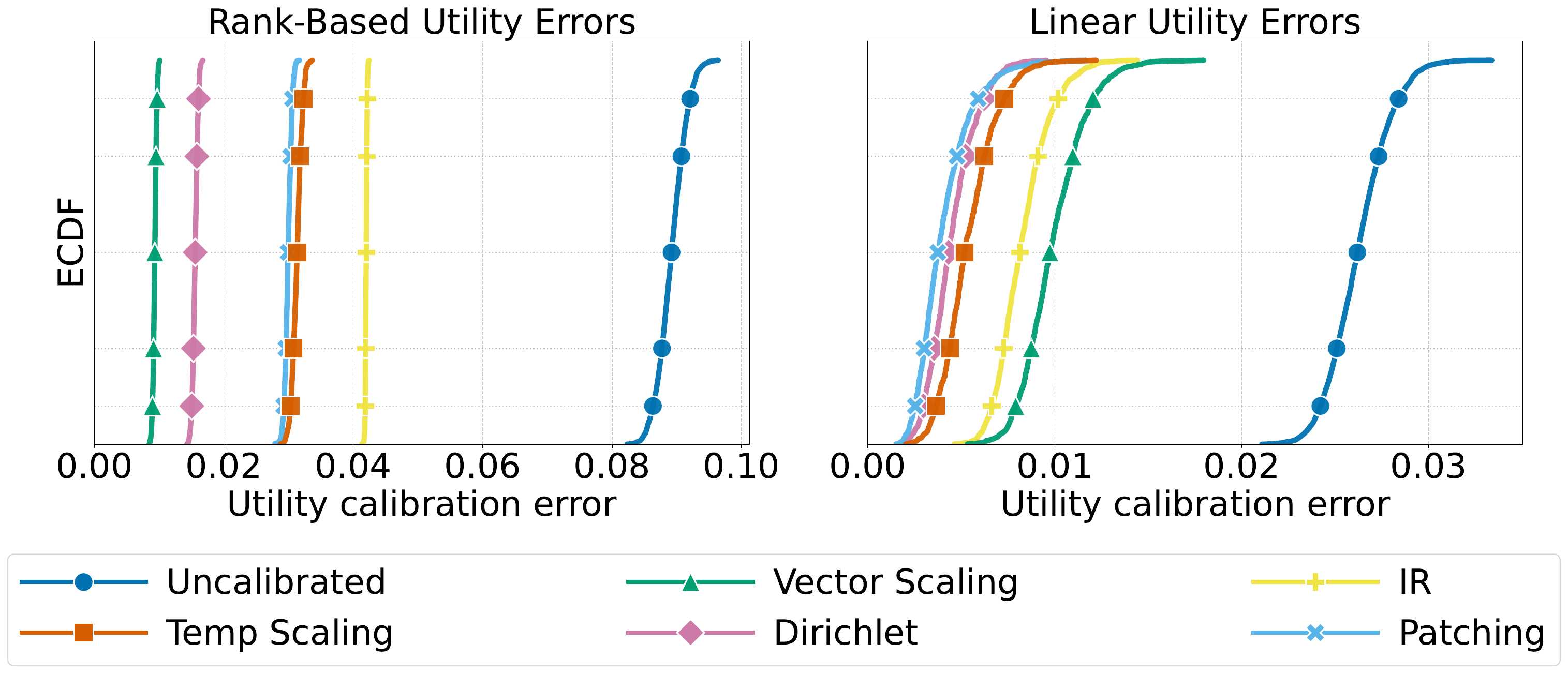}
    \caption{eCDF evaluation for \texttt{ViT} on ImageNet-1K on $\ucal_\rank$ and $\ucal_\lin$.}
    \label{fig:ecdf_linear_utilities} 
\end{figure}

We now demonstrate our evaluation on ImageNet-1K \citep{deng2009imagenet} with a pretrained Vision Transformer (\texttt{ViT}) \citep{dosovitskiy2021an}. We compare the calibration performance of Temperature Scaling (Temp. Scaling) \citep{platt1999probabilistic}, Vector Scaling \citep{kull2017beyond}, Dirichlet recalibration \citep{kull2019beyond}, and Isotonic Regression (I.R.) \citep{zadrozny2002transforming}. In addition, we use a simple auditing/patching-style post-hoc calibration algorithm inspired by previous work \citep{hebert2018multicalibration,jung2021moment}. In summary, this patching algorithm iteratively evaluates the utility calibration on the class $\mathcal{U}_{\mathrm{comb}}$, which combines class-wise and top-$K$ utilities, i.e., $\ucal_{\mathrm{comb}} \coloneqq \ucal_{\cwe}\cup\ucal_{\topk}$. Then, the algorithm \textit{patches} the pretrained predictor $f$ to improve the utility calibration error against the class $\ucal_{\mathrm{comb}}$. We provide a detailed description of the algorithm and our implementation in Appendices~\ref{app:post_hoc}\&\ref{app:experiments}.

Moreover, Appendix~\ref{app:experiments} provides extended results: additional architectures, datasets, and modalities, sensitivity to aligned vs. misaligned utilities, and additional commonly used utility families such as discounted cumulative gain \citep{jarvelin2002cumulated} and hierarchical classification loss \citep{deng2012hedging}. These experiments further illustrate the advantages of our framework and the failure modes of traditional metrics.

{In \Cref{tab:imagenet_results_specific_uc}}, we compare Brier score, binned binarized metrics $\mathrm{TCE}_{\mathrm{binned}}$ and $\cwe_{\mathrm{binned}}$ (with $15$ equal-weight bins), and a combined utility calibration metric w.r.t. the class $\mathcal{U}_{\mathrm{comb}}$. We note that, given that both $\ucal_{\cwe}$ and $\ucal_{\topk}$ are finite classes, $\mathcal{U}_{\mathrm{comb}}$ can be directly measured with no sampling required. As expected, all post-hoc methods improve Brier and calibration errors. Nonetheless, no single method performed the best uniformly across all metrics. We note that the patching-style post-hoc calibration achieves the best top-class binned calibration error and $\mathcal{U}_{\mathrm{comb}}$.

{Beyond specific utility functions, \Cref{fig:ecdf_linear_utilities}} displays eCDFs of utility calibration errors for broader utility classes: rank-based ($\ucal_{\mathrm{rank}}$) and linear ($\ucal_{\mathrm{lin}}$). For utility sampling, we used the approach described in \Cref{sec:measure_uc}. Each eCDF (from $M{=}1500$ sampled utilities) shows the proportion of utilities with error below a threshold; curves further left indicate better calibration across the class. On \texttt{ViT}-ImageNet-1K, the uncalibrated model is consistently worst across both classes, while post-hoc methods shift the curves left to varying degrees, revealing method-dependent tradeoffs. Interestingly, the eCDF curves surface trends that the aggregate scores in \Cref{tab:imagenet_results_specific_uc} can obscure: for example, how tightly errors concentrate around the median, and whether heavy tails persist, even when summary metrics look similar. In addition, performance is not uniform across utility families: some methods yield larger gains on $\ucal_{\lin}$ while others help more on $\ucal_{\rank}$, underscoring the importance of matching the calibrator to the target utility class. For example, while Vector Scaling performed the best on $\ucal_{\rank}$, it performed the worst on $\ucal_\lin$. 

In conclusion, utility calibration provides a robust, unified, and application-centric framework for evaluating classifier reliability. Its specific instantiations, $\ucal_{\cwe}$ and $\ucal_{\tce}$, offer superior, binning-free alternatives to traditional metrics with actionable guarantees. Furthermore, the eCDF plots across broader utility classes deliver crucial nuanced insights into model behavior that single-metric evaluations obscure.

\section*{Acknowledgments} 
The work of Aymeric Dieuleveut and Mahmoud Hegazy is supported by French State aid managed by the Agence Nationale de la Recherche (ANR) under the France 2030 program with the reference ANR-23-PEIA-005 (REDEEM project), and ANR-23-IACL-0005, in particular the Hi!Paris FLAG chair. Additionally, this project was funded by the European Union (ERC-2022-SYG-OCEAN-101071601). Views and opinions expressed are, however, those of the author(s) only and do not necessarily reflect those of the European Union or the European Research Council Executive Agency. Neither the European Union nor the granting authority can be held responsible for them. This publication is part of the Chair “Markets and Learning,” supported by Air Liquide, BNP PARIBAS ASSET MANAGEMENT Europe, EDF, Orange and SNCF, sponsors of the Inria Foundation.

\bibliographystyle{unsrtnat}
\bibliography{ref}
\clearpage
\appendix
\thispagestyle{empty}

\clearpage

\appendix
\thispagestyle{empty}

\startcontents[app]
\onecolumn

\aistatstitle{Scalable Utility-Aware Multiclass Calibration: \\
Supplementary Materials}

\section*{Organization of the Appendix} 

The appendix is organized as follows: \Cref{app:post_hoc} presents the post-hoc patching algorithm for utility calibration; \Cref{app:deferred_content} collects deferred material, including CutOff calibration (\Cref{app:cutoff_calibration}), bounds for binned estimators (\Cref{app:uc_bound_binarized}), computational aspects of proactive measurability (\Cref{app:proactive_cal_hard}), sample complexity bounds for interactive measurability (\Cref{app:interactive_measure}), and eCDF guarantees (\Cref{app:quantile_gaurantees}). Then, \Cref{app:proof} presents proofs of statements that were explicitly stated in the main body. Finally, \Cref{app:experiments} provides additional experiments.

\begingroup
\setcounter{tocdepth}{2}
\printcontents[app]{l}{1}{\section*{Appendix Contents}}
\endgroup
\newpage

\section{Post-Hoc Patching Algorithm for Utility Calibration} \label{app:post_hoc_weighted}
\label{app:post_hoc}

This section details a post-hoc calibration algorithm aimed at reducing the utility calibration error $\uc(f, \ucal)$ with respect to a specific utility function class $\ucal$. As established in the main text (\cref{eq:UCasWCE}), utility calibration can be cast as a form of weighted calibration. Recall that for a single $u$,
\begin{align*}
    \uc\open{f,u} =  \sup_{w\in\wcal(u)} \mathbb{E}\left[\inner{f(X)-Y}{w(f(X))}\right],
\end{align*}
where $\wcal(u) = \ens{x\rightarrow \xi\vec{u}(X)\one\ens{v_u(X)\in I}|I\in \cinv{-1}{1}, \xi\in \ens{-1,1}}$. This naturally extends to a utility function class $\ucal$ by defining $\wcal(\ucal) = \cup_{u\in \ucal}\wcal\open{u}$. Then it holds that $\uc\open{f,\ucal} = \ce_{\wcal\open{\ucal}}\open{f}$.

Given that utility calibration can be cast as weighted calibration, algorithms from the literature can be used to post-hoc calibrate $f$ such that it has a small utility calibration error.  In particular, \Cref{alg:patching_uc_weighted_cal} iteratively finds the \textit{worst} witness $w \in \wcal\open{\ucal}$ and adjusts the predictor $f$ to correct for this specific violation. This approach is common in (multi)calibration literature, for example \citep{hebert2018multicalibration,gopalan2022low, jung2021moment}, a more general version is presented in \citep[Chapter 15]{duchi2024lecturenotes}. A key property is that these adjustments can be made such that the model's Brier score decreases in every iteration. 

\begin{algorithm}[H]
\caption{Iterative Patching for Utility Calibration}
\label{alg:patching_uc_weighted_cal}
\begin{algorithmic}[1]
    \STATE \textbf{Input:} Initial predictor $f^{(0)}: \xcal \to \Delta^{C-1}$, witness class $\wcal\open{\ucal}$, target tolerance $\eps > 0$. \textbf{Set} $t\leftarrow0$
    \LOOP 
        \STATE Find $w_t \in \argmax_{w\in\wcal\open{\ucal}}\mathbb{E}\left[ \inner{f^{(t)}(X)-Y}{w(f^{(t)}(X))} \right]$.
        \STATE Let $\text{err}_t \leftarrow \mathbb{E}\left[ \inner{f^{(t)}(X)-Y}{w_t(f^{(t)}(X))} \right]$.
        \IF{$\text{err}_t \le \eps$}
            \STATE \textbf{break}
        \ENDIF
        \STATE Choose stepsize $\eta_t$.
        \STATE Update predictor: ${f}^{(t+1)}(X) \leftarrow \pi_{\Delta^{C-1}} \left( {f}^{(t)}(X) - \eta_t w_t(f^{(t)}(X)) \right)$.
        \STATE $t \leftarrow t + 1$.
    \ENDLOOP
    \STATE \textbf{Return} $f^{(t)}$.
\end{algorithmic}
\end{algorithm}
Here $\pi_{\tri^{C-1}}$ is the projection onto the simplex and $\eta_t$ is a possibly adaptive stepsize.
\begin{proposition}[Convergence and Brier Score Guarantee] \label{prop:convergence_patching_weighted_revised}
Assume oracle access to compute $\mathrm{err}_t$ and the corresponding witness $w_t \in \wcal\open{\ucal}$ at each iteration $t$. Let $C$ be the number of classes. With the stepsize $\eta_t = \mathrm{err}_t / C$, Algorithm \ref{alg:patching_uc_weighted_cal} terminates in $T = O(C/\eps^2)$ iterations 
\end{proposition}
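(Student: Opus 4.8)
The plan is to use a potential-function argument with the Brier score as the potential, a standard technique in the (multi)calibration literature. Set $\Phi^{(t)} \coloneqq \mathbb{E}[\|f^{(t)}(X) - Y\|_2^2]$, the expected squared error (Brier score) of the $t$-th iterate. The goal is twofold: (i) show that each update strictly decreases $\Phi^{(t)}$ by an amount proportional to $\mathrm{err}_t^2$, and (ii) deduce from the boundedness of $\Phi^{(t)} \in [0, 2]$ (since both $f^{(t)}(X)$ and $Y$ lie in $\Delta^{C-1}$, so $\|f^{(t)}(X)-Y\|_2^2 \le 2$) that the number of iterations with $\mathrm{err}_t > \eps$ is at most $O(C/\eps^2)$.

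\textbf{Key steps.} First I would expand the squared distance after the update, before projection. Write $\tilde f(X) = f^{(t)}(X) - \eta_t w_t(f^{(t)}(X))$. Then
\[
\|\tilde f(X) - Y\|_2^2 = \|f^{(t)}(X) - Y\|_2^2 - 2\eta_t \inner{f^{(t)}(X) - Y}{w_t(f^{(t)}(X))} + \eta_t^2 \|w_t(f^{(t)}(X))\|_2^2.
\]
Taking expectations, the middle term is exactly $-2\eta_t\,\mathrm{err}_t$ by the definition of $\mathrm{err}_t$ in line 4 of the algorithm. For the last term, since $w_t(\cdot) \in \wcal(\ucal)$ is of the form $x \mapsto \xi\, \vec u(x)\,\one\{v_u(x)\in I\}$ with $\vec u(x)\in[-1,1]^C$, we have $\|w_t(f^{(t)}(X))\|_2^2 \le C$ pointwise, hence $\mathbb{E}\|w_t(f^{(t)}(X))\|_2^2 \le C$. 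Second, I would invoke the nonexpansiveness of the Euclidean projection $\pi_{\Delta^{C-1}}$: since $Y \in \Delta^{C-1}$, $\|\pi_{\Delta^{C-1}}(\tilde f(X)) - Y\|_2 \le \|\tilde f(X) - Y\|_2$, so the projection only helps. Combining, with the prescribed stepsize $\eta_t = \mathrm{err}_t/C$,
\[
\Phi^{(t+1)} \le \Phi^{(t)} - 2\eta_t\,\mathrm{err}_t + \eta_t^2 C = \Phi^{(t)} - \frac{2\,\mathrm{err}_t^2}{C} + \frac{\mathrm{err}_t^2}{C} = \Phi^{(t)} - \frac{\mathrm{err}_t^2}{C}.
\]
Third, I would telescope: summing over the iterations before termination and using $0 \le \Phi^{(T)} \le \Phi^{(0)} \le 2$ gives $\sum_{t=0}^{T-1} \mathrm{err}_t^2 / C \le 2$. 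Since the loop continues only while $\mathrm{err}_t > \eps$, each term in the sum exceeds $\eps^2/C$, so $T \cdot \eps^2 / C \le 2$, i.e. $T \le 2C/\eps^2 = O(C/\eps^2)$. The Brier score guarantee is then immediate: $\Phi^{(t)}$ is nonincreasing in $t$, so the returned predictor has Brier score no larger than that of the initial predictor $f^{(0)}$ (in fact strictly smaller whenever any $\mathrm{err}_t > 0$).

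\textbf{Main obstacle.} The argument is essentially routine once the setup is right; the one point that needs care is the bound $\|w_t(x)\|_2^2 \le C$ and checking that it feeds through to give exactly the $O(C/\eps^2)$ rate with the stated stepsize rather than, say, $O(C^2/\eps^2)$ — this hinges on the witnesses being $[-1,1]^C$-valued (so the per-coordinate squared magnitude is at most $1$ and the sum is at most $C$), which is exactly the normalization built into $\wcal(u)$. A secondary subtlety is ensuring the projection step genuinely does not increase the potential; this is a one-line consequence of the Hilbert-space projection theorem since the target $Y$ is itself in the simplex, but it should be stated explicitly. One might also note that the stepsize $\eta_t = \mathrm{err}_t/C$ is the minimizer of the quadratic upper bound $\eta \mapsto -2\eta\,\mathrm{err}_t + \eta^2 C$, which motivates the choice and yields the sharpest constant.
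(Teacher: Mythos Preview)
Your proposal is correct and follows essentially the same potential-function argument as the paper's proof: Brier score as the potential, expand the squared distance, bound $\|w_t\|_2^2\le C$, substitute $\eta_t=\mathrm{err}_t/C$, and telescope using $\Phi\in[0,2]$. You are in fact slightly more explicit than the paper in justifying the projection step via nonexpansiveness with $Y\in\Delta^{C-1}$, which the paper leaves implicit in its first inequality.
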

\begin{proof}
We include the proof for completeness. It follows the approach of \cite{hebert2018multicalibration} in using the Brier score as a potential function and showing that it monotonically decreases across iterations. It is the same as the version in \citet{gopalan2024computationally}. A more general version can be found in \citet[Chapter 15]{duchi2024lecturenotes}.
The change in Brier score $L(f) = \mathbb{E}[\|Y-f(X)\|_2^2]$ from $f^{(t)}$ to $f^{(t+1)}$ is:
\begin{align*}
L(f^{(t+1)}) - L(f^{(t)}) &\le \mathbb{E}\left[ \|Y - (f^{(t)}(X) - \eta_t w_t(f^{(t)}(X)))\|_2^2 - \|Y - f^{(t)}(X)\|_2^2 \right] \\
&= \mathbb{E}\left[ 2\eta_t \inner{Y - f^{(t)}(X)}{w_t(f^{(t)}(X))} + \eta_t^2 \|w_t(f^{(t)}(X))\|_2^2 \right] \\
&= \mathbb{E}\left[ -2\eta_t \inner{f^{(t)}(X)-Y}{w_t(f^{(t)}(X))} + \eta_t^2 \|w_t(f^{(t)}(X))\|_2^2 \right] \\
&= -2\eta_t \text{err}_t + \eta_t^2 \mathbb{E}[\|w_t(f^{(t)}(X))\|_2^2].
\end{align*}
Each witness $w \in \wcal(\ucal)$ is of the form $p \mapsto \xi \vec{u}(p)\one\{v_u(p) \in I\}$ for $\xi\in\ens{-1,1}$ and $u\in \ucal$. Since $u(\cdot, e_j) \in [-1,1]$, $\|\vec{u}(p)\|_2^2 \leq \norm{\vec{u}(p)}_\infty\norm{\vec{u}(p)}_1\leq C$. Thus, $\|w_t(p)\|_2^2 \le C$. Substituting $\eta_t = \text{err}_t / C$:
\begin{align*}
L(f^{(t+1)}) - L(f^{(t)}) &\le -2 \frac{\text{err}_t}{C} \text{err}_t + \left(\frac{\text{err}_t}{C}\right)^2 \mathbb{E}[\|w_t(f^{(t)}(X))\|_2^2]\\
&\le -\frac{2\text{err}_t^2}{C} + \frac{\text{err}_t^2 C}{C^2} = -\frac{\text{err}_t^2}{C}.
\end{align*}
This proves that the Brier score does not increase and strictly decreases if $\text{err}_t > 0$. If the algorithm continues, it is because $\text{err}_t > \eps$, so the decrease is at least $\eps^2/C$. Since $L(f) \in [0,2]$ (as $\|Y-f(X)\|_2^2 \le \|Y\|_2^2+\|f(X)\|_2^2 \le 1+1=2$), and at step $\text{err}_t > \eps$ decreases $L(f)$ by at least $\eps^2/C$, the algorithm must terminate in at most $O\open{C/\eps^2}$ such steps.
\end{proof}

\paragraph{Empirical Implementation and Sample Complexity.}
In practice, direct computation of expectations within \Cref{alg:patching_uc_weighted_cal} is infeasible. Instead, the algorithm is implemented using a dataset $S^t$ of $N$ i.i.d. samples at each iteration $t$. Both the maximization step to find the witness $w_t$ and the computation of the error $\text{err}_t$ are performed using empirical averages $\hat{\mathbb{E}}_N[\cdot]$ over $S^t$.

The theoretical convergence guarantees of \Cref{prop:convergence_patching_weighted_revised} (i.e., termination in $O(C/\eps^2)$ iterations) can be extended to this empirical setting, provided that the empirically estimated error $\widehat{\text{err}}_t \coloneqq \hat{\mathbb{E}}_N[\inner{f^{(t)}(X)-Y}{w_t(X)}]$ is a sufficiently accurate approximation of the true error $\text{err}_t$. Specifically, if $\widehat{\text{err}}_t$ is within $O(\eps)$ of $\text{err}_t$ whenever $\text{err}_t > \eps$, the iteration complexity remains $O(C/\eps^2)$.

The number of samples $N$ required per iteration to achieve an $O(\eps)$-accurate estimation of $\text{err}_t$, with probability $1-\delta$, depends on the complexity of the witness class $\wcal(\ucal)$. For the Top-Class utility $\ucal_{\tce}$, where $|\mathcal{U}|=1$, using \Cref{lem:estimate_uc_single}, $N\leq \tilde{O}\open{\frac{\log(1/\delta)}{\eps^2}}$. Similarly, Class-Wise utility $\ucal_{\cwe}$ and Top-K utility $\ucal_{\topk}$, where for both $|\mathcal{U}|=C$, using a simple union bound, we recover $N\leq \tilde{O}\open{\frac{\log(C/\delta)}{\eps^2}}$.

\section{Deferred Content}
\label{app:deferred_content}
\subsection{CutOff Calibration}\label{app:cutoff_calibration}

In \Cref{sec:utility_calibration}, we highlighted that our utility calibration framework, particularly its focus on worst-case interval-based deviations of predicted utility, can be seen as a natural extension of the binary CutOff calibration concept to multiclass scenarios and general utility functions. This extension preserves important decision-theoretic properties. For the binary setting ($Y \in \{0,1\}, f:\xcal\rightarrow[0,1]$), \citet{rossellini2025can} demonstrate that if the metric
\begin{equation*}
 \cutoff(f) \coloneqq \sup_{I \in \cinv{0}{1}} |\mathbb E[(Y-f(X))\one\ens{f(X) \in I}]|
\end{equation*}
is small, then a simple decision rule $\widehat{Y}_\tau:X\rightarrow\ens{0,1}$ of the form $\hat{Y}_\tau = \one\ens{f(X) \ge \tau}$ evaluated against its associated binary decision loss, cannot be substantially improved by monotonic post-hoc calibration. More concretely, let $R_\mathrm{bd}(g; \tau) := \mathbb E[\ell_\mathrm{bd}(Y, \hat{Y}_\tau; \tau)]$ be the risk under the binary decision loss $\ell_\mathrm{bd}(Y, \widehat{Y}; \tau) = \tau (1-Y)\widehat{Y} + (1-\tau) Y(1-\widehat{Y})$. Then, \citet[Prop. 3.2]{rossellini2025can} show that for any $\tau \in [0,1]$:
\begin{equation} \label{eq:cutoff_risk_gap_rossellini_formal}
\begin{aligned}
    R_\mathrm{bd}(f; \tau)
    - \inf_{\substack{h: [0,1] \rightarrow [0,1]\\ \textnormal{monotone}}}
      R_\mathrm{bd}(h\circ f; \tau)
    \;\leq\; 2 \, \cutoff(f).
\end{aligned}
\end{equation}
This guarantee implies that if $\cutoff(f)$ is small, the decision-maker, thresholding $f(X)$ to make a binary decision, gains little by applying any monotonic recalibration to $f(X)$. Similarly, they showed that in the binary case, CutOff calibration error can be used to bound distance from calibration. As such, \Cref{prop:uc_risk_gap_main,prop:uc_bounds_dcu_main} extend the results of \citet{rossellini2025can} to the multiclass setting. 

\subsection{Bounding Binned Estimators using Utility Calibration}\label{app:uc_bound_binarized}

To illustrate the relationship between binned estimations of calibration error and utility calibration, let $p_X \coloneqq f(X)_{\gamma(f(X))}$ and correctness indicator $Y_X \coloneqq \one\{Y=e_{\gamma(f(X))}\}$, definitions for $m$ bins $(B_j)_{j \in [m]}$ are:
\begin{align*}
\tce^{\mathrm{bin}}(f) &= \sum_{j=1}^m \left| \mathbb{E}\left[\left( p_X - Y_X \right) \one\{p_X \in B_j\}\right] \right|, \\
\uc(f, \ucal_{\tce}) &= \sup_{I \in \cinv{0}{1}} \left| \mathbb{E}\left[\left( Y_X - p_X \right) \one\{p_X \in I\}\right] \right|.
\end{align*}
Each binned term $\left| \expec\left[(p_X - Y_X) \one\{p_X \in B_j\}\right] \right| \le \uc(f, \ucal_{\tce})$ (by setting $I=B_j$), thus $\tce^{\mathrm{bin}}(f) \le m  \uc(f, \ucal_{\tce})$.  Conversely, small binned errors do not imply small utility calibration, as binned errors can cancel within bins, while utility calibration is the supremum over intervals.

For instance, let $p_X$ be $0.45$ or $0.55$ (each with probability $0.5$), with $\expec[Y_X | p_X=0.45] = 0.05$ and $\expec[Y_X | p_X=0.55] = 0.95$. If $\tce^{\mathrm{bin}}(f)$ uses bins $B_1=[0,1/3), B_2 = [1/3, 2/3), B_3=[2/3,1]$, the expected error is $0$. Thus, $\tce^{\mathrm{bin}}(f) = 0$.

However, the inverse is not true. For example, for $\uc(f, \ucal_{\tce})$, consider the interval $I_1 = [0.45, 0.46]$. The term $\left| \expec\left[(Y_X - p_X) \one\{p_X \in I_1\}\right] \right|$ becomes $\left| P(p_X=0.45)  (\expec[Y_X|p_X=0.45] - 0.45) \right| = 0.2$. Since $\uc(f, \ucal_{\tce})$ is the supremum over such intervals, $\uc(f, \ucal_{\tce}) \ge 0.2$. The binned estimator indicates perfect calibration, while $\uc(f, \ucal_{\tce})$ does not.

\subsection{Hardness of Proactive Measurability}\label{app:proactive_cal_hard}

Proactive measurability for a utility class $\ucal$, as defined in \Cref{def:proactive}, necessitates an algorithm to efficiently find $\hat{u} \in \ucal$, whose utility calibration error $\uc(f, \hat{u})$ approximates $\sup_{u \in \ucal} \uc(f, u)$. This is equivalent to efficiently finding an approximate worst-case function from the witness class $\wcal\open{\ucal} = \bigcup_{u \in \ucal} \{ X \mapsto \xi \vec{u}(X)\one\ens{v_u(X)\in I} \mid I \in \cinv{-1}{1}, \xi \in \{-1,1\} \}$, given that the worst-case interval for a fixed $u$ is efficiently findable (\Cref{lem:estimate_uc_single}).
The work of \citet{gopalan2024computationally} establishes computational hardness for ``auditing with a witness" for related, expressive classes of witness functions.

\begin{definition}[Auditing with a witness \citep{gopalan2024computationally}]
\label{def:audit-guarantee-gopalan}
    An $(\alpha, \beta)$ auditor for a witness class $\mathcal{W}_{\text{target}}$ is an algorithm that, when given access to a distribution $\mathcal{D}$ where $\ce_{\mathcal{W}_{\text{target}}}(\mathcal{D}) > \alpha$, returns any function $w':\tri^{C-1}\to [-1,1]^C$ such that
   $\expec_{(\xcal,\ycal)\sim \mathcal{D}} [\inner{Y - f(X)}{w'(f(X))}] \geq \beta.$
\end{definition}
First, auditing with a witness is an \textit{easier task than proactive measurability}, as it allows returning any function $w'$, not necessarily from the original witness class. Thus, if auditing is hard for a class $\mathcal{W}_{\text{target}}$, and if our class $\wcal(\ucal)$ is at least as expressive as $\mathcal{W}_{\text{target}}$, then proactive measurability for $\ucal$ is also computationally hard.

\citet{gopalan2024computationally} demonstrate hardness for two key notions. 
\begin{enumerate}[leftmargin=*]
    \item 
First, for decision calibration, their witness class $\mathcal{W}_{\text{dec}}$ involves partitioning $\tri^{C-1}$ using hyperplanes, i.e.  $\mathcal{W}_{\text{dec}}\coloneqq\ens{x\rightarrow g'\one\ens{a^Tx\geq b}+g\one\ens{a^Tx<b}|g',g,a\in \rset^C, b\in \rset, \text{ s.t. } \norm{g'}_2,\norm{g}_2\leq 1}$. Auditing for $\mathcal{W}_{\text{dec}}$ is shown to be computationally hard under standard assumptions (\citet[Thm. 5.1]{gopalan2024computationally}), i.e.~cannot be performed in polynomial time for non-trivial $\alpha$ under standard computational complexity assumptions. Up to a scaling, this is a slightly more general class than~$\ucal_\lin$. 

\item Second, \citet{gopalan2024computationally} introduced projected smooth calibration, which is very similar to our notion of utility calibration for $\ucal_\lin$, but replaced the hard interval indicator with Lipschitz functions. Again, auditing for this notion was also proven computationally hard, in the sense that no auditing algorithm can be polynomial in ${1}/{\alpha}$ \citep[Thm. 8.1]{gopalan2024computationally}.
\end{enumerate}

\subsection{Interactive Measurability}\label{app:interactive_measure}

In this section, we establish conditions under which the utility calibration error $\uc(f, \ucal)$ is interactively measurable, completing \Cref{sec:measure_uc}.  This involves bounding the sample complexity required for the empirical estimates $\widehat{\uc}(f,u; S)$ to uniformly converge to their true values $\uc(f,u)$ over all $u \in \ucal$. Throughout this section, we assume that $\hat{\expec}_n$ denotes the empirical expectation over $n$ i.i.d. samples.

First, we start by bounding the Rademacher complexity of the class of functions 
\begin{equation} \label{eq:defGu}
\mathcal{G}_{\ucal}
\coloneqq \Big\{ (X,Y) \mapsto \inner{Y-f(X)}{\vec{u}(X)\one\ens{v_u(X)\in I}} 
\;\Big|\; u \in \ucal,\ I \in \cinv{-1}{1} \Big\}
\end{equation}
with respect to its coordinate-wise components $\pcal_1,\ldots, \pcal_C$, where 
\begin{equation}\label{eq:defPj}
    \mathcal{P}_j(\ucal) \coloneqq \left\{ X \mapsto u(f(X),e_j)\one\ens{v_u(X)\in I} \;\middle|\; u \in \ucal, I \in \cinv{-1}{1} \right\}.
\end{equation}

This generic bound may be transformed into a sample complexity bound for interactive measurability by \Cref{cor:interactive_from_rad}. As an example, we apply it to the class of linear utility functions $\ucal_\lin$. Nonetheless, in \Cref{app:lin-rank-tight}, we derive tighter bounds for $\ucal_\lin$ and $\ucal_\rank$ by directly bounding their pseudo-dimension. Before proceeding, we first recall standard definitions and results.

\begin{definition}\citep[Rademacher Complexity]{bartlett2002rademacher}  Let $\fcal$ be a class of real-valued functions $h: \mathcal{Z} \to \rset$. Given $n$ samples $S_Z = (Z_1, \dots, Z_n)$ where $Z_i \sim D_Z$, the empirical Rademacher complexity of $\fcal$ given $S_Z$ is
    \begin{equation*}
        \hat{\mathfrak{R}}_n(\fcal|S_Z) = \mathbb{E}_{\bm{\sigma}} \left[\sup_{h \in \fcal} \frac{1}{n} \sum_{i=1}^n \sigma_i h(Z_i)\right],
    \end{equation*}
    where $\sigma_i$ are i.i.d. Rademacher random variables. In addition, the expected Rademacher complexity is defined as 
    $$\mathfrak{R}_n(\fcal) = \mathbb{E}_{S_Z} [\hat{\mathfrak{R}}_n(\fcal|S_Z)].$$
\end{definition}

\begin{definition}\label{def:VC}\citep[VC dimension]{vapnik1971uniform} 
Let $\Hcal$ be a class of binary-valued functions on a domain $\mathcal{Z}$. We say that $\Hcal$ shatters a set $\{z_1,\ldots,z_m\}\subseteq\mathcal{Z}$ if for every labeling $b\in\{0,1\}^m$ there exists $h\in\Hcal$ such that $h(z_i)=b_i$ for all $i$. The VC dimension $\VC(\Hcal)$ is the largest $m$ for which some set of size $m$ is shattered (or $\infty$ if no such largest $m$ exists).
\end{definition}

Finally, we list a common result combining Massart's lemma, which bounds the Rademacher complexity using the growth function \citep[Theorem 3.7]{mohri2018foundations} and Sauer's lemma, which bounds the growth function \citep[Theorem 3.17]{mohri2018foundations}.

\begin{result}\label{res:sauer_massart}Consider a boolean class of functions $\bcal$ such that $d=VC(\bcal)$. Let $d>n$, then it holds that 
\begin{equation*}
   \rad_n(\bcal)\leq \sqrt{\frac{2d\log(en/d)}{n}}.
\end{equation*}
\end{result}

\begin{thm}[Rademacher Complexity Bound for $\mathcal{G}_{\ucal}$]\label{thm:rad_bound_g_ucal_revised}
For a utility class $\ucal$,
the Rademacher complexity of $\mathcal{G}_{\ucal}$ is bounded as:
\begin{equation*}
    \mathfrak{R}_n(\mathcal{G}_{\ucal}) \le 2 \sum_{j=1}^C \mathfrak{R}_n(\mathcal{P}_j(\ucal)).
\end{equation*}
\end{thm}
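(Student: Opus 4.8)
The plan is to work with the empirical Rademacher complexity for a fixed sample $S=((X_1,Y_1),\dots,(X_n,Y_n))$ and take expectations at the end. First I would expand a generic element of $\mathcal{G}_{\ucal}$: writing $\phi_{u,I,j}(x)\coloneqq u(f(x),e_j)\one\{v_u(x)\in I\}\in\mathcal{P}_j(\ucal)$, one has $\inner{Y-f(X)}{\vec{u}(X)\one\{v_u(X)\in I\}}=\sum_{j=1}^C Y_j\,\phi_{u,I,j}(X)-\sum_{j=1}^C f(X)_j\,\phi_{u,I,j}(X)$. Hence $\mathcal{G}_{\ucal}$ is contained in the Minkowski sum $\mathcal{A}+(-\mathcal{B})$ where $\mathcal{A}\coloneqq\{(X,Y)\mapsto\sum_j Y_j\phi_{u,I,j}(X)\mid u\in\ucal,\ I\in\cinv{-1}{1}\}$ and $\mathcal{B}\coloneqq\{(X,Y)\mapsto\sum_j f(X)_j\phi_{u,I,j}(X)\mid u\in\ucal,\ I\in\cinv{-1}{1}\}$. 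By monotonicity and subadditivity of Rademacher complexity together with $\mathfrak{R}_n(-\mathcal{B})=\mathfrak{R}_n(\mathcal{B})$, this gives $\mathfrak{R}_n(\mathcal{G}_{\ucal})\le\mathfrak{R}_n(\mathcal{A})+\mathfrak{R}_n(\mathcal{B})$; the split $Y-f(X)=Y+(-f(X))$ into two nonnegative-coefficient pieces is the source of the factor $2$.

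Next I would bound each of $\mathfrak{R}_n(\mathcal{A})$ and $\mathfrak{R}_n(\mathcal{B})$ by $\sum_{j=1}^C\mathfrak{R}_n(\mathcal{P}_j(\ucal))$. Fixing $S$ and writing out, say, $\hat{\mathfrak{R}}_n(\mathcal{B}\mid S)=\E_\sigma\sup_{u,I}\frac1n\sum_i\sigma_i\sum_j f(X_i)_j\phi_{u,I,j}(X_i)$, I pull the sum over $j$ outside the supremum using $\sup\sum_j(\cdot)\le\sum_j\sup(\cdot)$ — this deliberately permits a different $u$ per coordinate, so the joint dependence of all $C$ coordinates on a single $u$ is harmlessly discarded. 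It then suffices, for each fixed $j$, to strip the multipliers $c_i\coloneqq f(X_i)_j\in[0,1]$ from $\E_\sigma\sup_{\phi\in\mathcal{P}_j}\frac1n\sum_i\sigma_i c_i\phi(X_i)$; since $s\mapsto c_i s$ is a contraction vanishing at $0$, a per-coordinate contraction argument yields $\E_\sigma\sup_{\phi\in\mathcal{P}_j}\frac1n\sum_i\sigma_i c_i\phi(X_i)\le\E_\sigma\sup_{\phi\in\mathcal{P}_j}\frac1n\sum_i\sigma_i\phi(X_i)=\hat{\mathfrak{R}}_n(\mathcal{P}_j(\ucal)\mid S_X)$, so $\hat{\mathfrak{R}}_n(\mathcal{B}\mid S)\le\sum_j\hat{\mathfrak{R}}_n(\mathcal{P}_j(\ucal)\mid S_X)$. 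The term $\mathcal{A}$ is handled identically, using that $Y$ is one-hot so $Y_{i,j}\in\{0,1\}\subseteq[0,1]$; equivalently, for fixed $j$ the inner sum reduces to a partial sum over $\{i:Y_i=e_j\}$, which is dominated by the full sum after averaging in a fresh independent Rademacher sequence on the remaining indices (each added term has mean $0$). Taking $\E_S$ and adding the two contributions gives $\mathfrak{R}_n(\mathcal{G}_{\ucal})\le 2\sum_{j=1}^C\mathfrak{R}_n(\mathcal{P}_j(\ucal))$.

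The main obstacle is the coefficient-stripping step: the single-function contraction lemma does not accommodate multipliers $c_i$ that vary with $i$, so one must invoke the per-coordinate (vector-type) contraction comparison inequality of Ledoux--Talagrand, or give a short self-contained argument that exploits $0\le c_i\le 1$ (for $\mathcal{A}$, reduction to a partial sum plus re-symmetrization; for $\mathcal{B}$, expressing $c_i$ as a mixture of $0$ and $1$ and using convexity of the supremum). I note in passing that applying the per-coordinate contraction directly to the coefficients $Y_{i,j}-f(X_i)_j\in[-1,1]$, without the $\mathcal{A}/\mathcal{B}$ split, would in fact yield the sharper constant $1$; the split form with constant $2$ is simpler to justify and amply sufficient for the interactive-measurability bounds. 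The remaining manipulations — $\sup\sum\le\sum\sup$, Minkowski-sum subadditivity, and passing from empirical to expected Rademacher complexity via $\mathfrak{R}_n(\cdot)=\E_S[\hat{\mathfrak{R}}_n(\cdot\mid S)]$ — are routine and require only bookkeeping with the normalization.
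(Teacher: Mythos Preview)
Your proof is correct but takes a genuinely different route from the paper. The paper treats $g_{u,I}(X,Y)=\inner{Y-f(X)}{w(X)}$ as a Lipschitz composition and applies Maurer's vector-valued contraction inequality: since $\|Y-f(X)\|_2\le\sqrt{2}$, the map $z\mapsto\inner{Y-f(X)}{z}$ is $\sqrt{2}$-Lipschitz, and Maurer's result contributes a second $\sqrt{2}$, giving the factor $2$; the resulting vector Rademacher complexity is then decomposed coordinatewise by $\sup\sum\le\sum\sup$. You instead split $Y-f(X)$ into two nonnegative pieces, use Minkowski-sum subadditivity for the factor $2$, and strip the $[0,1]$-valued multipliers $Y_{i,j}$ and $f(X_i)_j$ via scalar Ledoux--Talagrand contraction applied per sample. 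Your argument is more elementary in that it avoids the vector-contraction machinery; and as you correctly observe, bypassing the $\mathcal{A}/\mathcal{B}$ split and applying per-coordinate contraction directly with $|Y_{i,j}-f(X_i)_j|\le 1$ would in fact yield constant $1$ rather than $2$. The paper's approach is more modular (a single citation handles the main step) but leans on a heavier tool.
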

\begin{proof}
The class $\mathcal{G}_{\ucal}$ consists of functions $g_{u,I}(X,Y) = \inner{Y-f(X)}{\vec{u}(X)\one\ens{v_u(X)\in I}}$.
We consider the empirical Rademacher complexity $\hat{\mathfrak{R}}_n(\mathcal{G}_{\ucal}|S_{XY})$ for $n$ samples $S_{XY} = \{(X_k,Y_k)\}_{k=1}^n$, which is by definition:
\begin{equation*}
    \hat{\mathfrak{R}}_n(\mathcal{G}_{\ucal}|S_{XY}) = \mathbb{E}_{\bm{\sigma}'} \left[\sup_{\substack{u \in \ucal \\ I \in \cinv{-1}{1}}} \frac{1}{n} \sum_{k=1}^n \sigma'_k \inner{Y_k-f(X_k)}{\vec{u}(f(X_k))\one\ens{v_u(X_k)\in I}}\right],
\end{equation*}
where $\sigma'_k$ are i.i.d.~scalar Rademacher random variables.
For each $k\in [n]$, let $W_k = Y_k - f(X_k)$. The function $\phi_{W_k}: \mathbb{R}^C \to \mathbb{R}$ defined by $\phi_{W_k}(z) = \inner{W_k}{z}$ is $L$-Lipschitz and $\phi_{W_k}(\mathbf{0})=0$. Specifically, for the $\ell_2$-norm on $\mathbb{R}^C$, the Lipschitz constant $L_k = \|W_k\|_2 = \|Y_k - f(X_k)\|_2\le \sqrt{2}$. 
Let $\vec{\mathcal{W}}_{\ucal}$ be the class of vector-valued functions from $\mathcal{X}$ to $\mathbb{R}^C$:
\begin{equation} \label{eq:defWvecU}
\begin{aligned}
\vec{\mathcal{W}}_{\ucal}
\coloneqq \Big\{ X \mapsto \vec{u}(X)\one\ens{v_u(X)\in I} 
\;\Big|\; u \in \ucal,\ I \in \cinv{-1}{1} \Big\}.
\end{aligned}
\end{equation}
The functions $w \in \vec{\mathcal{W}}_{\ucal}$ map to $[-1,1]^C$.
Using \citet[Corollary 4]{maurer2016vector}, we have
\begin{equation*}
    \hat{\mathfrak{R}}_n(\mathcal{G}_{\ucal}|S_{XY}) \le \sqrt{2} L  \hat{\mathfrak{R}}_n^{\text{vec}}(\vec{\mathcal{W}}_{\ucal}|S_X),
\end{equation*}
where $\hat{\mathfrak{R}}_n^{\text{vec}}(\vec{\mathcal{W}}_{\ucal}|S_X)$ is the empirical vector Rademacher complexity of $\vec{\mathcal{W}}_{\ucal}$ given samples $S_X = \{X_k\}_{k=1}^n$. It is defined as:
\begin{equation} \label{eq:defRadvec}
\begin{aligned}
 \hat{\mathfrak{R}}_n^{\text{vec}}(\vec{\mathcal{W}}_{\ucal}|S_X)
 &\coloneqq \mathbb{E}_{\bm{\sigma}} \Bigg[\sup_{w \in \vec{\mathcal{W}}_{\ucal}} \frac{1}{n} \sum_{k=1}^n \inner{\bm{\sigma}_k}{w(X_k)} \Bigg].
\end{aligned}
\end{equation}
where $\bm{\sigma}_k \in \{-1,1\}^C$ are vectors whose components $\sigma_{kj}$ are i.i.d.~Rademacher random variables.
Substituting $L=\sqrt{2}$:
\begin{equation*}
    \hat{\mathfrak{R}}_n(\mathcal{G}_{\ucal}|S_{XY}) \le \sqrt{2}  \sqrt{2}  \hat{\mathfrak{R}}_n^{\text{vec}}(\vec{\mathcal{W}}_{\ucal}|S_X) = 2 \hat{\mathfrak{R}}_n^{\text{vec}}(\vec{\mathcal{W}}_{\ucal}|S_X).
\end{equation*}
The vector Rademacher complexity $\hat{\mathfrak{R}}_n^{\text{vec}}(\vec{\mathcal{W}}_{\ucal}|S_X)$ can be then bounded as:
\begin{align*}
    \hat{\mathfrak{R}}_n^{\text{vec}}(\vec{\mathcal{W}}_{\ucal}|S_X) &= \mathbb{E}_{\bm{\sigma}} \left[\sup_{\substack{u \in \ucal \\ I \in \cinv{-1}{1}}} \frac{1}{n} \sum_{k=1}^n \sum_{j=1}^C \sigma_{kj} u(f(X_k),e_j)\one\ens{v_u(X_k)\in I} \right]  \text{ \ \ by~ eq.\eqref{eq:defWvecU} and \eqref{eq:defRadvec}} \\
    &\le \mathbb{E}_{\bm{\sigma}} \left[\sum_{j=1}^C \sup_{\substack{u \in \ucal \\ I \in \cinv{-1}{1}}} \frac{1}{n} \sum_{k=1}^n \sigma_{kj} u(f(X_k),e_j)\one\ens{v_u(X_k)\in I} \right] \quad (\text{subadditivity of sup}) \\
    &= \sum_{j=1}^C \mathbb{E}_{\bm{\sigma}_{\cdot j}} \left[\sup_{\substack{u \in \ucal \\ I \in \cinv{-1}{1}}} \frac{1}{n} \sum_{k=1}^n \sigma_{kj} u(f(X_k),e_j)\one\ens{v_u(X_k)\in I} \right] \\
    &= \sum_{j=1}^C \hat{\mathfrak{R}}_n(\mathcal{P}_j(\ucal)|S_X).
\end{align*}
Here $\bm{\sigma}_{\cdot j}$ denotes the $j$-th column of the matrix $\bm{\sigma} = ((\sigma_{kj})_{k\in [n], j\in [C]})$.
Combining these, we get $\hat{\mathfrak{R}}_n(\mathcal{G}_{\ucal}|S_{XY}) \le 2 \sum_{j=1}^C \hat{\mathfrak{R}}_n(\mathcal{P}_j(\ucal)|S_X)$.
Taking expectation over $S_{XY}$ yields the theorem statement.
\end{proof}

\begin{corollary}[Interactive Measurability from Rademacher Bound]\label{cor:interactive_from_rad}
Let $\ucal$ be a class of utility functions and $\mathcal{G}_{\ucal}$ be the function class defined in \cref{eq:defGu}. With probability at least $1-\delta$ over the draw of~$S \sim D^n$:
\begin{equation*}
    \sup_{u \in \ucal} |\widehat{\uc}(f,u; S) - \uc(f, u)| \le 2 \mathfrak{R}_n(\mathcal{G}_{\ucal}) + 4\sqrt{\frac{\log(2/\delta)}{2n}}.
\end{equation*}
Combined with \Cref{thm:rad_bound_g_ucal_revised}, this implies:
\begin{equation*}
    \sup_{u \in \ucal} |\widehat{\uc}(f,u; S) - \uc(f, u)| \le 4 \sum_{j=1}^C \mathfrak{R}_n(\mathcal{P}_j(\ucal)) + 4\sqrt{\frac{\log(2/\delta)}{2n}}.
\end{equation*}
\end{corollary}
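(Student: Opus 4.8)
The plan is to show that $\sup_{u\in\ucal}|\widehat{\uc}(f,u;S)-\uc(f,u)|$ is dominated by a single uniform deviation $\sup_{g\in\mathcal{G}_\ucal}|\hat{\E}_n g-\E g|$ over the class $\mathcal{G}_\ucal$ from \eqref{eq:defGu}, and then to control that deviation by the textbook symmetrization-plus-bounded-differences recipe, invoking \Cref{thm:rad_bound_g_ucal_revised} only at the very end to convert $\mathfrak{R}_n(\mathcal{G}_\ucal)$ into the sum over coordinate classes.

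For the reduction step, recall from \eqref{eq:UCasWCE} that, writing $g_{u,I}(X,Y)\coloneqq\inner{Y-f(X)}{\vec u(X)\one\{v_u(X)\in I\}}$, one has $\uc(f,u)=\sup_{I\in\cinv{-1}{1}}|\E[g_{u,I}]|$ and, from the definition of $\widehat{\uc}(f,u;S)$ in \Cref{lem:estimate_uc_single}, $\widehat{\uc}(f,u;S)=\sup_{I\in\cinv{-1}{1}}|\hat{\E}_n[g_{u,I}]|$. I would then apply the elementary inequality $|\sup_a|x_a|-\sup_a|y_a||\le\sup_a|x_a-y_a|$ with $a=I$ to get, for every fixed $u$,
\[
|\widehat{\uc}(f,u;S)-\uc(f,u)|\;\le\;\sup_{I\in\cinv{-1}{1}}\bigl|\hat{\E}_n[g_{u,I}]-\E[g_{u,I}]\bigr|,
\]
and taking $\sup_{u\in\ucal}$ of both sides yields exactly $\sup_{u\in\ucal}|\widehat{\uc}(f,u;S)-\uc(f,u)|\le\sup_{g\in\mathcal{G}_\ucal}|\hat{\E}_n g-\E g|$ by the definition of $\mathcal{G}_\ucal$.

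It then remains to bound $\Phi(S)\coloneqq\sup_{g\in\mathcal{G}_\ucal}|\hat{\E}_n g-\E g|=\max\bigl(\sup_g(\hat{\E}_n g-\E g),\sup_g(\E g-\hat{\E}_n g)\bigr)$. Each $g\in\mathcal{G}_\ucal$ is bounded: by Hölder, $|g_{u,I}(X,Y)|\le\|Y-f(X)\|_1\,\|\vec u(X)\one\{v_u(X)\in I\}\|_\infty\le 2\cdot1=2$, so replacing one sample moves $\hat{\E}_n g$ by at most $4/n$ uniformly in $g$. Hence each of the two one-sided suprema has the bounded-differences property with constants $4/n$, and McDiarmid's inequality gives that it exceeds its expectation by at most $4\sqrt{\log(2/\delta)/(2n)}$ with probability at least $1-\delta/2$; a union bound over the two sides produces the factor $2$ inside the logarithm. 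Standard symmetrization bounds each expectation by $2\mathfrak{R}_n(\mathcal{G}_\ucal)$ (the Rademacher variables being symmetric, the sign in front of $g$ is immaterial, so the no-absolute-value Rademacher complexity of \Cref{thm:rad_bound_g_ucal_revised} suffices for both sides). Chaining these inequalities gives the first displayed bound, and substituting $\mathfrak{R}_n(\mathcal{G}_\ucal)\le 2\sum_{j=1}^C\mathfrak{R}_n(\mathcal{P}_j(\ucal))$ gives the second.

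I do not expect a serious obstacle. The two points that need care are: (i) verifying that the nested suprema — over the interval $I$ and over the utility $u$ — together with the absolute values collapse to \emph{precisely} $\sup_{g\in\mathcal{G}_\ucal}|\hat{\E}_n g-\E g|$ and nothing larger, which is exactly what $|\sup|x|-\sup|y||\le\sup|x-y|$ delivers; and (ii) tracking the uniform bound $|g|\le 2$ on $\mathcal{G}_\ucal$ so that McDiarmid yields the stated constant $4$ in the concentration term rather than a smaller one. Both are dispatched by the elementary estimates above.
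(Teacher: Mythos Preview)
Your proposal is correct and follows essentially the same approach as the paper: reduce $\sup_{u\in\ucal}|\widehat{\uc}(f,u;S)-\uc(f,u)|$ to $\sup_{g\in\mathcal{G}_\ucal}|\hat{\E}_n g-\E g|$ via the inequality $|\sup|x|-\sup|y||\le\sup|x-y|$, then bound the latter using symmetrization plus McDiarmid, and finally substitute \Cref{thm:rad_bound_g_ucal_revised}. The only cosmetic difference is that the paper invokes the packaged two-sided deviation bound \cite[Theorem 3.3]{mohri2018foundations} directly, whereas you unpack it explicitly (splitting into two one-sided suprema, verifying the bounded-differences constant $4/n$ from $|g|\le 2$, and union-bounding the two tails).
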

\begin{proof}
For $u\in\ucal$ and $I\in \cinv{-1}{1}$, define $g_{u,I}(X,Y)\coloneqq \inner{Y-f(X)}{\vec{u}(X)\one\ens{v_u(X)\in I}}$. With  $\widehat{\uc}(f,u;S)$ defined in \Cref{lem:estimate_uc_single}, it holds by definition that
\begin{align*}
\sup_{u \in \ucal} |\widehat{\uc}(f,u;S) - \uc(f,u)| &= \sup_{u \in \ucal} \left| \sup_{I \in \cinv{-1}{1}} |\hat{\mathbb{E}}_n[g_{u,I}(X,Y)]| - \sup_{I \in \cinv{-1}{1}} |\mathbb{E}[g_{u,I}(X,Y)]| \right| \\
&\le \sup_{u \in \ucal} \sup_{I \in \cinv{-1}{1}} \left| |\hat{\mathbb{E}}_n[g_{u,I}(X,Y)]| - |\mathbb{E}[g_{u,I}(X,Y)]| \right| \\
&\le \sup_{u \in \ucal} \sup_{I \in \cinv{-1}{1}} \left| \hat{\mathbb{E}}_n[g_{u,I}(X,Y)] - \mathbb{E}[g_{u,I}(X,Y)] \right| \\
&= \sup_{g \in \mathcal{G}_{\ucal}} \left| \hat{\mathbb{E}}_n[g(X,Y)] - \mathbb{E}[g(X,Y)] \right|.
\end{align*}
The first inequality is by {$\sup_x f(x)- \sup_x g(x) \le \sup_x |(f-g)(x)|$}. 
Then using \cite[Theorem 3.3]{mohri2018foundations} (standard symmetrization argument using Rademacher complexity and application of McDiarmid inequality), with probability at least $1-\delta$, it holds that 

\begin{equation*}
\begin{aligned}
    \sup_{g \in \mathcal{G}_{\ucal}} \Big| \hat{\mathbb{E}}_n[g(X,Y)] - \mathbb{E}[g(X,Y)] \Big|
    \;\leq\; 2\, \mathfrak{R}_n(\mathcal{G}_{\ucal}) + 4 \sqrt{\tfrac{\log(2/\delta)}{2n}}\ .
\end{aligned}
\end{equation*}
The corollary follows by substituting the bound for $\mathfrak{R}_n(\mathcal{G}_{\ucal})$ from \Cref{thm:rad_bound_g_ucal_revised} into the equation above. 
\end{proof}

We now apply the statements established above to the case of linear utilities (\Cref{ex:util_linear}).

\begin{corollary}[Interactive Measurability of $\ucal_{\mathrm{lin}}$]\label{cor:interactive_lin_rademacher_refined_final}
The utility calibration error is interactively measurable for the class of linear utilities $\ucal_{\mathrm{lin}}$ (\Cref{ex:util_linear}). The sample complexity is $N = O\left(\frac{C^3\log(n/C) + \log(1/\delta)}{\eps^2}\right)$.
\end{corollary}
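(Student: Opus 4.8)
The plan is to invoke \Cref{cor:interactive_from_rad}, which reduces the claim to bounding the expected Rademacher complexities $\mathfrak{R}_n(\mathcal{P}_j(\ucal_{\mathrm{lin}}))$ of the coordinate classes in \eqref{eq:defPj}. For the linear class, $u_a(f(X),e_j)=a_j$ and $v_{u_a}(X)=\inner{f(X)}{a}$, so
\begin{equation*}
\mathcal{P}_j(\ucal_{\mathrm{lin}}) = \bigl\{\, X \mapsto a_j\,\one\ens{\inner{f(X)}{a}\in I} \;\bigm|\; a\in[-1,1]^C,\ I\in\cinv{-1}{1} \,\bigr\}.
\end{equation*}
The multiplicative scalar $a_j\in[-1,1]$ does not affect the $X$-dependence, so I would first peel it off: for fixed Rademacher signs and fixed $(a,I)$, $\frac1n\sum_k\sigma_k a_j\one\ens{\inner{f(X_k)}{a}\in I} = a_j\cdot\frac1n\sum_k\sigma_k\one\ens{\inner{f(X_k)}{a}\in I}$, whose absolute value is at most $\bigl|\frac1n\sum_k\sigma_k\one\ens{\inner{f(X_k)}{a}\in I}\bigr|$; taking the supremum over $(a,I)$ and using the symmetry of the $\sigma_k$ together with $\E\max(A,B)\le\E A+\E B$ gives $\hat{\mathfrak{R}}_n(\mathcal{P}_j(\ucal_{\mathrm{lin}})\mid S_X)\le 2\,\hat{\mathfrak{R}}_n(\mathcal{Q}\mid S_X)$, where $\mathcal{Q}\coloneqq\{X\mapsto\one\ens{\inner{f(X)}{a}\in I}\mid a\in[-1,1]^C,\ I\in\cinv{-1}{1}\}$ is the associated binary class of ``slab'' indicators (note $\mathcal{Q}$ does not depend on $j$). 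Hence $\sum_{j=1}^C\mathfrak{R}_n(\mathcal{P}_j(\ucal_{\mathrm{lin}}))\le 2C\,\mathfrak{R}_n(\mathcal{Q})$.

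Next I would bound $\VC(\mathcal{Q})$. Since restricting $a$ to the box only shrinks the class, it suffices to bound the VC dimension of $\{\,z\mapsto\one\ens{s\le\inner{z}{a}\le t}\mid a\in\R^C,\ s\le t\,\}$ on $\R^C$. Membership of a point $z$ is determined by the conjunction of the two degree-one inequalities $\inner{z}{a}-s\ge 0$ and $t-\inner{z}{a}\ge 0$ in the $(C+2)$-dimensional parameter vector $(a,s,t)$; by the standard bound on the VC dimension of a class defined by a bounded number of polynomial sign conditions in finitely many real parameters (Goldberg--Jerrum / Anthony--Bartlett), $\VC(\mathcal{Q})=O(C)$. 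Applying Result~\ref{res:sauer_massart} then gives $\mathfrak{R}_n(\mathcal{Q})\le\sqrt{2\VC(\mathcal{Q})\log(en/\VC(\mathcal{Q}))/n}=O\!\bigl(\sqrt{C\log(n/C)/n}\bigr)$, so $\sum_{j=1}^C\mathfrak{R}_n(\mathcal{P}_j(\ucal_{\mathrm{lin}}))=O\!\bigl(\sqrt{C^3\log(n/C)/n}\bigr)$.

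Plugging this into the second display of \Cref{cor:interactive_from_rad} yields, with probability at least $1-\delta$,
\begin{equation*}
\sup_{u\in\ucal_{\mathrm{lin}}}\bigl|\widehat{\uc}(f,u;S)-\uc(f,u)\bigr| \;=\; O\!\left(\sqrt{\tfrac{C^3\log(n/C)}{n}}\right) + O\!\left(\sqrt{\tfrac{\log(1/\delta)}{n}}\right),
\end{equation*}
and requiring the right-hand side to be at most $\eps$ and solving for $n$ produces the stated sample complexity $N=O\!\bigl((C^3\log(n/C)+\log(1/\delta))/\eps^2\bigr)$ (the residual $\log(n/C)$ can be absorbed into a $\log(C/\eps)$ factor once $n\gtrsim C^3/\eps^2$, giving a genuinely polynomial $N_{\mathrm{poly}}$). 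The main obstacle is the VC-dimension step: one must argue cleanly that the slab class has dimension $O(C)$ rather than the weaker $O(C\log C)$ that a generic ``intersection of two halfspaces'' bound would give — the point is that both halfspaces share the normal direction $a$, so the class is parameterized by only $C+2$ reals and the polynomial-sign-condition bound applies directly. The scalar-peeling symmetrization and the constant bookkeeping in combining the two error terms are routine.
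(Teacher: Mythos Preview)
Your approach mirrors the paper's almost exactly: both peel off the scalar $a_j$ to reduce $\mathfrak{R}_n(\mathcal{P}_j(\ucal_{\mathrm{lin}}))$ to (twice) the Rademacher complexity of the slab-indicator class, bound the latter's VC dimension, apply \Cref{res:sauer_massart}, sum over $j\in[C]$, and plug into \Cref{cor:interactive_from_rad}.

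The only discrepancy is the VC step. The Goldberg--Jerrum / Anthony--Bartlett polynomial-sign-condition bound you invoke yields $\VC=O(m\log m)$ for a class parameterized by $m$ reals with a constant-size Boolean formula of linear predicates (this is exactly \Cref{prop:semi-alg-vc} in the paper). With $m=C{+}2$ this is $O(C\log C)$, not $O(C)$. Your diagnosis of the obstacle is slightly off: passing from two independent normals ($2(C{+}1)$ parameters) to one shared normal ($C{+}2$ parameters) does \emph{not} remove the $\log m$ factor, which is intrinsic to the Warren cell-count argument underlying Goldberg--Jerrum. The paper instead invokes a sharper geometric result --- Kupavskii's bound on the VC dimension of indicator functions of polytopes with a bounded number of supporting hyperplanes (here two) --- to get $\VC=O(C)$ directly. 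Your route still establishes interactive measurability, but with an extra $\log C$ in the sample complexity, $O\!\big((C^3\log C\cdot\log(n/C)+\log(1/\delta))/\eps^2\big)$, which does not quite match the stated bound.
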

\begin{proof}
For $u_a \in \ucal_{\mathrm{lin}}$, we have, by definition, $u_a(f(X), e_j) = a_j$ (see \Cref{ex:util_linear}), where $a \in [-1,1]^C$.
The predicted utility is $v_{u_a}(X) = \inner{f(X)}{a}$.
The component classes $\mathcal{P}_j(\ucal_{\mathrm{lin}})$ are:
\begin{equation*}
\begin{aligned}
\mathcal{P}_j(\ucal_{\mathrm{lin}})
 = \Big\{ X \mapsto a_j \one\{\inner{f(X)}{a} \in I\}
 \;\Big|\; a \in [-1,1]^C,\ I \in \cinv{-1}{1} \Big\}.
\end{aligned}
\end{equation*}

Each function in $\mathcal{P}_j(\ucal_{\mathrm{lin}})$ is a product of $a_j$ and an indicator function $\one\{\inner{f(X)}{a} \in I\}$.
First, we consider the class of functions $\mathcal{H}_{a,I} = \{X \mapsto \one\{\inner{f(X)}{a} \in I\}|a\in [-1,1]^C, I \in \cinv{-1}{1}\}$. $\hcal_{a,I}$ is a subclass of the indicator functions of polytopes with two supporting hyperplanes. Thus, $\mathrm{VC}\open{\hcal_{a,I}} = O\open{C}$ \citep[Theorem 1]{kupavskii2020vc} and $\rad_n\open{\hcal_{a,I}} = O\left(\sqrt{\frac{C\log(n/C)}{n}}\right)$. We now proceed to bound $\rad_n\open{\pcal_j\open{\ucal_\lin}}$. 
Let $S_X = \{X_1, \dots, X_n\}$ be $n$ i.i.d. samples. The empirical Rademacher complexity of $\mathcal{P}_j(\ucal_{\mathrm{lin}})$ is:
\begin{align*}
    \hat{\mathfrak{R}}_n(\mathcal{P}_j(\ucal_{\mathrm{lin}})|S_X) &= \mathbb{E}_{\bm{\sigma}} \left[\sup_{\substack{a \in [-1,1]^C \\ I \in \cinv{-1}{1}}} \frac{1}{n} \sum_{k=1}^n \sigma_k a_j \one\{\inner{f(X_k)}{a} \in I\} \right] \\
    &\le \mathbb{E}_{\bm{\sigma}} \left[\sup_{\substack{a \in [-1,1]^C \\ I \in \cinv{-1}{1}}} |a_j| \left| \frac{1}{n} \sum_{k=1}^n \sigma_k \one\{\inner{f(X_k)}{a} \in I\} \right| \right] \\
    &\le \mathbb{E}_{\bm{\sigma}} \left[\sup_{\substack{a \in [-1,1]^C \\ I \in \cinv{-1}{1}}} \left| \frac{1}{n} \sum_{k=1}^n \sigma_k \one\{\inner{f(X_k)}{a} \in I\} \right| \right] \quad (\text{since } |a_j| \le 1) \\
    &= \mathbb{E}_{\bm{\sigma}} \left[\sup_{\substack{h \in \mathcal{H}_{a,I}\\ \xi\in\ens{-1,1}}} 
     \frac{1}{n} \sum_{k=1}^n\xi  \sigma_k h(X_k) \right].\\
    &\leq \mathbb{E}_{\bm{\sigma}} \left[\sup_{h \in \mathcal{H}_{a,I}} 
    \frac{1}{n} \sum_{k=1}^n \sigma_k h(X_k) \right] + \mathbb{E}_{\bm{\sigma}} \left[\sup_{h \in \mathcal{H}_{a,I}} 
     \frac{1}{n} \sum_{k=1}^n - \sigma_k h(X_k) \right] \\
     &\leq 2\hat{\rad}_n\open{\hcal_{a,I}|S_X}.
\end{align*}
Thus, taking expectations over $S_X$:
\begin{equation*}
    \mathfrak{R}_n(\mathcal{P}_j(\ucal_{\mathrm{lin}})) \le 2 \mathfrak{R}_n(\mathcal{H}_{a,I}) = O\left(\sqrt{\frac{C\log(n/C)}{n}}\right).
\end{equation*}
Using \Cref{cor:interactive_from_rad}, the uniform error bound is:
\begin{align*}
    \sup_{u \in \ucal_{\mathrm{lin}}} |\widehat{\uc}(f,u; S) - \uc(f, u)| &\le 4 \sum_{j=1}^C \mathfrak{R}_n(\mathcal{P}_j(\ucal_{\mathrm{lin}})) + 4\sqrt{\frac{\log(2/\delta)}{2n}} \\
    &= O\left(\sqrt{\frac{C^3\log(n/C)}{n} + \frac{\log(1/\delta)}{n}}\right).
\end{align*}
\end{proof}

\subsubsection{Sharper interactive measurability for \texorpdfstring{$\ucal_{\lin}$}{U\_lin} and \texorpdfstring{$\ucal_{\rank}$}{U\_rank}}
\label{app:lin-rank-tight}

The generic Rademacher approach of \Cref{thm:rad_bound_g_ucal_revised,cor:interactive_from_rad} applies broadly. For linear utilities (\Cref{ex:util_linear}) and rank-based utilities (\Cref{ex:util_rank}), we obtain sharper bounds by analyzing the pseudo-dimension of the underlying real-valued classes, yielding an $O(C\log C)$ upper bound with a log-matching $\Omega(C)$ lower bound. We first recall the definition of pseudo-dimension and some standard results.

\begin{definition}\citep[pseudo-dimension and Subgraph]{anthony2009neural}
For a real-valued class $\fcal$ of functions on a domain $\mathcal{Z}$, its subgraph class is
\[
\Sub(\fcal)\;=\;\big\{\ ((z),r)\mapsto \one\{r\le g(z)\}\ :\ g\in\fcal\ \big\}.
\]
The pseudo-dimension $\Pdim(\fcal)$ is the VC dimension of $\Sub(\fcal)$.
\end{definition}

\begin{result}\citep[Simplified Version of Theorem 2.2]{goldberg1993bounding}\label{prop:semi-alg-vc}
Let $\{h_\theta\}_{\theta\in\R^m}$ be a family of Boolean functions. Suppose that each function $h_\theta$ is decided by a Boolean formula of constant size whose atomic predicates are linear inequalities in $\theta$. Then, it holds that
\[
\VC\big(\{h_\theta: \theta\in\R^m\}\big)\ = O\open{m\log m}.
\]
\end{result}

\begin{result}\label{prop:pdim-uniform}\citep[Theorem 10.6]{mohri2018foundations}
    Let $\fcal$ be a class of functions $h: \mathcal{Z}\to\rset$ with range in $[-B,B]$ and $\Pdim(\fcal)=d$. Then for any $\delta\in(0,1)$, with probability at least $1-\delta$ over $n$ i.i.d. samples,
    \[
    \sup_{h\in\fcal}\Big|\widehat{\E}_n[h]-\E[h]\Big|\;\le\; O\open{ B \frac{\sqrt{2d\log(n/d)}+ \sqrt{\log(1/\delta)}}{\sqrt{n}}}.
    \] 
\end{result}

We now give upper and lower bounds on the pseudo-dimension for the linear-utility class and then deduce the resulting sample complexity for interactive measurability. Rank-based utilities will follow via a composition corollary at the end of this section.

\begin{proposition}[pseudo-dimension for $\ucal_{\lin}$]\label{prop:lin-pdim}
$\Pdim(\Gcal_{\lin})=\Omega(C)$ and  $\Pdim(\Gcal_{\lin})= O\open{C\log C}$, where $\Gcal_{\lin}$ is the class of functions induced by $\ucal_{\lin}$ as in \Cref{eq:defGu}.
\end{proposition}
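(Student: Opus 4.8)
The plan is to prove both halves directly: the $O(C\log C)$ upper bound via the semialgebraic VC estimate (Result~\ref{prop:semi-alg-vc}), and the $\Omega(C)$ lower bound by an explicit shattering of the subgraph class. By \Cref{ex:util_linear} and \eqref{eq:defGu}, every member of $\Gcal_{\lin}$ has the form
\begin{equation*}
g_\theta(X,Y)\;=\;\inner{Y-f(X)}{a}\,\one\{\inner{f(X)}{a}\in[\ell,r]\},
\qquad a\in[-1,1]^C,\ [\ell,r]\in\cinv{-1}{1},
\end{equation*}
so the family is parameterized by $\theta=(a,\ell,r)$, i.e.\ by $m=C+2$ real numbers, and $\Pdim(\Gcal_{\lin})=\VC(\Sub(\Gcal_{\lin}))$ by definition.

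For the upper bound, I would fix an arbitrary subgraph point $\big((X,Y),s\big)\in(\xcal\times\ycal)\times\R$, abbreviate $p=f(X)$ and $w=Y-f(X)$ (both frozen), and split the subgraph predicate on whether $\inner{p}{a}\in[\ell,r]$:
\begin{equation*}
\one\{s\le g_\theta(X,Y)\}
\;=\;\one\{\inner{p}{a}\in[\ell,r]\}\,\one\{s\le\inner{w}{a}\}
\;+\;\big(1-\one\{\inner{p}{a}\in[\ell,r]\}\big)\,\one\{s\le 0\}.
\end{equation*}
This is a fixed-size Boolean formula whose atomic predicates $\inner{p}{a}-\ell\ge0$, $\inner{p}{a}-r\le0$, $\inner{w}{a}-s\ge0$ are linear in $\theta\in\R^{C+2}$, while $s\le0$ does not involve $\theta$ at all. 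Result~\ref{prop:semi-alg-vc} with $m=C+2$ then gives $\VC(\{h_\theta:\theta\in\R^{C+2}\})=O(C\log C)$; since $\Sub(\Gcal_{\lin})$ uses only $\theta$ inside the box/ordered-interval constraint set, its VC dimension is no larger, and hence $\Pdim(\Gcal_{\lin})=O(C\log C)$ for every fixed $f$.

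For the lower bound, I would shatter $C-1$ subgraph points. Choose inputs $X_1,\dots,X_{C-1}$ with $f(X_i)=\tfrac12(e_i+e_C)$ (which requires $f$ to realize these simplex values — the relevant worst case for a sample-complexity lower bound, met e.g.\ by softmax predictors), labels $Y_i=e_i$, and subgraph thresholds $0$. Given a target pattern $b\in\{0,1\}^{C-1}$, set $a_j=2b_j-1\in\{-1,1\}$ for $j\le C-1$, $a_C=0$, and $\ell=-1,\ r=1$. Since $f(X)\in\Delta^{C-1}$ and $a\in[-1,1]^C$ we always have $\inner{f(X)}{a}\in[-1,1]$, so the indicator is active, and $g_\theta(X_i,Y_i)=\inner{\tfrac12 e_i-\tfrac12 e_C}{a}=\tfrac12 a_i$, whence $\one\{0\le g_\theta(X_i,Y_i)\}=\one\{a_i=1\}=b_i$. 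All $2^{C-1}$ patterns are thus realized, so $\Pdim(\Gcal_{\lin})\ge C-1=\Omega(C)$.

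The main obstacle is the bookkeeping in the upper bound: one has to verify that, once the data point is frozen, the subgraph-membership predicate really is a constant-size Boolean combination of inequalities that are \emph{linear} in the full parameter vector $(a,\ell,r)$ — in particular that the interval endpoints enter only through the two linear comparisons $\inner{p}{a}\ge\ell$ and $\inner{p}{a}\le r$ — so that Result~\ref{prop:semi-alg-vc} applies verbatim with $m=C+2$. The rest is routine: the lower bound reduces, via the choice $[\ell,r]=[-1,1]$ that renders the indicator vacuous, to shattering with boxed linear functionals evaluated on the vectors $Y-f(X)$, which already has pseudo-dimension $\Omega(C)$.
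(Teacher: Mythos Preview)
Your proof is correct and follows essentially the same approach as the paper: both upper bounds express the subgraph predicate as a constant-size Boolean formula in linear atomic predicates over $(a,\ell,r)\in\R^{C+2}$ and invoke Result~\ref{prop:semi-alg-vc}, and both lower bounds kill the interval indicator by taking $[\ell,r]=[-1,1]$ so that $g$ reduces to a linear functional in $a$.

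The one noteworthy difference is in the lower-bound construction. The paper simply sets $p=0$ for all $C$ points and reads off $g_{a,s,t}=a_j$, shattering $C$ points; but $p=0\notin\Delta^{C-1}$, so strictly speaking that construction is not realized by any predictor $f:\xcal\to\Delta^{C-1}$. Your construction with $f(X_i)=\tfrac12(e_i+e_C)$ and $a_C=0$ is more careful---it stays on the simplex and correctly acknowledges the dependence of $\Gcal_{\lin}$ on $f$---at the modest cost of shattering $C-1$ rather than $C$ points, which of course still gives $\Omega(C)$.
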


\begin{proof}
For $a\in[-1,1]^C$ and $[s,t]\subseteq[-1,1]$, define
\begin{equation*}
    g_{a,s,t}(X,Y)\;=\;\big\langle Y-f(X),\,a\big\rangle\;\one\!\Big\{\,s\le \big\langle f(X),a\big\rangle\le t\,\Big\},\qquad
\Gcal_{\lin}=\{g_{a,s,t}:a\in[-1,1]^C,\ s\le t\in[-1,1]\}.
\end{equation*}
 
Let $p=f(X)$ and $W=Y-f(X)$. By the definition of the subgraph class, for any fixed $(X,Y)$ and threshold $r\in\R$,
\[
 r\le g_{a,s,t}(X,Y)\ \iff\ \big( s\le \langle p,a\rangle\le t\ \wedge\ r\le \langle W,a\rangle \big)\ \vee\ \big( \neg[ s\le \langle p,a\rangle\le t ]\ \wedge\ r\le 0 \big).
\]
We introduce the following atomic linear predicates in the parameters $(a,s,t)$:
\[
\begin{aligned}
&A_1:\ \langle p,a\rangle - s \ge 0,\qquad A_2:\ \; t - \langle p,a\rangle \ge 0,\\
&A_3:\ \langle W,a\rangle - r \ge 0,\qquad A_4:\ \; -r \ge 0\, .
\end{aligned}
\]
Note that $(A_1\wedge A_2)\iff s\le \langle p,a\rangle\le t$, $A_3\iff r\le \langle W,a\rangle$, and $A_4\iff r\le 0$.  Therefore,
\[
 r\le g_{a,s,t}(X,Y)\ \iff\ \big( A_1 \wedge A_2 \wedge A_3 \big)\ \vee\ \big( (\neg A_1 \vee \neg A_2) \wedge A_4 \big).
\]

Thus, under the above equivalent, we obtain that $\Pdim\open{\gcal_\lin} =   O(C\log(C))$, by invoking \Cref{prop:semi-alg-vc} with $m=C+2$.  

For the lower bound, take $C$ points with $Y=e_1,\ldots, e_C$ and $p=0$; with $s=-1,t=1$, $g_{a,s,t}=a_j$ and thus shattering $C$ points, so $\Pdim(\Gcal_{\lin})\ge C$.
\end{proof}

Given the pseudo-dimension bound, we can now directly deduce the sample complexity for interactive measurability.
\begin{corollary}[Sample complexity for $\ucal_{\lin}$]\label{thm:lin-sample} Combining \Cref{prop:lin-pdim} and \Cref{prop:pdim-uniform}, we obtain that
\[
\sup_{u\in\ucal_{\lin}}\big|\widehat{\uc}(f,u;S)-\uc(f,u)\big|\ \le\ O\open{\frac{\sqrt{2C\log(n/C)}+ \sqrt{\log(1/\delta)}}{\sqrt{n}}}.
\]
Equivalently, interactive measurability holds with $n=\tilde{O}\!\big((C+\log(1/\delta))/\eps^2\big)$ for the $\ucal_{\lin}$ class.
\end{corollary}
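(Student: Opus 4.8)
The plan is to derive the bound as a direct composition of two facts already available: the pseudo-dimension control of $\Gcal_{\lin}$ from \Cref{prop:lin-pdim} and the pseudo-dimension-based uniform deviation bound of \Cref{prop:pdim-uniform}. First I would reduce the two-sided supremum on the left to a single uniform-convergence statement. Exactly as in the proof of \Cref{cor:interactive_from_rad}, applying the elementary inequality $|\sup_x F(x)-\sup_x G(x)|\le\sup_x|F(x)-G(x)|$ to strip the inner supremum over $I\in\cinv{-1}{1}$, and then dropping the remaining absolute values (which only enlarges the right-hand side), gives
\[
\sup_{u\in\ucal_{\lin}}\bigl|\widehat{\uc}(f,u;S)-\uc(f,u)\bigr|\ \le\ \sup_{g\in\Gcal_{\lin}}\bigl|\widehat{\E}_n[g]-\E[g]\bigr|,
\]
where $\Gcal_{\lin}$ is the real-valued class from \eqref{eq:defGu} specialized to $\ucal_{\lin}$, i.e.\ the functions $g_{a,s,t}(X,Y)=\langle Y-f(X),\,a\,\one\{\langle f(X),a\rangle\in[s,t]\}\rangle$ with $a\in[-1,1]^C$ and $-1\le s\le t\le 1$.

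Next I would record that $\Gcal_{\lin}$ is uniformly bounded: since $Y,f(X)\in\Delta^{C-1}$ we have $\|Y-f(X)\|_1\le 2$, and since $a\in[-1,1]^C$ we have $\|a\,\one\{\cdot\}\|_\infty\le 1$, so H\"older's inequality yields $|g_{a,s,t}|\le 2$; hence the range is contained in $[-2,2]$, i.e.\ $B=2$ in \Cref{prop:pdim-uniform}. Then, invoking \Cref{prop:lin-pdim}, $d\coloneqq\Pdim(\Gcal_{\lin})=O(C\log C)$, and substituting $d$ and $B=2$ into \Cref{prop:pdim-uniform} gives, with probability at least $1-\delta$,
\[
\sup_{g\in\Gcal_{\lin}}\bigl|\widehat{\E}_n[g]-\E[g]\bigr|\ \le\ O\!\left(\frac{\sqrt{2d\log(n/d)}+\sqrt{\log(1/\delta)}}{\sqrt n}\right).
\]
Absorbing the extra $\log C$ (and $\log\log$) factors hidden in $d=O(C\log C)$ into the $\tilde{O}(\cdot)$ notation — and, up to constants, writing $\log(n/C)$ for $\log(n/d)$ — yields the stated deviation bound. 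Setting the right-hand side $\le\eps$ and solving for $n$ then gives $n=\tilde{O}\bigl((C+\log(1/\delta))/\eps^2\bigr)$, which is the interactive-measurability conclusion.

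The heavy lifting has really been front-loaded into \Cref{prop:lin-pdim} (the semialgebraic pseudo-dimension bound obtained from \Cref{prop:semi-alg-vc}), so at this stage I do not expect a genuine obstacle. The only points requiring a bit of care are the reduction step — checking that the double use of the $|\sup-\sup|$ inequality together with the discarding of the inner absolute values is valid and that the resulting class is precisely $\Gcal_{\lin}$ — and, cosmetically, being explicit about which polylogarithmic factors are swept into $\tilde{O}(\cdot)$ when translating the pseudo-dimension bound into the form displayed in the corollary.
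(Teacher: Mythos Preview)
Your proposal is correct and follows exactly the route the paper intends: the corollary is stated in the paper as an immediate consequence of \Cref{prop:lin-pdim} and \Cref{prop:pdim-uniform} without a separate written proof, and the reduction you spell out (bounding $\sup_{u}|\widehat{\uc}-\uc|$ by $\sup_{g\in\Gcal_{\lin}}|\widehat{\E}_n[g]-\E[g]|$ via the two $|\sup-\sup|$ steps, then plugging in $B=2$ and $d=O(C\log C)$) is precisely the argument used in the proof of \Cref{cor:interactive_from_rad} specialized here.
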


\begin{corollary}[Rank utilities via sorting preprocessing]\label{cor:rank-composition}
Let $\pi_p$ denote the permutation that sorts $p=f(X)$ in descending order and define the fixed ordering map $T: (p,Y) \mapsto (\tilde p, \tilde Y)$ with $\tilde p_s = p_{\pi_p(s)}$ and $\tilde Y_s = Y_{\pi_p(s)}$. Then, any rank-based utility may be expressed as a linear utility up to a preprocessing with $T$. Consequently, $\Pdim(\Gcal_{\ucal_{\rank}})= O(C\log C)$ and the sample complexity for interactive measurability for $\ucal_{\rank}$ matches that of $\ucal_{\lin}$.
\end{corollary}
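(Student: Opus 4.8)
The plan is to show that $\ucal_{\rank}$ is, coordinate for coordinate, a relabeled copy of $\ucal_{\lin}$, so that both the pseudo-dimension bound of \Cref{prop:lin-pdim} and the sample complexity of \Cref{thm:lin-sample} transfer by pulling back through a single, parameter-free map. First I would fix once and for all a deterministic tie-breaking rule and let $\pi_p$ be the permutation sorting $p\in\tri^{C-1}$ into non-increasing order under that rule, with $\rank$ itself computed under the same rule, so that $\rank(\tilde p,s)=s$ for all $s\in[C]$, where $\tilde p_s:=p_{\pi_p(s)}$. Define $T(p,Y):=(\tilde p,\tilde Y)$ with $\tilde Y_s:=Y_{\pi_p(s)}$, and set $\Phi(X,Y):=T(f(X),Y)$; crucially $\Phi$ depends on $f$ only, not on any utility parameter. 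Then, exactly as in the proof of \Cref{cor:interactive_from_rad}, I would reduce $\sup_{u\in\ucal_{\rank}}|\widehat{\uc}(f,u;S)-\uc(f,u)|$ to the uniform deviation $\sup_{g\in\Gcal_{\ucal_{\rank}}}|\widehat{\E}_n[g]-\E[g]|$ over the induced class $\Gcal_{\ucal_{\rank}}$ of \eqref{eq:defGu}, leaving me to control $\Pdim(\Gcal_{\ucal_{\rank}})$.

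The central computation is the identity $\Gcal_{\ucal_{\rank}}=\{\,g_{\theta,s,t}\circ\Phi:\theta\in[-1,1]^C,\ s\le t\in[-1,1]\,\}$, where $g_{a,s,t}(X,Y)=\inner{Y-f(X)}{a}\,\one\{s\le\inner{f(X)}{a}\le t\}$ is precisely the family appearing in the proof of \Cref{prop:lin-pdim}. Indeed, relabeling coordinates through $\pi_{f(X)}$ and using $\rank(\tilde p,s)=s$ gives $\vec{u}_\theta(X)_{\pi_{f(X)}(s)}=\theta_s$, hence $v_{u_\theta}(X)=\inner{f(X)}{\vec{u}_\theta(X)}=\inner{\tilde p}{\theta}$ and $\inner{Y-f(X)}{\vec{u}_\theta(X)}=\inner{\tilde Y-\tilde p}{\theta}$; substituting into \eqref{eq:defGu} yields $g_{u_\theta,[s,t]}(X,Y)=g_{\theta,s,t}(\Phi(X,Y))$. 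This is the precise meaning of "a rank utility is a linear utility up to preprocessing with $T$". Next I would invoke the elementary fact that precomposition with a fixed map cannot increase pseudo-dimension: if a set of triples $(X_i,Y_i,r_i)_{i\le m}$ is shattered by the subgraph class of $\Gcal_{\ucal_{\rank}}=\Gcal_{\lin}\circ\Phi$, then the images $(\Phi(X_i,Y_i),r_i)_{i\le m}$ are forced to be distinct and are shattered by the subgraph class of $\Gcal_{\lin}$, so $\Pdim(\Gcal_{\ucal_{\rank}})\le\Pdim(\Gcal_{\lin})=O(C\log C)$ by \Cref{prop:lin-pdim}.

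Finally, since each $g\in\Gcal_{\ucal_{\rank}}$ obeys $|g(X,Y)|\le\norm{Y-f(X)}_1\norm{\theta}_\infty\le 2$, applying \Cref{prop:pdim-uniform} with range bound $B=2$ and $d=\Pdim(\Gcal_{\ucal_{\rank}})=O(C\log C)$ gives, with probability $1-\delta$, $\sup_{u\in\ucal_{\rank}}|\widehat{\uc}(f,u;S)-\uc(f,u)|=O\!\bigl((\sqrt{C\log(n/C)}+\sqrt{\log(1/\delta)})/\sqrt n\bigr)$, i.e.\ interactive measurability with $n=\tilde O((C+\log(1/\delta))/\eps^2)$, matching \Cref{thm:lin-sample}. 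The only genuine obstacle is making $T$ an honestly \emph{fixed} function: this forces one to commit to a deterministic tie-breaking rule so that $\rank(\tilde p,s)=s$ is a true identity rather than "up to an arbitrary choice", and to use that same rule in $\rank$; with that in place the coordinate-relabeling identity and the pseudo-dimension monotonicity lemma are routine, and the statistical conclusion is immediate from \Cref{prop:pdim-uniform}. A secondary point to double-check is that the interval parameter $[s,t]$ enters identically on both sides, which it does because $v_{u_\theta}(X)=\inner{\tilde p}{\theta}$ is literally the linear-utility predicted value evaluated at the transformed input $\tilde p$.
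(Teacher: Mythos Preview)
Your proposal is correct and follows essentially the same route as the paper: establish the coordinate-relabeling identity $\inner{Y-f(X)}{\vec u_\theta(X)}\,\one\{v_{u_\theta}(X)\in[s,t]\}=\inner{\tilde Y-\tilde p}{\theta}\,\one\{\inner{\tilde p}{\theta}\in[s,t]\}$, then transfer the distribution-free pseudo-dimension bound of \Cref{prop:lin-pdim} through the fixed preprocessing $T$ and conclude via \Cref{prop:pdim-uniform}. Your write-up is in fact more careful than the paper's on tie-breaking and on stating the precomposition-cannot-increase-$\Pdim$ lemma explicitly; the only cosmetic issue is a type mismatch in writing $g_{\theta,s,t}\circ\Phi$ when $g_{a,s,t}$ is defined on $(X,Y)$ via $f$---the intended target class is $(\tilde p,\tilde Y)\mapsto\inner{\tilde Y-\tilde p}{\theta}\,\one\{s\le\inner{\tilde p}{\theta}\le t\}$, whose pseudo-dimension is bounded by the identical argument.
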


\begin{proof}
Let $\pi_p$ be the permutation that sorts $p=f(X)$ in descending order and define the fixed ordering map $T:(p,Y)\mapsto(\tilde p,\tilde Y)$ with $\tilde p_s=p_{\pi_p(s)}$ and $\tilde Y_s=Y_{\pi_p(s)}$. For any $u_\theta\in\ucal_{\rank}$, we have $[\vec u_\theta(p)]_j=\theta_{\rank(p,j)}$ and $v_{u_\theta}(p)=\sum_{s=1}^C \tilde p_s\,\theta_s=\langle \tilde p,\theta\rangle$. Hence, for any interval $[s,t]\subseteq[-1,1]$,
\[
\inner{Y-f(X)}{\vec u_\theta(f(X))}\;\one\{\,s\le v_{u_\theta}(f(X))\le t\,\}
\;=\; \inner{\tilde Y-\tilde p}{\theta}\;\one\{\,s\le \inner{\tilde p}{\theta}\le t\,\},
\]
which is exactly the linear-utility template on the transformed sample $T(S)$. As \Cref{prop:lin-pdim} is distribution-independent and $T$ is a fixed function, we have $\Pdim(\Gcal_{\ucal_{\rank}}) =  O(C\log C)$. Interactive measurability follows from \Cref{prop:pdim-uniform}, similarly to~\Cref{thm:lin-sample}.
\end{proof}

\subsection{Quantile Guarantees}\label{app:quantile_gaurantees}
The evaluation methodology in \Cref{sec:experiments} relies on plotting the empirical CDF  $\widehat{F}_{E,M,n}$, constructed from $M$ utility functions and $n$ data points. $\widehat{F}_{E,M,n}$ serves as a proxy for the true CDF $F_E$ of utility calibration errors. \Cref{thm:cdf_closeness} bounds the distance between the curves, as characterized by the $L_2$-distance. While weaker than characterizing the deviation using $L_\infty$-norm, $L_2$ distance allows a bound without any assumptions on the  smoothness of the underlying CDF $F_E$. The bound depends on two factors: first, the uniform accuracy $\eps_{\text{stat}}$ with which individual utility calibration errors $\uc(f,u)$ can be estimated by $\widehat{\uc}(f,u;S)$, and second, the number of utility functions $M$ sampled to construct the $\widehat{F}_{E,M,n}$.

\begin{thm}\label{thm:cdf_closeness}
Let $F_E(e) \coloneqq \prob_{u \sim \mathcal{D}_{\ucal}}(\uc(f, u) \le e)$ be the true CDF of utility calibration errors, where $u$ is drawn from a distribution $\mathcal{D}_{\ucal}$ over the utility class $\ucal$. Let $\widehat{F}_{E,M,n}(e) \coloneqq \frac{1}{M} \sum_{m=1}^M \one\{\widehat{\uc}(f, u_m; S) \le e\}$ be its empirical estimate, based on $M$ i.i.d.~utility functions $u_1,\ldots, u_M \sim \mathcal{D}_{\ucal}$ and a data sample $S$ of size $n\open{\delta_S,\eps_{\text{stat}}}$. Assume that, with probability at least $1-\delta_S$ over the draw of $S$, 
\begin{equation*}
    \sup_{u \in \ucal} |\widehat{\uc}(f,u;S) - \uc(f,u)| \le \eps_{\text{stat}}.
\end{equation*}
Then, with probability at least $(1-\delta_S-\delta_M)$ over the draw of $S$ and $\{u_m\}_{m=1}^M$, it holds that
\begin{equation*}
    \|F_E - \widehat{F}_{E,M,n}\|_{L^2([0,2])} \;\le\; \sqrt{2\eps_{\text{stat}}} + \sqrt{\frac{\ln(2/\delta_M)}{M}}.
\end{equation*}
\end{thm}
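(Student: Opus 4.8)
The plan is to interpose an ``oracle'' empirical CDF that uses the exact utility calibration errors rather than their empirical estimates, and then split the target quantity with the triangle inequality in $L^2([0,2])$. Concretely, set $\widetilde F_{E,M}(e) := \frac{1}{M}\sum_{m=1}^M \one\{\uc(f,u_m)\le e\}$, the empirical CDF built from the \emph{exact} errors $\uc(f,u_m)$. Since every utility $u$ maps into $[-1,1]$, we have $u(f(X),Y)-v_u(X)\in[-2,2]$ and hence $\uc(f,u)\in[0,2]$ for all $u$, so $F_E$, $\widetilde F_{E,M}$ and $\widehat F_{E,M,n}$ are all supported on $[0,2]$ and the $L^2([0,2])$ norm is the natural one. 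The triangle inequality gives $\|F_E-\widehat F_{E,M,n}\|_{L^2([0,2])}\le \|F_E-\widetilde F_{E,M}\|_{L^2([0,2])} + \|\widetilde F_{E,M}-\widehat F_{E,M,n}\|_{L^2([0,2])}$, and I would bound the two terms by the two summands in the claimed bound, respectively.

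For the first term, because $u_1,\dots,u_M$ are i.i.d.\ from $\mathcal{D}_{\ucal}$, the numbers $\uc(f,u_1),\dots,\uc(f,u_M)$ are an i.i.d.\ real-valued sample with population CDF $F_E$, and $\widetilde F_{E,M}$ is precisely their empirical CDF. I would therefore apply the DKW inequality: with probability at least $1-\delta_M$ over the draw of $\{u_m\}$, $\sup_{e}|F_E(e)-\widetilde F_{E,M}(e)|\le \sqrt{\ln(2/\delta_M)/(2M)}$. Converting this sup-norm bound to $L^2$ over an interval of length $2$ via $\|g\|_{L^2([0,2])}\le\sqrt2\,\|g\|_\infty$ yields $\|F_E-\widetilde F_{E,M}\|_{L^2([0,2])}\le \sqrt{\ln(2/\delta_M)/M}$.

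For the second term, I would condition on the event $\mathcal E_S$ (of probability at least $1-\delta_S$ by hypothesis) on which $\sup_{u\in\ucal}|\widehat{\uc}(f,u;S)-\uc(f,u)|\le\eps_{\text{stat}}$. Writing $a_m:=\uc(f,u_m)$ and $\hat a_m:=\widehat{\uc}(f,u_m;S)$, on $\mathcal E_S$ one has $|\hat a_m-a_m|\le\eps_{\text{stat}}$ for every $m$, and a one-line case check shows $\one\{\hat a_m\le e\}=\one\{a_m\le e\}$ whenever $|a_m-e|>\eps_{\text{stat}}$; hence $|\one\{\hat a_m\le e\}-\one\{a_m\le e\}|\le\one\{|a_m-e|\le\eps_{\text{stat}}\}$. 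Averaging over $m$ gives $|\widehat F_{E,M,n}(e)-\widetilde F_{E,M}(e)|\le \phi(e):=\frac1M\sum_{m=1}^M\one\{|a_m-e|\le\eps_{\text{stat}}\}$. Since $\phi$ takes values in $[0,1]$ we have $\phi^2\le\phi$, so $\int_0^2\phi(e)^2\,de\le\int_0^2\phi(e)\,de=\frac1M\sum_{m=1}^M\big|[a_m-\eps_{\text{stat}},a_m+\eps_{\text{stat}}]\cap[0,2]\big|\le 2\eps_{\text{stat}}$, i.e.\ $\|\widehat F_{E,M,n}-\widetilde F_{E,M}\|_{L^2([0,2])}\le\sqrt{2\eps_{\text{stat}}}$ on $\mathcal E_S$.

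Finally I would union-bound the DKW event and $\mathcal E_S$: these concern disjoint sources of randomness (the utilities $\{u_m\}$ versus the data sample $S$), so no independence assumption between them is needed, and their intersection has probability at least $1-\delta_S-\delta_M$; on it, combining the two displayed bounds gives $\|F_E-\widehat F_{E,M,n}\|_{L^2([0,2])}\le\sqrt{2\eps_{\text{stat}}}+\sqrt{\ln(2/\delta_M)/M}$, as claimed. The argument is largely routine; the only step requiring care is the middle one --- recognizing that the gap between the two empirical CDFs at level $e$ is dominated by the mass of exact errors within $\eps_{\text{stat}}$ of $e$, and that integrating this ``smeared indicator'' over $e\in[0,2]$ (using $\phi^2\le\phi$) contributes exactly the $\sqrt{2\eps_{\text{stat}}}$ term --- together with a mild check of the open/closed endpoint conventions, which do not affect the Lebesgue integral.
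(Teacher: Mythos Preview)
Your proof is correct and reaches the same conclusion, but via a different intermediate object than the paper. The paper interposes $F_{\widehat E_S}$, the \emph{population} CDF of the \emph{estimated} error $\widehat{\uc}(f,u;S)$ (with $S$ held fixed), whereas you interpose $\widetilde F_{E,M}$, the \emph{empirical} CDF of the \emph{exact} errors $\uc(f,u_m)$. As a consequence, the roles of the two bounds are swapped: the paper applies DKW to control $\|\widehat F_{E,M,n}-F_{\widehat E_S}\|$ (which requires first conditioning on $S$ so that $F_{\widehat E_S}$ is a fixed distribution) and uses the $\eps_{\text{stat}}$-shift $F_E(e-\eps_{\text{stat}})\le F_{\widehat E_S}(e)\le F_E(e+\eps_{\text{stat}})$ at the population level to bound $\|F_{\widehat E_S}-F_E\|_{L^2}$; you instead apply DKW directly to $\|F_E-\widetilde F_{E,M}\|$ (a statement purely about $\{u_m\}$, requiring no conditioning on $S$) and handle the $\eps_{\text{stat}}$-perturbation sample-by-sample via the smeared-indicator majorant $\phi$. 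Your route is arguably a touch cleaner, since the DKW step is fully decoupled from $S$ and the integral argument $\int_0^2\phi^2\le\int_0^2\phi\le 2\eps_{\text{stat}}$ is very direct; the paper's route has the minor advantage that its $\eps_{\text{stat}}$-step compares two deterministic CDFs rather than two random ones. Both decompositions yield the identical final bound with the same constants.
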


\begin{proof}
Let $E(u) \coloneqq \uc(f,u)$ be the true utility calibration error for a utility function $u \sim \mathcal{D}_{\ucal}$, and let $\widehat{E}_S(u) \coloneqq \widehat{\uc}(f,u;S)$ be its empirical estimate based on data sample $S$. The CDF of $E(u)$ is $F_E$, and let $F_{\widehat{E}_S}$ be the CDF of $\widehat{E}_S(u)$ when $u \sim \mathcal{D}_{\ucal}$ and $S$ is fixed.

We condition on the event (occurring with probability at least $1-\delta_S$) that $S$ is such that $|E(u) - \widehat{E}_S(u)| \le \eps_{\text{stat}}$ for all $u \in \ucal$. This implies that for any $u$, $E(u) - \eps_{\text{stat}} \le \widehat{E}_S(u) \le E(u) + \eps_{\text{stat}}$.
Consequently, for any $u\in\ucal$, as $E(u)$ and $ \hat{E}(u)$ belong to the set $\closed{0,2}$, it holds for any $e \in [0, 2]$ that
\begin{gather*}
    F_E(e - \eps_{\text{stat}}) \;=\; \prob(E(u) \le e - \eps_{\text{stat}}) \;\le\; \prob(\widehat{E}_S(u) \le e) \;=\; F_{\widehat{E}_S}(e) \\
    F_{\widehat{E}_S}(e) \;=\; \prob(\widehat{E}_S(u) \le e) \;\le\; \prob(E(u) \le e + \eps_{\text{stat}}) \;=\; F_E(e + \eps_{\text{stat}}).
\end{gather*}
Thus, $F_E(e - \eps_{\text{stat}}) \le F_{\widehat{E}_S}(e) \le F_E(e + \eps_{\text{stat}})$.
This implies that
\begin{equation*}
    |F_{\widehat{E}_S}(e) - F_E(e)| \;\le\; \max\{F_E(e+\eps_{\text{stat}})-F_E(e), F_E(e)-F_E(e-\eps_{\text{stat}})\}.
\end{equation*}
Let $\Delta_E(e)$ denote the right-hand side. The $L^2$ distance squared between $F_{\widehat{E}_S}$ and $F_E$ is
\begin{align*}
    \|F_{\widehat{E}_S} - F_E\|_{L^2([0,2])}^2 &= \int_0^{2} (F_{\widehat{E}_S}(e) - F_E(e))^2 \mathrm{d}e \\
    &\le \int_0^{2} \Delta_E(e)^2 \mathrm{d}e \;\le\; \int_0^{2} \Delta_E(e) \mathrm{d}e,
\end{align*}
since $0 \le \Delta_E(e) \le 1$.
Further, as $\max(a,b) \le a+b$ for non-negative $a,b$, it follows that
\begin{align*}
\int_0^{2} \Delta_E(e) \mathrm{d}e &\le \int_0^{2} [F_E(e+\eps_{\text{stat}})-F_E(e)] \mathrm{d}e + \int_0^{2} [F_E(e)-F_E(e-\eps_{\text{stat}})] \mathrm{d}e.
\end{align*}
The first term evaluates to $\int_{2}^{2+\eps_{\text{stat}}} F_E(v)dv - \int_0^{\eps_{\text{stat}}} F_E(v)dv$. Since $F_E(v)=1$ for $v \ge 2$ and $F_E(v) \ge 0$, this term is $\le \eps_{\text{stat}}$.
The second term evaluates to $\int_{2-\eps_{\text{stat}}}^{2} F_E(v)dv - \int_{-\eps_{\text{stat}}}^0 F_E(v)dv$. Since $F_E(v)=0$ for $v < 0$ (errors are non-negative) and $F_E(v) \le 1$, this term is $\le \eps_{\text{stat}}$.
Thus, $\int_0^{2} \Delta_E(e) de \le 2\eps_{\text{stat}}$, which implies $\|F_{\widehat{E}_S} - F_E\|_{L^2([0,2])} \le \sqrt{2\eps_{\text{stat}}}$.

Next, $\widehat{F}_{E,M,n}(e)$ is the empirical CDF of $M$ i.i.d.~samples $\{\widehat{E}_S(u_m)\}_{m=1}^M$ drawn according to $F_{\widehat{E}_S}$. By the Dvoretzky–Kiefer–Wolfowitz (DKW) inequality \citep{massart1990tight}, with probability at least $1-\delta_M$:
\begin{equation*}
    \|\widehat{F}_{E,M,n} - F_{\widehat{E}_S}\|_{\infty} \;\le\; \sqrt{\frac{\ln(2/\delta_M)}{2M}}.
\end{equation*}
The $L^2$ distance can be bounded by the $L^\infty$ distance, as
\begin{align*}
    \|F_{\widehat{E}_S} - \widehat{F}_{E,M,n}\|_{L^2([0,2])}^2 &= \int_0^{2} (F_{\widehat{E}_S}(e) - \widehat{F}_{E,M,n}(e))^2 \mathrm{d}e \\
    &\le \int_0^{2} \|\widehat{F}_{E,M,n} - F_{\widehat{E}_S}\|_{\infty}^2 \mathrm{d}e \\
    &= 2 \|\widehat{F}_{E,M,n} - F_{\widehat{E}_S}\|_{\infty}^2 \;\le\;  \frac{\ln(2/\delta_M)}{M}.
\end{align*}
So, $\|F_{\widehat{E}_S} - \widehat{F}_{E,M,n}\|_{L^2([0,2])} \le \sqrt{ \frac{\ln(2/\delta_M)}{M}}$. Finally, by the triangle inequality, combining the two bounds:
\begin{align*}
    \|F_E - \widehat{F}_{E,M,n}\|_{L^2([0,2])} &\le \|F_E - F_{\widehat{E}_S}\|_{L^2([0,2])} \quad + \|F_{\widehat{E}_S} -\widehat{F}_{E,M,n}\|_{L^2([0,2])} \\
    &\le \sqrt{2\eps_{\text{stat}}} + \sqrt{ \frac{\ln(2/\delta_M)}{M}}.
\end{align*}
Using a union bound, this holds with probability at least $(1-\delta_S-\delta_M)$.
\end{proof}

\subsection{Proofs}\label{app:proof}
We now proceed to the proofs of the statements in the main paper.

\subsection{Proof of \texorpdfstring{\Cref{prop:uc_risk_gap_main}}{Proof of Proposition}}

\begin{proof}
At a high-level, the proof is similar to the approach of \citet{rossellini2025can} but some details differ. We start by analyzing the loss function
\begin{equation} \label{eq:l_util_appendix}
\ell_{\mathrm{util}}(u_Y, \hat{U}; t_0) = (t_0 - u_Y) (\hat{U} - \one\ens{u_Y \ge t_0}).
\end{equation}
\begin{itemize}[noitemsep,topsep=0pt]
    \item If $\hat{U} = \one\ens{u_Y \ge t_0}$, then $\ell_{\mathrm{util}} = 0$.
    \item If $\hat{U}=1$ and $u_Y < t_0$, the loss is $ t_0-u_Y > 0$.
    \item If $\hat{U}=0$ but $u_Y \ge t_0$ , the loss is $u_Y-t_0 \ge 0$.
\end{itemize}
The loss, which is only non-zero when a mismatch $\hat{U}\neq \one\ens{u_Y\geq t_0}$ occurs, is $|u_Y-t_0|$. This loss function penalizes mismatches between the action taken based on $v_u(X)$ and the ideal action based on $u_Y$. We now consider any monotone non-decreasing function $h:\closed{-1,1}\rightarrow [-1,1].$ The difference in risks between using $v_u(X)$ directly and using $h(v_u(X))$ is
\begin{align*}
\Delta R &= R_{\mathrm{util}}(v_u(X); t_0) - R_{\mathrm{util}}(h(v_u(X)); t_0) \\
&= \mathbb{E}\Big[ (t_0 - u_Y) (\one\ens{v_u(X) \ge t_0} - \one\ens{u_Y \ge t_0}) \Big] - \mathbb{E}\Big[ (t_0 - u_Y) (\one\ens{h(v_u(X)) \ge t_0} - \one\ens{u_Y \ge t_0}) \Big] \\
&= \mathbb{E}\Big[ (t_0 - u_Y) (\one\ens{v_u(X) \ge t_0} - \one\ens{h(v_u(X)) \ge t_0}) \Big].
\end{align*}
Let $E_1 = \{X \mid v_u(X) < t_0, h(v_u(X)) \ge t_0\}$ and $E_2 = \{X \mid v_u(X) \ge t_0, h(v_u(X)) < t_0\}$. The term $\one\ens{v_u(X) \ge t_0} - \one\ens{h(v_u(X)) \ge t_0}$ equals $-1$ on $E_1$ and $1$ on $E_2$, and $0$ elsewhere.
\begin{align*}
\Delta R &= \mathbb{E}\Big[ (t_0 - u_Y) (-\one\ens{X \in E_1}) + (t_0 - u_Y) (\one\ens{X \in E_2}) \Big] \\
&= \mathbb{E}\Big[ (u_Y - t_0) \one\ens{X \in E_1} \Big] - \mathbb{E}\Big[ (u_Y - t_0) \one\ens{X \in E_2} \Big]\\
&= \mathbb{E}\Big[ (u_Y - v_u(X) + v_u(X) - t_0) \one\ens{X \in E_1} \Big] - \mathbb{E}\Big[ (u_Y - v_u(X) + v_u(X) - t_0) \one\ens{X \in E_2} \Big] \\
&= \underbrace{\mathbb{E}\Big[ (u_Y - v_u(X)) \one\ens{X \in E_1} \Big]}_{A} + \underbrace{\mathbb{E}\Big[ (v_u(X) - t_0) \one\ens{X \in E_1} \Big]}_{C} \\
&\quad - \underbrace{\mathbb{E}\Big[ (u_Y - v_u(X)) \one\ens{X \in E_2} \Big]}_{B} - \underbrace{\mathbb{E}\Big[ (v_u(X) - t_0) \one\ens{X \in E_2} \Big]}_{D}.
\end{align*}
On $E_1$, we have $v_u(X) < t_0$, which implies $v_u(X) - t_0 < 0$, so $C \le 0$.
On $E_2$, we have $v_u(X) \ge t_0$, which implies $v_u(X) - t_0 \ge 0$, so $D \ge 0$.
Thus, $\Delta R = A + C - B - D \le A - B$.
\( A - B = \mathbb{E}[ (u_Y - v_u(X)) \one\ens{X \in E_1} ] + \mathbb{E}[ (v_u(X) - u_Y) \one\ens{X \in E_2} ] \).
Since $h$ is monotone non-decreasing, the sets $E_1$ and $E_2$ correspond to $v_u(X)$ lying within specific intervals (or unions of intervals which can be decomposed). Let $I_1$ be the set of $v_u(X)$ values defining $E_1$ (e.g., $v_u(X) < t_0$ and $h(v_u(X)) \ge t_0$) and $I_2$ be the set for $E_2$. The terms $\one\ens{X \in E_1}$ and $\one\ens{X \in E_2}$ effectively restrict the expectation to regions where $v_u(X)$ falls into certain ranges.
By the definition of $\uc(f, u)$, for $E\in \ens{E_1, E_2}$
\( \mathbb{E}[ (u_Y - v_u(X)) \one\ens{X \in E} ] \le \sup_{I\in \cinv{-1}{1}} \modu{\mathbb{E}[ (u_Y - v_u(X)) \one\ens{v_u(X) \in I}]} \le \uc(f, u) \).

Therefore, $\Delta R \le A - B \le \uc(f, u) + \uc(f, u) = 2 \uc(f, u)$.
Since this holds for any monotone non-decreasing function $h$, taking the supremum over monotone functions completes the proof. 
\end{proof}

\subsection{Proof of \texorpdfstring{\Cref{prop:uc_bounds_dcu_main}}{Proof of Proposition}}
\begin{proof} The proof is the same as \cite[Lemma A.2.]{rossellini2025can}, we include it for completeness.
Assume $\uc(f,u) > 0$. Let $U_Y \coloneqq u(f(X),Y)$ denote the realized utility. Both $U_Y$ and $v_u(X)$ take values in $[-1,1]$.

Let $W \in (0, 2]$ be a chosen bin width. We partition the interval $[-1,1]$ into $K_W = \lceil 2/W \rceil$ disjoint intervals $A_1, A_2, \ldots, A_{K_W}$. These intervals are constructed as follows:
For $j = 1, \ldots, K_W-1$, let $A_j = [-1 + (j-1)W, -1 + jW)$.
For $j = K_W$, let $A_{K_W} = [-1 + (K_W-1)W, 1]$.
This construction ensures that $\bigcup_{j=1}^{K_W} A_j = [-1,1]$, and each interval $A_j$ has length $\lambda(A_j) \le W$. 

Let $\psi_W: [-1,1] \to \{1, \ldots, K_W\}$ be the function mapping a value $z \in [-1,1]$ to the index $j$ of the bin $A_j$ such that $z \in A_j$.
We construct a candidate calibrated predictor $g_W(X) \coloneqq \mathbb{E}[U_Y \mid \psi_W(v_u(X))]$.
The function $g_W(X)$ is perfectly calibrated:
$\mathbb{E}[U_Y \mid g_W(X)] = \mathbb{E}[\mathbb{E}[U_Y \mid \psi_W(v_u(X))] \mid g_W(X)]$. Since $g_W(X)$ is, by definition, measurable with respect to the sigma-algebra generated by $\psi_W(v_u(X))$, it follows from the properties of conditional expectation that $\mathbb{E}[U_Y \mid g_W(X)] = g_W(X)$ almost surely.

By the definition of $\mathrm{DCU}(f, u)$, which is the infimum of distances $\mathbb{E}|g(X) - v_u(X)|$ over all perfectly calibrated predictors $g(X)$, it holds that $\mathrm{DCU}(f, u) \leq \mathbb{E}|g_W(X) - v_u(X)|$.
To analyze the right-hand side, we introduce an intermediate term $V_{\mathrm{avgbin}}(X) \coloneqq \mathbb{E}[v_u(X) \mid \psi_W(v_u(X))]$. This represents the average of $v_u(X)$ within the bin $A_{\psi_W(v_u(X))}$ where $v_u(X)$ falls.
Using the triangle inequality: $\mathbb{E}|g_W(X) - v_u(X)| \leq \mathbb{E}|g_W(X) - V_{\mathrm{avgbin}}(X)| + \mathbb{E}|V_{\mathrm{avgbin}}(X) - v_u(X)|$.

To bound the second term, $\mathbb{E}|V_{\mathrm{avgbin}}(X) - v_u(X)|$:
For any realization $X=x$, $v_u(x)$ is a point in some bin $A_j$. $V_{\mathrm{avgbin}}(x)$ is the conditional expectation of $v_u(X')$ given that $v_u(X')$ is in $A_j$. As such, $V_{\mathrm{avgbin}}(x)$ must also lie within the convex hull of $A_j$. Thus, $|V_{\mathrm{avgbin}}(x) - v_u(x)| \leq W$.
Taking the expectation over $X$, we get $\mathbb{E}|V_{\mathrm{avgbin}}(X) - v_u(X)| \leq W$.

To bound the first term:
\begin{align*}
\mathbb{E}|g_W(X) - V_{\mathrm{avgbin}}(X)| &= \mathbb{E}\left|\mathbb{E}[U_Y \mid \psi_W(v_u(X))] - \mathbb{E}[v_u(X) \mid \psi_W(v_u(X))]\right| \\
&= \mathbb{E}\left|\mathbb{E}[U_Y - v_u(X) \mid \psi_W(v_u(X))]\right| \\
&= \sum_{j=1}^{K_W} \mathbb{P}\{\psi_W(v_u(X)) = j\} \left|\mathbb{E}[U_Y - v_u(X) \mid \psi_W(v_u(X))=j]\right|  \\
&= \sum_{j=1}^{K_W} \left|\mathbb{E}\left[(U_Y - v_u(X))\one\ens{\psi_W(v_u(X))=j}\right]\right| \\
&= \sum_{j=1}^{K_W} \left|\mathbb{E}\left[(U_Y - v_u(X))\one\ens{v_u(X) \in A_j}\right]\right|.
\end{align*}
By the definition of utility calibration error $\uc(f,u)$, for each bin $A_j$, we have:
\( \left|\mathbb{E}\left[(U_Y - v_u(X))\one\ens{v_u(X) \in A_j}\right]\right| \leq \sup_{I \in \cinv{-1}{1}} \left|\mathbb{E}\left[(U_Y - v_u(X))\one\ens{v_u(X) \in I}\right]\right| = \uc(f,u) \).
Therefore,
\( \mathbb{E}|g_W(X) - V_{\mathrm{avgbin}}(X)| \leq \sum_{j=1}^{K_W} \uc(f,u) = K_W  \uc(f,u) \).
Since $K_W = \lceil 2/W \rceil$, and for any $x>0$, $\lceil x \rceil \le x + 1$ (with strict inequality if $x$ is not an integer), we have $K_W \le 2/W + 1$.
Thus, $\mathbb{E}|g_W(X) - V_{\mathrm{avgbin}}(X)| \leq (2/W + 1)  \uc(f,u)$.

Combining the bounds for the two terms, we get:
\( \mathrm{DCU}(f, u) \leq \left(\tfrac{2}{W} + 1\right)  \uc(f,u) + W \).
This inequality holds for any chosen bin width $W \in (0, 2]$. Our goal is to select $W$ to minimize this upper bound. Set $W = W_{opt}\coloneqq \sqrt{2 \uc(f,u)}$, noting that $W_{opt}$ is in the domain $(0, 2]$ as $\uc(f,u)\leq 2$, for any $u$ and $f$.

Substituting $W_{opt} = \sqrt{2 \uc(f,u)}$ into the upper bound for $\mathrm{DCU}(f, u)$:
\begin{align*}
\mathrm{DCU}(f, u) &\leq \frac{2}{\sqrt{2 \uc(f,u)}} \uc(f,u) + \uc(f,u) + \sqrt{2 \uc(f,u)} \\
&= 2\sqrt{2 \uc(f,u)} + \uc(f,u).
\end{align*}

\end{proof}

\subsection{Proof of \texorpdfstring{\Cref{lem:estimate_uc_single}}{Proof of Lemma}}

\begin{proof}
    Let $A(X,Y) \eqdef u(f(X),Y) - v_u(X)$ and $V \eqdef v_u(X)$. Since $u(\cdot,\cdot), v_u(\cdot)\in[-1,1]$, we have $A(X,Y)\in[-2,2]$ and $V\in[-1,1]$ almost surely. For any interval $I\in \cinv{-1}{1}$, define
    \begin{gather*}
        \fcal_{\mathrm{int}} \eqdef \big\{\,h_I: (X,Y)\mapsto A(X,Y)\,\one\ens{V\in I}\ \big|\ I\in\cinv{-1}{1}\,\big\},\\
        \fcal_{\le} \eqdef \big\{\,h_t: (X,Y)\mapsto A(X,Y)\,\one\ens{V\le t}\ \big|\ t\in[-1,1]\,\big\}.  
    \end{gather*}
 
    By definition,
    \[
      \uc(f,u) \,=\, \sup_{I\in\cinv{-1}{1}} \big|\E[h_I]\big|,\qquad
      \widehat{\uc}(f,u;S) \,=\, \sup_{I\in\cinv{-1}{1}} \big|\hat{\E}_n[h_I]\big|,
    \]
    and hence
    \[
      \big|\uc(f,u)-\widehat{\uc}(f,u;S)\big| \;\le\; \sup_{I\in\cinv{-1}{1}}\big|(\E-\hat{\E}_n)h_I\big|.
    \]
    where inequality uses $|\sup a-\sup b|\le \sup|a-b|$ and $||x|-|y||\le |x-y|$. For any $a<b$ in $[-1,1]$, we have $h_{(a,b]} = h_b - h_a$ with $h_t\in\fcal_{\le}$. Hence, for any linear functional $T$,
    \begin{equation*}\label{eq:int-to-thr}
      \sup_{I\in\cinv{-1}{1}} |T(h_I)| \;\le\; 2\,\sup_{t\in[-1,1]} |T(h_t)|.
    \end{equation*}
    
    We use the following standard bounded-range uniform deviation bound: for any real-valued function class $\Hcal$ with range $[-M,M]$ and any $\delta\in(0,1)$ \citep[Theorem 3.3]{mohri2018foundations},
    \begin{equation*}\label{eq:unif-dev-bounded}
      \sup_{h\in\Hcal} \big| (\E-\hat{\E}_n)h \big| \;\le\; 2\,\rad_n(\Hcal)\; +\; M\,\sqrt{\tfrac{\log(1/\delta)}{2n}} \quad \text{w.p. }\ge 1-\delta.
    \end{equation*}

    Fix the sample $S=\{(X_i,Y_i)\}_{i=1}^n$. Let $V_i\eqdef v_u(X_i)$ and $A_i\eqdef u(f(X_i),Y_i)-v_u(X_i)$. Let $\pi$ be a permutation that sorts the $V_i$: $V_{(1)}\le\cdots\le V_{(n)}$ and set $A_{(i)}\eqdef A_{\pi(i)}$. Given $S$, the sequence $(A_{(i)})_{i=1}^n$ is deterministic. Let $(\sigma_i)_{i=1}^n$ be i.i.d.~Rademacher signs, independent of $S$, and define the filtration
    \[
      \fcal_k \eqdef \sigma(\sigma_1,\ldots,\sigma_k, S), \qquad k=0,1,\ldots,n.
    \]
    Define partial sums $S_k \eqdef \sum_{i=1}^k \sigma_i A_{(i)}$ with $S_0\eqdef 0$. Then $(S_k,\fcal_k)_{k=0}^n$ is a square-integrable martingale since
    \[
      \E\left[S_k-S_{k-1}\mid \fcal_{k-1}\right] = A_{(k)}\,\E[\sigma_k\mid \fcal_{k-1}] = 0,\qquad \E\left[S_n^2|S\right]=\sum_{i=1}^n A_{(i)}^2.
    \]
    
    We invoke Doob's $L^p$ submartingale maximal inequality, for $p>1$,
    \begin{equation*}\label{eq:doob-lp}
      \E\left[\left(\sup_{0\le k\le n} |S_k|\right)^p\right] \;\le\; \Big(\tfrac{p}{p-1}\Big)^{\!p}\, \E\left[|S_n|^p\right].
    \end{equation*}
    Taking $p=2$ yields
    \begin{equation*}\label{eq:max-L2}
      \E_\sigma\left[\sup_{0\le k\le n} |S_k|^2\right] \;\le\; 4\,\E_\sigma[S_n^2] \;=\; 4\sum_{i=1}^n A_{(i)}^2,\qquad \text{hence}\qquad \E_\sigma\left[\sup_{k} |S_k|\right] \;\le\; 2\,\Big(\sum_{i=1}^n A_{(i)}^2\Big)^{1/2}.
    \end{equation*}
    
    Then, it holds that 
    \[
      \rad_n(\fcal_{\le}\mid S) \,=\, \frac{1}{n}\,\E_\sigma\, \max_{0\le k\le n} \sum_{i=1}^k \sigma_i A_{(i)} \;\le\; \frac{1}{n}\, \E_\sigma\left[\sup_k |S_k|\right] \;\le\; \frac{2}{n}\,\Big(\sum_{i=1}^n A_{(i)}^2\Big)^{1/2} \;\le\; \frac{4}{\sqrt{n}},
    \]
    where the last inequality uses $|A_{(i)}|\le 2$. Then, it holds that 
    \begin{align*}
        \big|\uc(f,u)-\widehat{\uc}(f,u;S)\big| &\leq  \sup_{I\in\cinv{-1}{1}}\big|(\E-\hat{\E}_n)h_I\big| \\
        &\leq 2 \sup_{t\in[-1,1]} \big|(\E-\hat{\E}_n)h_t\big| \\
        &\leq 4 \rad_n(\fcal_{\le}) + 2\sqrt{\tfrac{\log(1/\delta)}{2n}}\\
        &\leq \frac{16}{\sqrt{n}} + 4\sqrt{\tfrac{\log(1/\delta)}{2n}}.
    \end{align*}    
    \end{proof}
\section{Additional Experiments}\label{app:experiments}

\Cref{app:exp_details} describes datasets, models, and calibration methods. \Cref{app:aligned_misaligned} takes a deeper look at the empirical behavior of \Cref{alg:patching_uc_weighted_cal}. To this end, it demonstrates the empirical behavior of \Cref{alg:patching_uc_weighted_cal} on two families of utility functions: aligned and misaligned. Finally, \Cref{app:results} compiles extended tables and eCDF plots across CIFAR10/100, ImageNet, and an additional text classification experiment. In addition, it introduces two additional utility function classes; one inspired by discounted cumulative gain (DCG) \citep{jarvelin2002cumulated} and the other inspired by semantic-based classification loss \citep{deng2012hedging}.

\subsection{Experimental Details}\label{app:exp_details}

\textbf{Experimental Setup:} Our experiments were performed using standard image classification benchmarks: CIFAR10, CIFAR100 \citep[MIT License]{krizhevsky2009learning}, ImageNet-1K \citep[provided for non-commercial use]{deng2009imagenet}, and the test split of Yahoo Answers Topics \citep[non-commercial use]{zhang2015character}. We used publicly available pretrained models. For each model–dataset combination, the validation data was divided into a calibration set ($70\%$) for training post-hoc methods and a test set ($30\%$) for evaluation. We repeated the experiments across $10$ different calibration/test splits and reported the average results. For CIFAR10/100 datasets, we used the model checkpoints available in \citet{chenyaofo_pytorch_cifar_models_2020}. For ImageNet-1K, we used \texttt{timm} checkpoints \citep{Wightman_PyTorch_Image_Models}. For text classification, we used the publicly available model \url{https://huggingface.co/Koushim/distilbert-yahoo-answers-topic-classifier}, provided under the Apache 2.0 license. All post-hoc methods were trained on the designated calibration set and subsequently evaluated on the test set. All experiments (including hyperparameter tuning) took approximately 120 hours on an A100 80GB NVIDIA GPU.

\textbf{Calibration Methods and Evaluation:}
We benchmarked several well-known post-hoc calibration techniques. These include Temperature Scaling (Temp. S.) \citep{platt1999probabilistic, guo2017calibration}, Vector Scaling (V.S.) \citep{kull2017beyond,guo2017calibration}, Dirichlet recalibration \citep{kull2019beyond} with off-diagonal regularization (ODIR), and Isotonic Regression (I.R.) \citep{zadrozny2002transforming}, applied both globally and in a one-vs-all manner. For these methods, we based our implementation on the publicly available code in \citep{Salvador_Calibration_Baselines_2022}. For Dirichlet recalibration, we tuned the regularization parameter over $80$ runs using \texttt{BayesSearchCV} from \texttt{scikit-optimize} \citep{tim_head_2018_1207017} with the goal of minimizing the negative log-likelihood.

\paragraph{\Cref{alg:patching_uc_weighted_cal} implementation:} Building on the algorithmic template in \Cref{alg:patching_uc_weighted_cal}, our implementation iteratively (i) audits to find the worst witness over the chosen utility class using the worst-interval estimator, and (ii) applies a masked update along that witness only for points whose predicted utility falls in the identified interval, followed by projection onto the probability simplex. For the stepsize selection, we also employ a backtracking Armijo line search, using JAXOPT \citep{jaxopt_implicit_diff}, to choose a stepsize that decreases the Brier score. The implementation minimizes the utility calibration against the union class $\ucal = \ucal_{\cwe}\cup \ucal_{\topk}$, as defined in \Cref{sec:utility_calibration}. In addition, at each step, we sampled $264$ utility functions from $\ucal_\rank$ and $\ucal_\lin$ to augment the set of utility functions. Hyperparameter tuning was performed using \texttt{scikit-optimize} \citep{tim_head_2018_1207017} over the stepsize, the number of steps, and whether to sample additional utility functions from $\ucal_\rank$ and $\ucal_\lin$. For hyperparameter tuning, we performed $20$ runs.

\textbf{Evaluation.} Model performance and calibration were assessed using a suite of metrics. Standard evaluations included Accuracy and Brier score. For binned binarized approaches, we compute $\mathrm{TCE}^{\mathrm{bin}}$ (\ref{eq:tce}) and $\mathrm{CWE}^{\mathrm{bin}}$ (\ref{eq:cwce}), using $15$ equal-weight bins (each bin has the same number of datapoints). Furthermore, we evaluated our proposed binning-free utility calibration metrics for specific utility classes: $\ucal_{\tce}$, $\ucal_{\cwe}$, and $\ucal_{\topk}$. In addition, we computed other utility calibration metrics for specific utility classes described in \Cref{app:results}.

\subsection{\Cref{alg:patching_uc_weighted_cal} for Aligned and Misaligned Utility Classes}\label{app:aligned_misaligned}

To better understand the dynamics of calibrating against different utility structures, this set of experiments focuses on the convergence speed (and final performance) of \Cref{alg:patching_uc_weighted_cal} when faced with ``aligned'' versus ``misaligned'' non-linear utility classes. 

\noindent\textbf{Utility model.} Let $R\in[0,1]^{C\times C}$ be a user-defined gain matrix with $R_{ii}=1$, where $R_{ij}$ is the gain for predicting class $j$ when the true class is $i$. For a prediction $p=f(X)$, a rational agent may choose
\[
j^\star(p)\;=\;\argmax_{j\in[C]}\ \sum_{i=1}^C p_i\,R_{ij}.
\]
As we consider the utility function
\[
u_R(p,\,y=e_i)\;=\;R_{i,\,j^\star(p)}\,.
\]
Consistent with our framework, we use the associated regressor
\(
v_{u_R}(X)\coloneqq \sum_{i=1}^C f(X)_i\,R_{i,\,j^\star(f(X))}
\).

We investigate two utility classes to simulate two scenarios: one with all users having the same preferences (aligned) and the other with users having distinct preferences (misaligned).
\begin{itemize}
\item Aligned class: models users with a low tolerance for any error. We set $R_{ii}=1$ and sample all off-diagonal entries ($i\neq j$) i.i.d.\ from $\mathrm{Unif}(0,0.1)$.
\item Misaligned class: models a mix of specialists with distinct biases. We partition the labels into disjoint subsets $(K_1,K_2,\dots)$. For a user specializing in $K_m$, we set $R_{ii}=1$, $R_{ij}=0.2$ for all $j\in K_m$ with $i\neq j$, and all other off-diagonals to $0$. The utility class is a mixture over such specialists.
\end{itemize}

\noindent\textbf{Evaluation.} We illustrate the performance of \Cref{alg:patching_uc_weighted_cal} on these classes on CIFAR100 and ImageNet-1K, using ViT and ResNet56 checkpoints. For each utility class, we sampled $512$ utility functions per iteration. For each dataset, we track (i) the trajectory of the utility calibration error across iterations of \Cref{alg:patching_uc_weighted_cal} for both aligned and misaligned classes (see \Cref{fig:app_cifar100_history}), and (ii) the distribution of per-example utility calibration errors at selected iterations to visualize distributional shifts across the class (see \Cref{fig:app_aligned_misaligned_overview}). In summary, we provide three panels for each dataset: the iteration curve and two distributional grids (aligned and misaligned).

\begin{figure}[htbp]
  \begin{minipage}{0.49\linewidth}
    \centering
    \includegraphics[width=\linewidth]{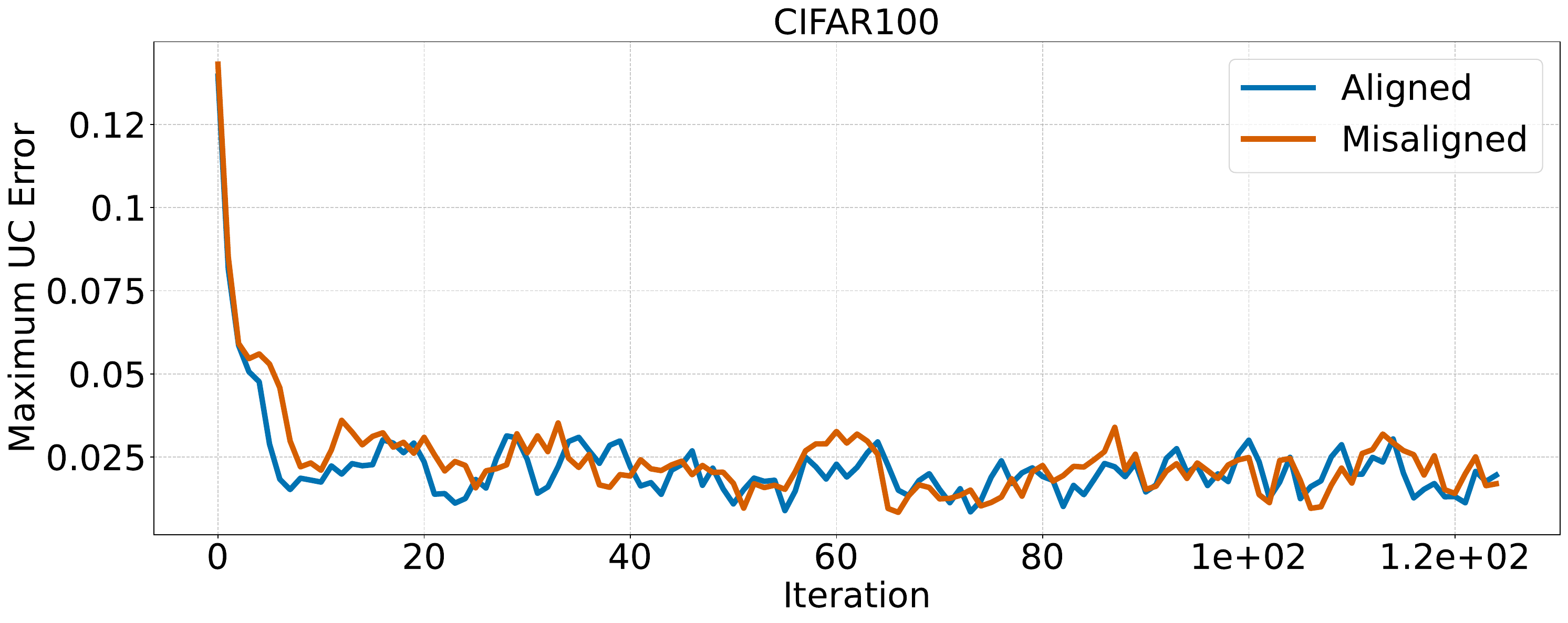}
  \end{minipage}\hfill
  \begin{minipage}{0.49\linewidth}
    \centering
    \includegraphics[width=\linewidth]{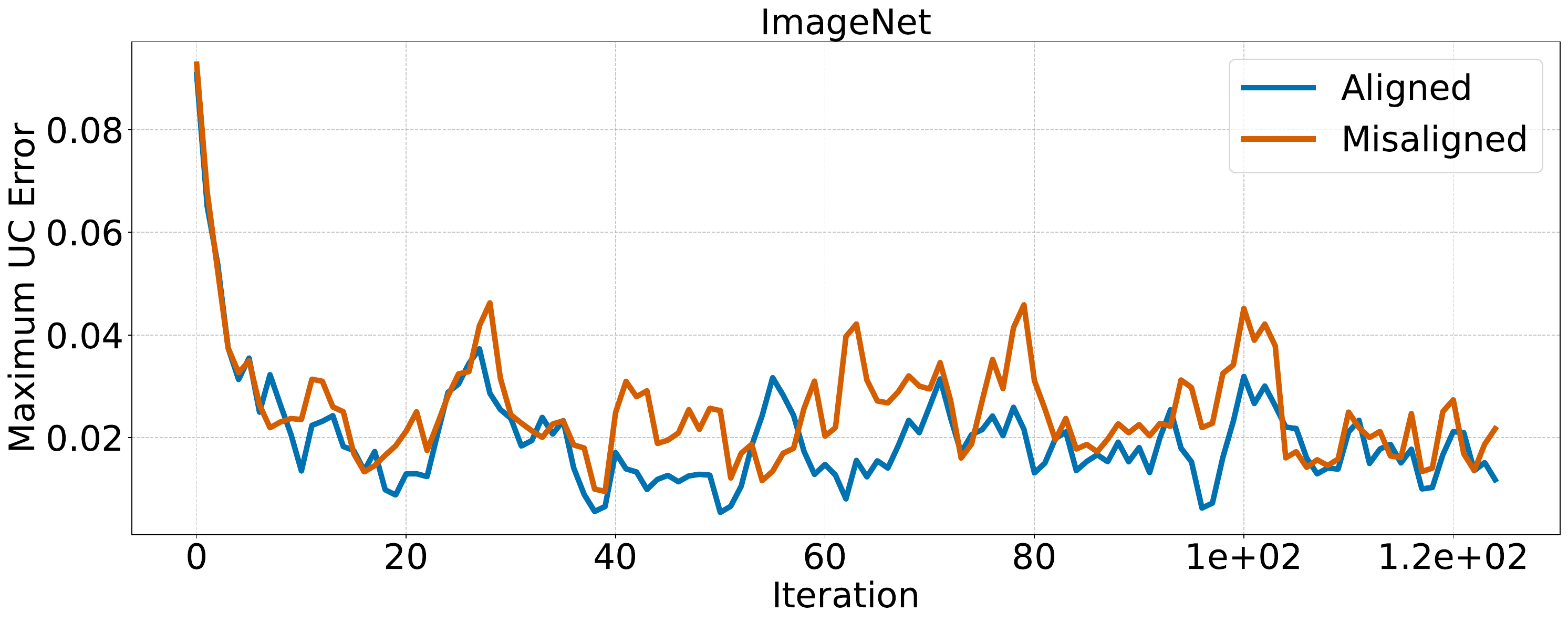}
  \end{minipage}
  \caption{Utility calibration error across iterations for CIFAR100 (left) and ImageNet-1K (right).}
  \label{fig:app_cifar100_history}
\end{figure}

\begin{figure}[htbp]
  \centering
  \vspace{0.6em}
  \includegraphics[width=0.8\linewidth]{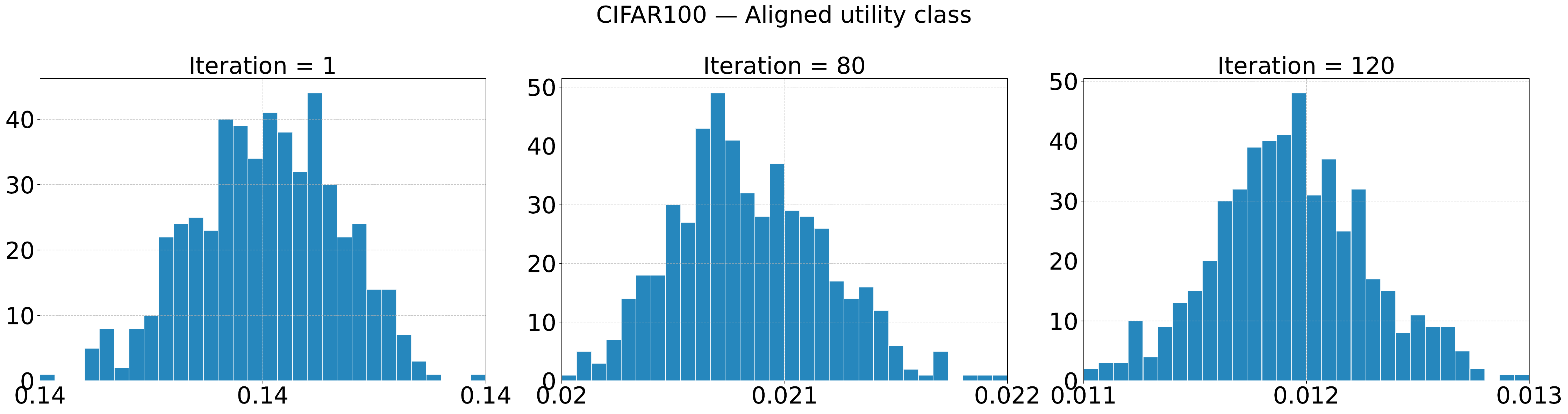}\\[0.25em]
  \includegraphics[width=0.8\linewidth]{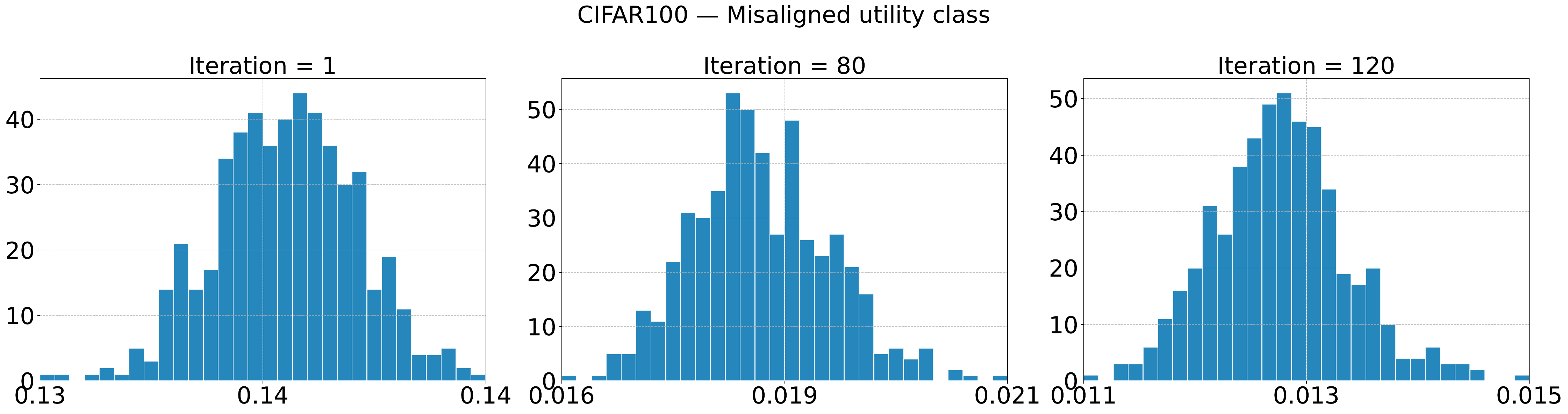}
  \includegraphics[width=0.8\linewidth]{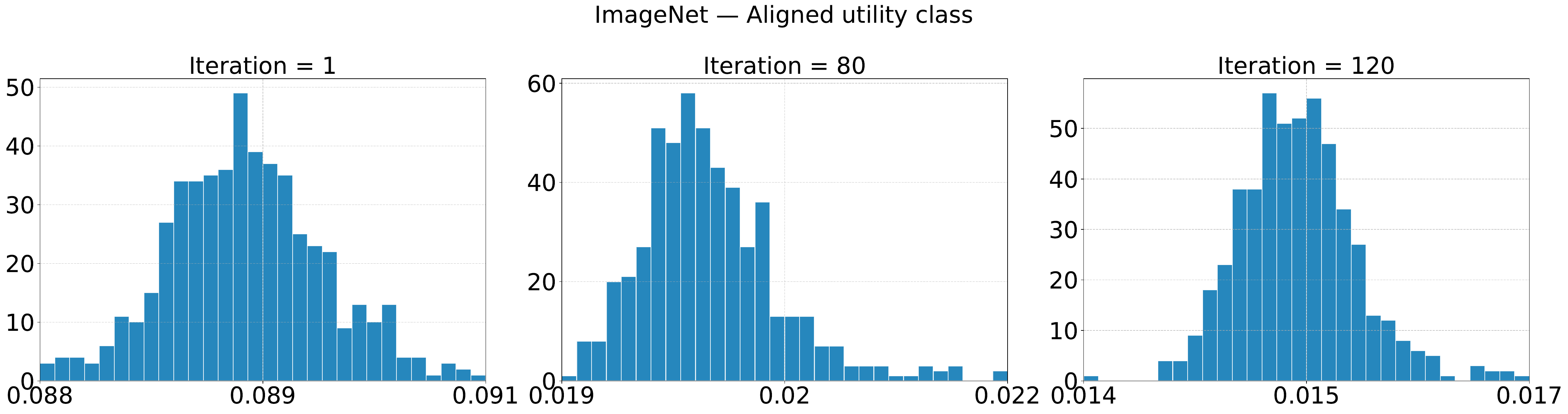}\\[0.25em]
  \includegraphics[width=0.8\linewidth]{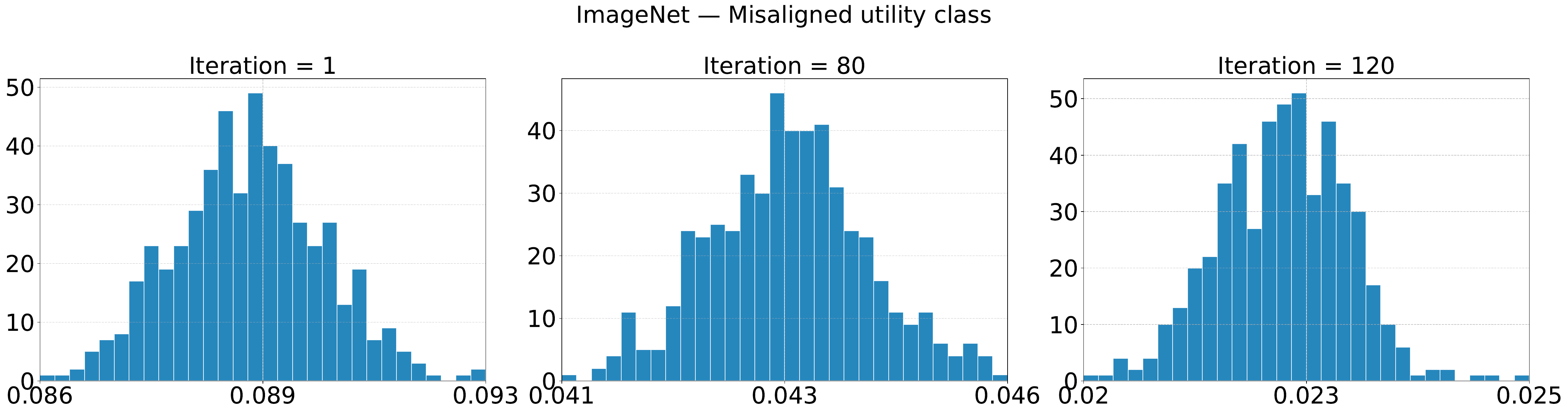}
  \caption{Aligned vs. misaligned utilities across datasets. Top row: utility calibration histories (left: CIFAR100, right: ImageNet-1K). Then: CIFAR100 distribution snapshots at selected iterations (aligned then misaligned), followed by ImageNet-1K distribution snapshots at selected iterations (aligned then misaligned).}
  \label{fig:app_aligned_misaligned_overview}
\end{figure}

\noindent\textbf{Interpretation.}
Across both datasets and both utility families (aligned and misaligned), the worst-case sampled utility calibration error initially drops and then stabilizes. Nonetheless,  utility calibration error distributions shift toward smaller values over iterations. Both aligned and misaligned utility classes displayed similar trends.
 
\subsection{Extended Results and Additional Modalities}\label{app:results}

We begin by introducing two additional utility families below:

\paragraph{Discounted cumulative gain (DCG).} We introduce the following discounted cumulative gain utility class, inspired by graded relevance evaluation \citep{jarvelin2002cumulated}. We note that this class highly resembles rank-based and $\mathrm{topK}$ utilities in \Cref{ex:util_rank}.
Let $\gamma\in \rset_+$ and define the discount sequence $\theta^{(\gamma)}_r=(\log_2(1+r))^{-\gamma}$ for ranks $r\in[C]$. For a prediction $p=f(X)$ and outcome $Y=e_j$, the realized gain is $u_\gamma(p,e_j)=\theta^{(\gamma)}_{\rank(p,j)}$ with associated regressor $v_{u_\gamma}(X)=\sum_i f(X)_i\,\theta^{(\gamma)}_{\rank(f(X),i)}$. Finally, for simplicity of evaluation, we used a fixed grid for $\gamma$ and thus define the class $\ucal_{\mathrm{dcg}} \coloneqq \{u_\gamma\mid\gamma \in\ens{0.5, 0.75, 1, 1.25, 1.5, 2}\}$. 

\paragraph{Semantic similarity utilities.} For ImageNet, we construct class embeddings $E\in\mathbb{R}^{C\times d}$ using the pre-trained Google News Word2Vec model (\texttt{word2vec-google-news-300} loaded via \texttt{gensim} ) \citep{mikolov2013efficient,rehurek2011gensim}. Class names are tokenized by lowercasing, replacing underscores/hyphens with spaces, and keeping alphanumeric tokens. For class $i$, a class embedding $E_i$ is the average of in-vocabulary token vectors. We compute the cosine similarity matrix $S\in[-1,1]^{C\times C}$ such that $S_{i,j}=1$ if $i=j$ and $S_{i,j}$ is the cosine similarity between $E_i$ and $E_j$. Finally, we define the utility by expected similarity: $u_{\mathrm{w2v}}(p,e_j)=\sum_i p_i\,S[i,j]$. We denote this single-utility class by $\ucal_{\mathrm{w2v}}$.

The results are presented in tables and figures as follows. We note two general observations that illustrate the usefulness of our evaluation framework. (1) The order of performance among different post-hoc methods varies with respect to the evaluated utility class, stressing the need to account for downstream usage when evaluating calibration. (2) While post-hoc methods generally improve various metrics, eCDF plots reveal additional information. For instance, in multiple settings, post-hoc methods actively worsened (shifted to the right) the distribution of utility calibration errors. 
\begin{center}
\begin{tabular}{lllll}
\toprule
Metrics & \multicolumn{4}{c}{Results given in} \\
 & CIFAR10 & CIFAR100 & ImageNet & Yahoo \\
\midrule
   $\mathrm{TCE}^{\mathrm{bin}}$,  $\mathrm{CWE}^{\mathrm{bin}}$,  $\ucal_{\cwe}$,  $\ucal_{\topk}$, $\ucal_{\mathrm{dcg}}$, $\ucal_{\mathrm{w2v}}$  & \Cref{tab:cifar10_repvgg,tab:cifar10_resnet} &\Cref{tab:cifar100_repvgg,tab:cifar100_resnet} &\Cref{tab:imagenet_resnet,tab:imagenet_vit} &\Cref{tab:yahoo} \\
  $\ucal_{\lin}$ and  $\ucal_{\rank}$   &  \Cref{fig:app_cifar10_curves}& \Cref{fig:app_cifar100_curves}& \Cref{fig:app_imagenet_curves} & \Cref{fig:app_yahoo_curves} \\
  \bottomrule
\end{tabular}
\end{center}

\begin{table*}[t]
  \caption{{\texttt{RepVGG-CIFAR10} calibration results, comparing post-hoc methods using Accuracy, Brier Score, binned ECEs ($\mathrm{TCE}^{\mathrm{Bin}}$, $\cwe^{\mathrm{Bin}}$), and utility calibration errors: $\ucal_\cwe$, $\ucal_{\topk}$, and $\ucal_{\mathrm{dcg}}$, with mean $\pm$ std.}}
  \label{tab:cifar10_repvgg}
  \centering
  \small
  \resizebox{\linewidth}{!}{
\begin{tabular}{@{}l@{\hspace{0.5em}}@{}c@{\hspace{0.5em}}@{}c@{\hspace{0.5em}}@{}c@{\hspace{0.5em}}@{}c@{\hspace{0.5em}}@{}c@{\hspace{0.5em}}@{}c@{\hspace{0.5em}}@{}c@{}}
    \toprule
    Method & Accuracy ($\times 10^{2}$) & Brier Score ($\times 10^{2}$) & $\mathrm{CWE}^{\mathrm{bin}}$ ($\times 10^{4}$) & $\mathrm{TCE}^{\mathrm{bin}}$ ($\times 10^{3}$) & $\mathcal{U}_{\mathrm{CWE}}$ ($\times 10^{4}$) & $\mathcal{U}_{\topk}$ ($\times 10^{3}$) & $\mathcal{U}_{\mathrm{dcg}}$ ($\times 10^{3}$) \\
    \midrule
    Uncalibrated & $94.5 \pm 0.2$ & $8.9 \pm 0.3$ & $39.80 \pm 5$ & $37.40 \pm 4$ & $108.00 \pm 4$ & $38.2 \pm 2$ & $24.9 \pm 1$ \\
    Dirichlet & $94.6 \pm 0.2$ & $8.0 \pm 0.2$ & $50.40 \pm 2$ & $20.20 \pm 0.9$ & $89.90 \pm 4$ & $18.5 \pm 1$ & $12.8 \pm 1$ \\
    IR & $94.6 \pm 0.2$ & $8.2 \pm 0.2$ & $24.80 \pm 0.9$ & $8.36 \pm 1$ & $76.90 \pm 4$ & $15.4 \pm 0.6$ & $10.5 \pm 0.5$ \\
    Temp. Scaling & $94.5 \pm 0.2$ & $8.1 \pm 0.2$ & $59.70 \pm 2$ & $21.30 \pm 0.8$ & $89.70 \pm 5$ & $22.8 \pm 1$ & $14.0 \pm 1.0$ \\
    Vector Scaling & $94.7 \pm 0.2$ & $8.0 \pm 0.2$ & $47.10 \pm 2$ & $15.80 \pm 1$ & $95.30 \pm 5$ & $16.1 \pm 0.9$ & $9.5 \pm 0.8$ \\
    Patching & $94.6 \pm 0.2$ & $8.3 \pm 0.2$ & $40.80 \pm 2$ & $11.10 \pm 2$ & $80.50 \pm 4$ & $12.7 \pm 0.8$ & $6.4 \pm 1$ \\
    \bottomrule
\end{tabular}
  }
\end{table*}

\begin{table*}[t]
  \caption{{\texttt{ResNet-CIFAR10} calibration results, comparing post-hoc methods using Accuracy, Brier Score, binned ECEs ($\mathrm{TCE}^{\mathrm{Bin}}$, $\cwe^{\mathrm{Bin}}$), and utility calibration errors: $\ucal_\cwe$, $\ucal_{\topk}$, and $\ucal_{\mathrm{dcg}}$, with mean $\pm$ std.}}
  \label{tab:cifar10_resnet}
  \centering
  \small
  \resizebox{\linewidth}{!}{
\begin{tabular}{@{}l@{\hspace{0.5em}}@{}c@{\hspace{0.5em}}@{}c@{\hspace{0.5em}}@{}c@{\hspace{0.5em}}@{}c@{\hspace{0.5em}}@{}c@{\hspace{0.5em}}@{}c@{\hspace{0.5em}}@{}c@{}}
    \toprule
    Method & Accuracy ($\times 10^{2}$) & Brier Score ($\times 10^{2}$) & $\mathrm{CWE}^{\mathrm{bin}}$ ($\times 10^{4}$) & $\mathrm{TCE}^{\mathrm{bin}}$ ($\times 10^{3}$) & $\mathcal{U}_{\mathrm{CWE}}$ ($\times 10^{4}$) & $\mathcal{U}_{\topk}$ ($\times 10^{3}$) & $\mathcal{U}_{\mathrm{dcg}}$ ($\times 10^{3}$) \\
    \midrule
    Uncalibrated & $93.9 \pm 0.2$ & $10.1 \pm 0.3$ & $96.40 \pm 8$  & $41.60 \pm 3$ & $116.00 \pm 6$ & $42.1 \pm 1$ & $27.9 \pm 1$ \\
    Dirichlet & $94.0 \pm 0.1$ & $9.2 \pm 0.2$ & $42.60 \pm 2$ & $14.20 \pm 0.9$ & $85.80 \pm 6$ & $14.2 \pm 1$ & $8.4 \pm 1$ \\
    IR & $94.0 \pm 0.2$ & $9.3 \pm 0.2$ & $29.50 \pm 1$ & $10.80 \pm 1$ & $77.90 \pm 5$ & $16.2 \pm 0.2$ & $11.8 \pm 0.5$ \\
    Temp. Scaling & $93.9 \pm 0.2$ & $9.3 \pm 0.3$ & $55.50 \pm 2$ & $14.70 \pm 1$ & $92.90 \pm 5$ & $17.1 \pm 1.0$ & $9.2 \pm 1$ \\
    Vector Scaling & $94.0 \pm 0.2$ & $9.2 \pm 0.2$ & $42.40 \pm 2$ & $11.90 \pm 0.8$ & $84.70 \pm 4$ & $13.1 \pm 0.5$ & $6.3 \pm 0.8$ \\
    Patching & $94.0 \pm 0.2$ & $9.5 \pm 0.2$ & $42.80 \pm 6$ & $12.30 \pm 3$ & $84.00 \pm 8$ & $12.8 \pm 1$ & $6.0 \pm 1$ \\
    \bottomrule
\end{tabular}
  }
\end{table*}

\begin{table*}[t]
  \caption{{\texttt{RepVGG-CIFAR100} calibration results, comparing post-hoc methods using Accuracy, Brier Score, binned ECEs ($\mathrm{TCE}^{\mathrm{Bin}}$, $\cwe^{\mathrm{Bin}}$), and utility calibration errors: $\ucal_\cwe$ and $\ucal_{\topk}$, with mean $\pm$ std.}}
  \label{tab:cifar100_repvgg}
  \centering
  \small
  \resizebox{\linewidth}{!}{
\begin{tabular}{@{}l@{\hspace{0.5em}}@{}c@{\hspace{0.5em}}@{}c@{\hspace{0.5em}}@{}c@{\hspace{0.5em}}@{}c@{\hspace{0.5em}}@{}c@{\hspace{0.5em}}@{}c@{}}
    \toprule
    Method & Accuracy ($\times 10^{2}$) & Brier Score ($\times 10^{2}$) & $\mathrm{CWE}^{\mathrm{bin}}$ ($\times 10^{4}$) & $\mathrm{TCE}^{\mathrm{bin}}$ ($\times 10^{3}$) & $\mathcal{U}_{\mathrm{CWE}}$ ($\times 10^{4}$) & $\mathcal{U}_{\topk}$ ($\times 10^{3}$) \\
    \midrule
    Uncalibrated & $77.0 \pm 0.4$ & $33.2 \pm 0.3$ & $15.60 \pm 0.06$ & $62.50 \pm 6$ & $76.20 \pm 2$ & $56.0 \pm 0.6$ \\
    Dirichlet & $76.2 \pm 0.6$ & $33.6 \pm 0.3$ & $17.40 \pm 0.9$ & $52.80 \pm 2$ & $78.00 \pm 4$ & $51.1 \pm 1$ \\
    IR & $76.4 \pm 0.3$ & $33.5 \pm 0.5$ & $14.80 \pm 0.5$ & $32.60 \pm 4$ & $66.20 \pm 3$ & $49.5 \pm 2$ \\
    Temp. Scaling & $77.0 \pm 0.4$ & $33.0 \pm 0.3$ & $17.20 \pm 0.4$ & $49.30 \pm 6$ & $76.00 \pm 2$ & $65.9 \pm 2$ \\
    Vector Scaling & $76.5 \pm 0.5$ & $33.7 \pm 0.3$ & $18.20 \pm 0.9$ & $53.30 \pm 1$ & $75.00 \pm 3$ & $51.2 \pm 2$ \\
    Patching & $76.8 \pm 0.6$ & $32.5 \pm 0.3$ & $18.70 \pm 0.3$ & $24.30 \pm 1$ & $67.10 \pm 3$ & $23.7 \pm 4$ \\
    \bottomrule
\end{tabular}
  }
\end{table*}

\begin{table*}[t]
  \caption{{\texttt{ResNet-CIFAR100} calibration results, comparing post-hoc methods using Accuracy, Brier Score, binned ECEs ($\mathrm{TCE}^{\mathrm{Bin}}$, $\cwe^{\mathrm{Bin}}$), and utility calibration errors: $\ucal_\cwe$, $\ucal_{\topk}$, and $\ucal_{\mathrm{dcg}}$, with mean $\pm$ std.}}
  \label{tab:cifar100_resnet}
  \centering
  \small
  \resizebox{\linewidth}{!}{
\begin{tabular}{@{}l@{\hspace{0.5em}}@{}c@{\hspace{0.5em}}@{}c@{\hspace{0.5em}}@{}c@{\hspace{0.5em}}@{}c@{\hspace{0.5em}}@{}c@{\hspace{0.5em}}@{}c@{\hspace{0.5em}}@{}c@{}}
    \toprule
    Method & Accuracy ($\times 10^{2}$) & Brier Score ($\times 10^{2}$) & $\mathrm{CWE}^{\mathrm{bin}}$ ($\times 10^{4}$) & $\mathrm{TCE}^{\mathrm{bin}}$ ($\times 10^{3}$) & $\mathcal{U}_{\mathrm{CWE}}$ ($\times 10^{4}$) & $\mathcal{U}_{\topk}$ ($\times 10^{3}$) & $\mathcal{U}_{\mathrm{dcg}}$ ($\times 10^{3}$) \\
    \midrule
    Uncalibrated & $72.4 \pm 0.4$ & $42.3 \pm 0.6$ & $14.60 \pm 0.8$ & $144.00 \pm 7$ & $79.50 \pm 2$ & $144.0 \pm 3$ & $117.0 \pm 2$ \\
    Dirichlet & $70.4 \pm 0.4$ & $41.0 \pm 0.3$ & $19.10 \pm 0.4$ & $53.70 \pm 4$ & $74.00 \pm 2$ & $61.2 \pm 3$ & $50.4 \pm 3$ \\
    IR & $71.8 \pm 0.3$ & $39.8 \pm 0.5$ & $15.30 \pm 0.4$ & $37.70 \pm 2$ & $70.70 \pm 2$ & $71.2 \pm 2$ & $51.1 \pm 2$ \\
    Temp. Scaling & $72.4 \pm 0.4$ & $38.7 \pm 0.5$ & $15.00 \pm 0.2$ & $31.60 \pm 3$ & $67.20 \pm 2$ & $29.5 \pm 2$ & $15.8 \pm 2$ \\
    Vector Scaling & $71.8 \pm 0.4$ & $39.2 \pm 0.4$ & $17.80 \pm 0.4$ & $35.60 \pm 3$ & $76.00 \pm 3$ & $36.1 \pm 3$ & $22.6 \pm 3$ \\
    Patching & $71.6 \pm 0.4$ & $38.7 \pm 0.5$ & $21.80 \pm 0.7$ & $25.60 \pm 5$ & $69.30 \pm 3$ & $26.2 \pm 3$ & $11.9 \pm 3$ \\
    \bottomrule
\end{tabular}
  }
\end{table*}

\begin{table*}[t]
  \caption{{\texttt{ResNet-ImageNet} calibration results, comparing post-hoc methods using Accuracy, Brier Score, binned ECEs ($\mathrm{TCE}^{\mathrm{Bin}}$, $\cwe^{\mathrm{Bin}}$), and utility calibration errors: $\ucal_\cwe$, $\ucal_{\topk}$, $\ucal_{\mathrm{dcg}}$, and $\ucal_{\mathrm{w2v}}$, with mean $\pm$ std.}}
  \label{tab:imagenet_resnet}
  \centering
  \small
  \resizebox{\linewidth}{!}{
\begin{tabular}{@{}l@{\hspace{0.5em}}@{}c@{\hspace{0.5em}}@{}c@{\hspace{0.5em}}@{}c@{\hspace{0.5em}}@{}c@{\hspace{0.5em}}@{}c@{\hspace{0.5em}}@{}c@{\hspace{0.5em}}@{}c@{\hspace{0.5em}}@{}c@{}}
    \toprule
    Method & Accuracy ($\times 10^{2}$) & Brier Score ($\times 10^{2}$) & $\mathrm{CWE}^{\mathrm{bin}}$ ($\times 10^{4}$) & $\mathrm{TCE}^{\mathrm{bin}}$ ($\times 10^{3}$) & $\mathcal{U}_{\mathrm{CWE}}$ ($\times 10^{4}$) & $\mathcal{U}_{\topk}$ ($\times 10^{3}$) & $\mathcal{U}_{\mathrm{dcg}}$ ($\times 10^{3}$) & $\mathcal{U}_{\mathrm{w2v}}$ ($\times 10^{3}$) \\
    \midrule
    Uncalibrated & $80.5 \pm 0.1$ & $29.5 \pm 0.2$ & $1.42 \pm 0.04$ & $86.20 \pm 2$ & $31.00 \pm 0.6$ & $89.1 \pm 2$ & $62.2 \pm 0.8$ & $42.6 \pm 1.0$ \\
    Dirichlet & $1.0 \pm 0.4$ & $117.0 \pm 6$ & $20.20 \pm 0.4$ & $195.00 \pm 57$ & $1860.00 \pm 580$ & $990.0 \pm 3$ & $429.0 \pm 59$ & $147.0 \pm 43$ \\
    IR & $80.1 \pm 0.1$ & $29.5 \pm 0.2$ & $1.47 \pm 0.01$ & $37.10 \pm 1$ & $30.50 \pm 0.6$ & $65.2 \pm 1$ & $47.3 \pm 0.6$ & $37.9 \pm 0.9$ \\
    Temp. Scaling & $80.5 \pm 0.1$ & $28.5 \pm 0.2$ & $1.69 \pm 0.02$ & $45.50 \pm 1$ & $31.50 \pm 0.7$ & $59.3 \pm 1$ & $24.7 \pm 0.7$ & $16.2 \pm 0.4$ \\
    Vector Scaling & $79.7 \pm 0.1$ & $30.5 \pm 0.2$ & $1.89 \pm 0.01$ & $67.10 \pm 2$ & $31.60 \pm 0.8$ & $67.1 \pm 2$ & $44.4 \pm 1$ & $32.2 \pm 0.9$ \\
    Patching & $80.1 \pm 0.3$ & $28.0 \pm 0.3$ & $2.02 \pm 0.09$ & $13.20 \pm 3$ & $31.30 \pm 0.7$ & $25.9 \pm 4$ & $8.4 \pm 4$ & $15.1 \pm 3$ \\
    \bottomrule
\end{tabular}
  }
\end{table*}

\begin{table*}[t]
  \caption{{\texttt{ViT-ImageNet} calibration results, comparing post-hoc methods using Accuracy, Brier Score, binned ECEs ($\mathrm{TCE}^{\mathrm{Bin}}$, $\cwe^{\mathrm{Bin}}$), and utility calibration errors: $\ucal_\cwe$, $\ucal_{\topk}$, $\ucal_{\mathrm{dcg}}$, and $\ucal_{\mathrm{w2v}}$, with mean $\pm$ std.}}
  \label{tab:imagenet_vit}
  \centering
  \small
  \resizebox{\linewidth}{!}{
\begin{tabular}{@{}l@{\hspace{0.5em}}@{}c@{\hspace{0.5em}}@{}c@{\hspace{0.5em}}@{}c@{\hspace{0.5em}}@{}c@{\hspace{0.5em}}@{}c@{\hspace{0.5em}}@{}c@{\hspace{0.5em}}@{}c@{\hspace{0.5em}}@{}c@{}}
    \toprule
    Method & Accuracy ($\times 10^{2}$) & Brier Score ($\times 10^{2}$) & $\mathrm{CWE}^{\mathrm{bin}}$ ($\times 10^{4}$) & $\mathrm{TCE}^{\mathrm{bin}}$ ($\times 10^{3}$) & $\mathcal{U}_{\mathrm{CWE}}$ ($\times 10^{4}$) & $\mathcal{U}_{\topk}$ ($\times 10^{3}$) & $\mathcal{U}_{\mathrm{dcg}}$ ($\times 10^{3}$) & $\mathcal{U}_{\mathrm{w2v}}$ ($\times 10^{3}$) \\
    \midrule
    Uncalibrated & $85.2 \pm 0.1$ & $22.5 \pm 0.1$ & $2.47 \pm 0.01$ & $94.50 \pm 0.8$ & $29.00 \pm 0.7$ & $124.0 \pm 0.5$ & $101.0 \pm 0.6$ & $69.2 \pm 0.7$ \\
    Dirichlet & $85.5 \pm 0.10$ & $21.3 \pm 0.1$ & $1.35 \pm 0.02$ & $13.80 \pm 0.6$ & $27.00 \pm 0.5$ & $26.1 \pm 0.7$ & $6.4 \pm 0.4$ & $7.9 \pm 0.5$ \\
    IR & $84.9 \pm 0.1$ & $22.8 \pm 0.2$ & $1.12 \pm 0.01$ & $31.70 \pm 0.8$ & $27.40 \pm 0.5$ & $54.1 \pm 0.9$ & $41.6 \pm 0.5$ & $36.6 \pm 1$ \\
    Temp. Scaling & $85.2 \pm 0.1$ & $21.6 \pm 0.1$ & $1.50 \pm 0.009$ & $21.20 \pm 0.6$ & $28.10 \pm 0.6$ & $45.2 \pm 0.5$ & $20.8 \pm 0.6$ & $17.8 \pm 0.7$ \\
    Vector Scaling & $84.8 \pm 0.1$ & $22.7 \pm 0.2$ & $1.55 \pm 0.02$ & $34.60 \pm 1$ & $28.30 \pm 0.3$ & $37.4 \pm 2$ & $21.9 \pm 1$ & $12.7 \pm 0.8$ \\
    Patching & $85.2 \pm 0.1$ & $21.6 \pm 0.1$ & $1.56 \pm 0.03$ & $11.10 \pm 0.7$ & $29.60 \pm 2$ & $22.1 \pm 2$ & $5.5 \pm 0.5$ & $11.9 \pm 1.0$ \\
    \bottomrule
\end{tabular}
  }
\end{table*}

\begin{table*}[t]
  \caption{{Yahoo calibration results, comparing post-hoc methods using Accuracy, Brier Score, binned ECEs ($\mathrm{TCE}^{\mathrm{Bin}}$, $\cwe^{\mathrm{Bin}}$), and utility calibration errors: $\ucal_\cwe$, $\ucal_{\topk}$, and $\ucal_{\mathrm{dcg}}$, with mean $\pm$ std.}}
  \label{tab:yahoo}
  \centering
  \small
  \resizebox{\linewidth}{!}{
\begin{tabular}{@{}l@{\hspace{0.5em}}@{}c@{\hspace{0.5em}}@{}c@{\hspace{0.5em}}@{}c@{\hspace{0.5em}}@{}c@{\hspace{0.5em}}@{}c@{\hspace{0.5em}}@{}c@{\hspace{0.5em}}@{}c@{}}
    \toprule
    Method & Accuracy ($\times 10^{2}$) & Brier Score ($\times 10^{2}$) & $\mathrm{CWE}^{\mathrm{bin}}$ ($\times 10^{4}$) & $\mathrm{TCE}^{\mathrm{bin}}$ ($\times 10^{3}$) & $\mathcal{U}_{\mathrm{CWE}}$ ($\times 10^{4}$) & $\mathcal{U}_{\topk}$ ($\times 10^{3}$) & $\mathcal{U}_{\mathrm{dcg}}$ ($\times 10^{3}$) \\
    \midrule
    Uncalibrated & $72.3 \pm 0.2$ & $39.2 \pm 0.2$ & $102.00 \pm 4$ & $31.40 \pm 2$ & $211.00 \pm 7$ & $37.4 \pm 4$ & $20.1 \pm 0.7$ \\
    Dirichlet & $72.8 \pm 0.2$ & $38.5 \pm 0.2$ & $69.30 \pm 2$ & $15.10 \pm 1$ & $158.00 \pm 4$ & $33.6 \pm 2$ & $12.2 \pm 2$ \\
    IR & $72.7 \pm 0.2$ & $38.7 \pm 0.2$ & $38.90 \pm 2$ & $11.00 \pm 0.6$ & $150.00 \pm 5$ & $30.8 \pm 0.7$ & $11.9 \pm 0.4$ \\
    Temp. Scaling & $72.3 \pm 0.2$ & $39.0 \pm 0.2$ & $103.00 \pm 0.7$ & $14.00 \pm 1.0$ & $213.00 \pm 6$ & $31.4 \pm 2$ & $9.5 \pm 1$ \\
    Vector Scaling & $72.7 \pm 0.2$ & $38.7 \pm 0.2$ & $76.50 \pm 2$ & $13.90 \pm 0.8$ & $169.00 \pm 7$ & $31.2 \pm 2$ & $9.1 \pm 1$ \\
    Patching & $72.7 \pm 0.2$ & $38.9 \pm 0.2$ & $73.90 \pm 5$ & $12.00 \pm 2$ & $172.00 \pm 7$ & $23.5 \pm 2$ & $5.3 \pm 1$ \\
    \bottomrule
\end{tabular}
  }
\end{table*}

\begin{figure}[htbp]
  \centering

  \begin{subfigure}{\linewidth}
      \centering
      \includegraphics[width=0.9\linewidth]{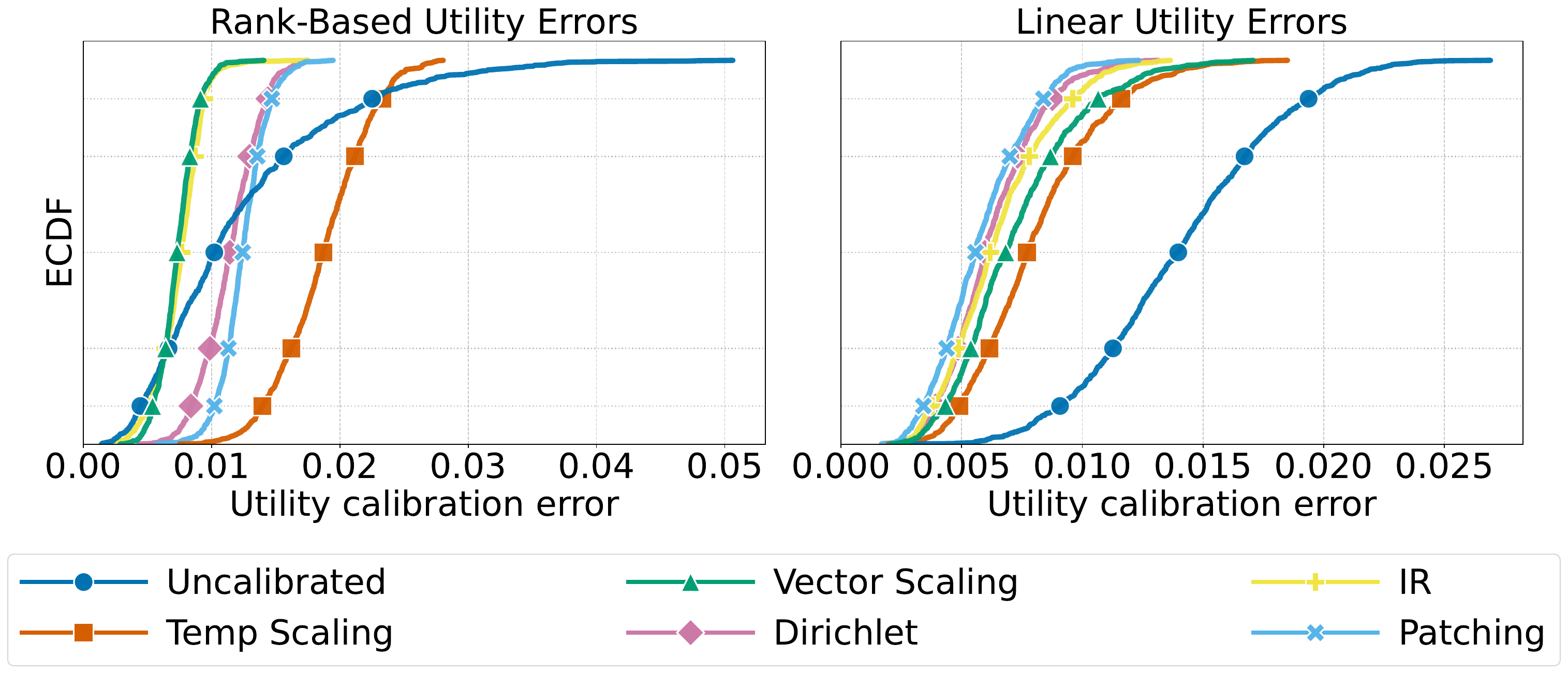}
      \caption*{\texttt{RepVGG-CIFAR10}}
  \end{subfigure}

  \begin{subfigure}{\linewidth}
      \centering
      \includegraphics[width=0.9\linewidth]{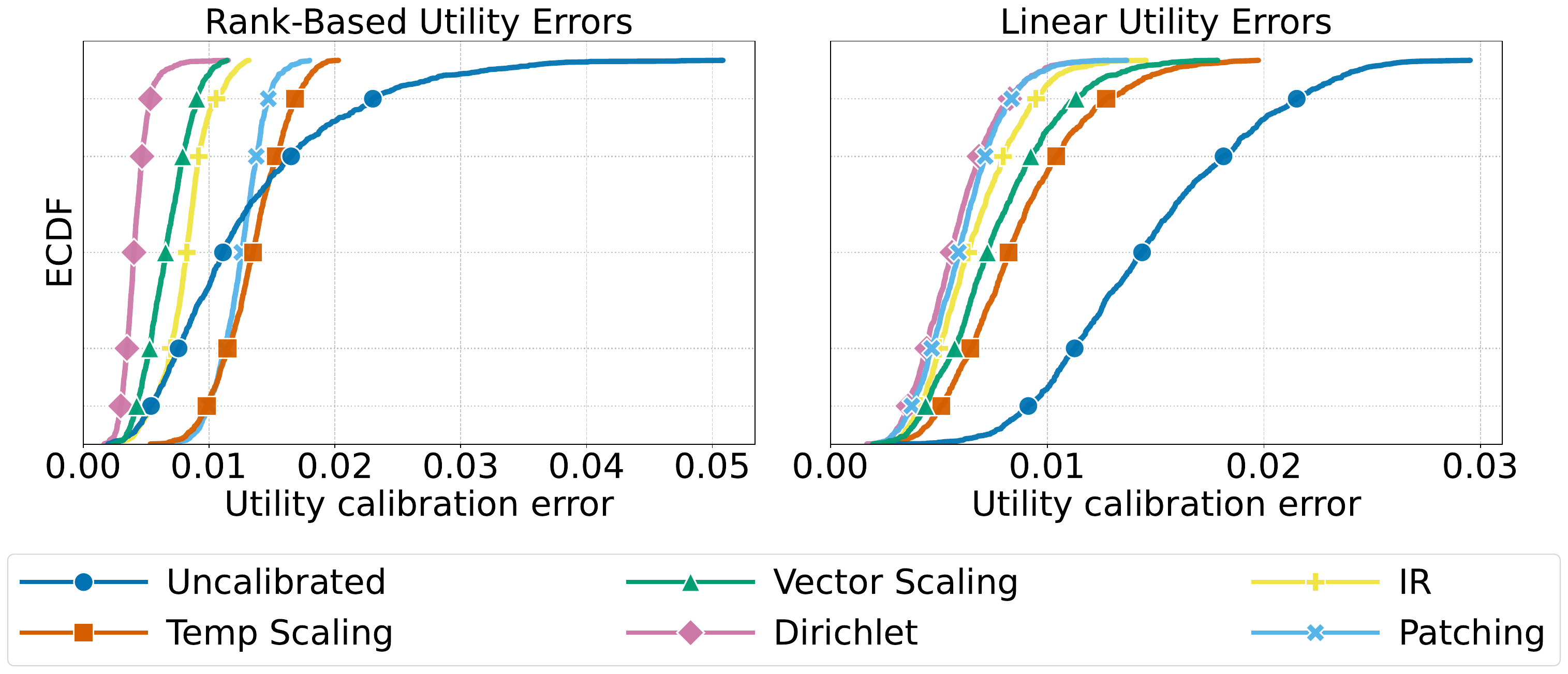}
      \caption*{\texttt{ResNet56-CIFAR10}}
  \end{subfigure}

  \caption{CIFAR10 eCDF plots for $\ucal_{\lin}$ and $\ucal_{\rank}$.}
  \label{fig:app_cifar10_curves}
\end{figure}

\begin{figure}[htbp]
  \centering

  \begin{subfigure}{\linewidth}
      \centering
      \includegraphics[width=0.9\linewidth]{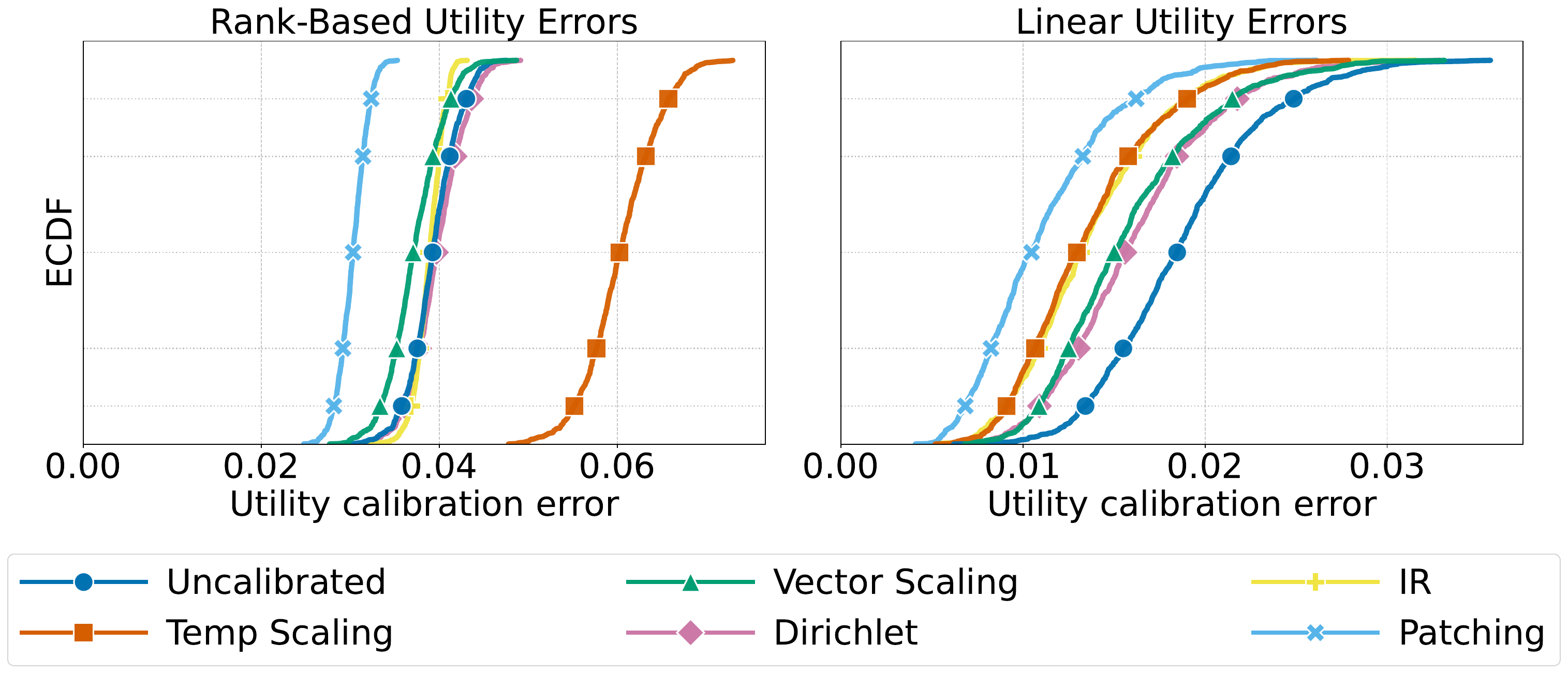}
      \caption*{\texttt{RepVGG-CIFAR100}}
  \end{subfigure}

  \begin{subfigure}{\linewidth}
      \centering
      \includegraphics[width=0.9\linewidth]{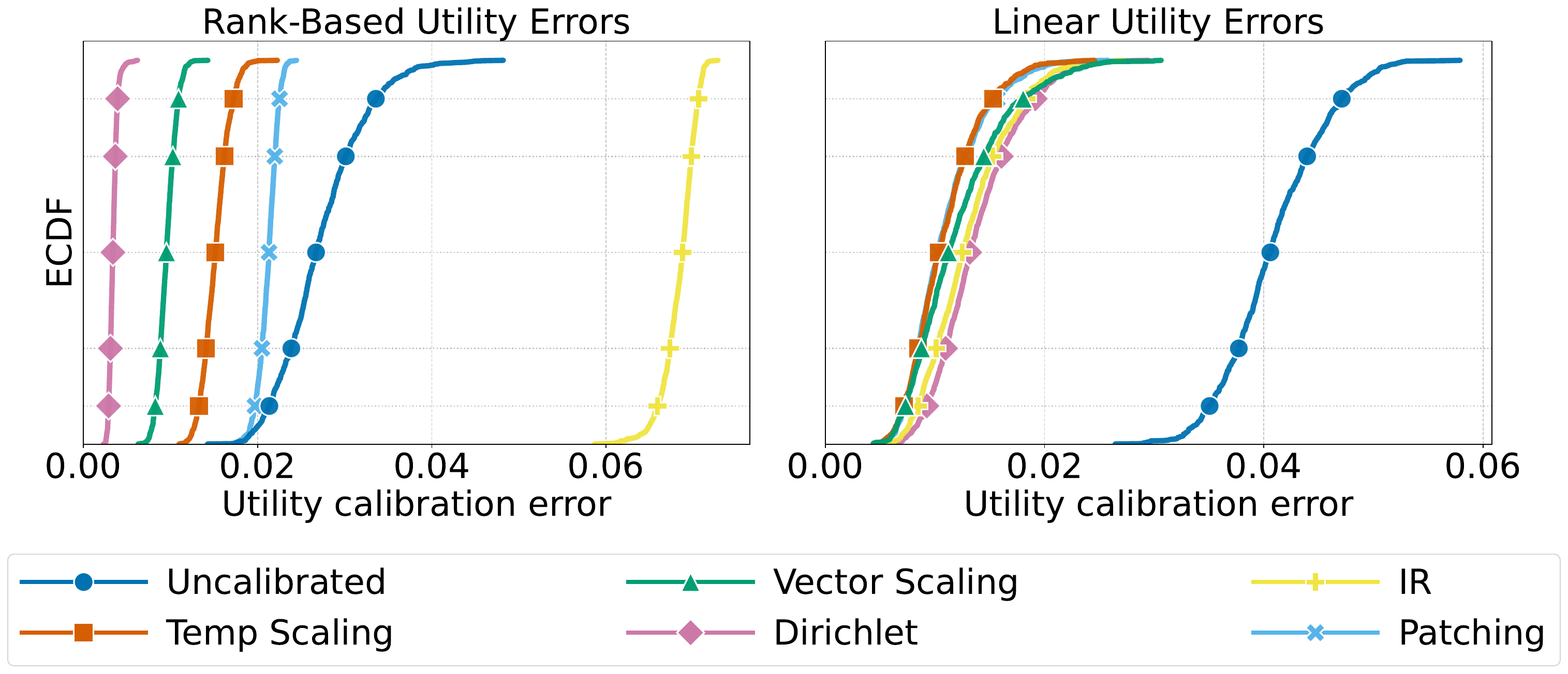}
      \caption*{\texttt{ResNet56-CIFAR100}}
  \end{subfigure}

  \caption{CIFAR100 eCDF plots for $\ucal_{\lin}$ and $\ucal_{\rank}$.}
  \label{fig:app_cifar100_curves}
\end{figure}

\begin{figure}[htbp]
  \centering

  \begin{subfigure}{\linewidth}
      \centering
      \includegraphics[width=0.9\linewidth]{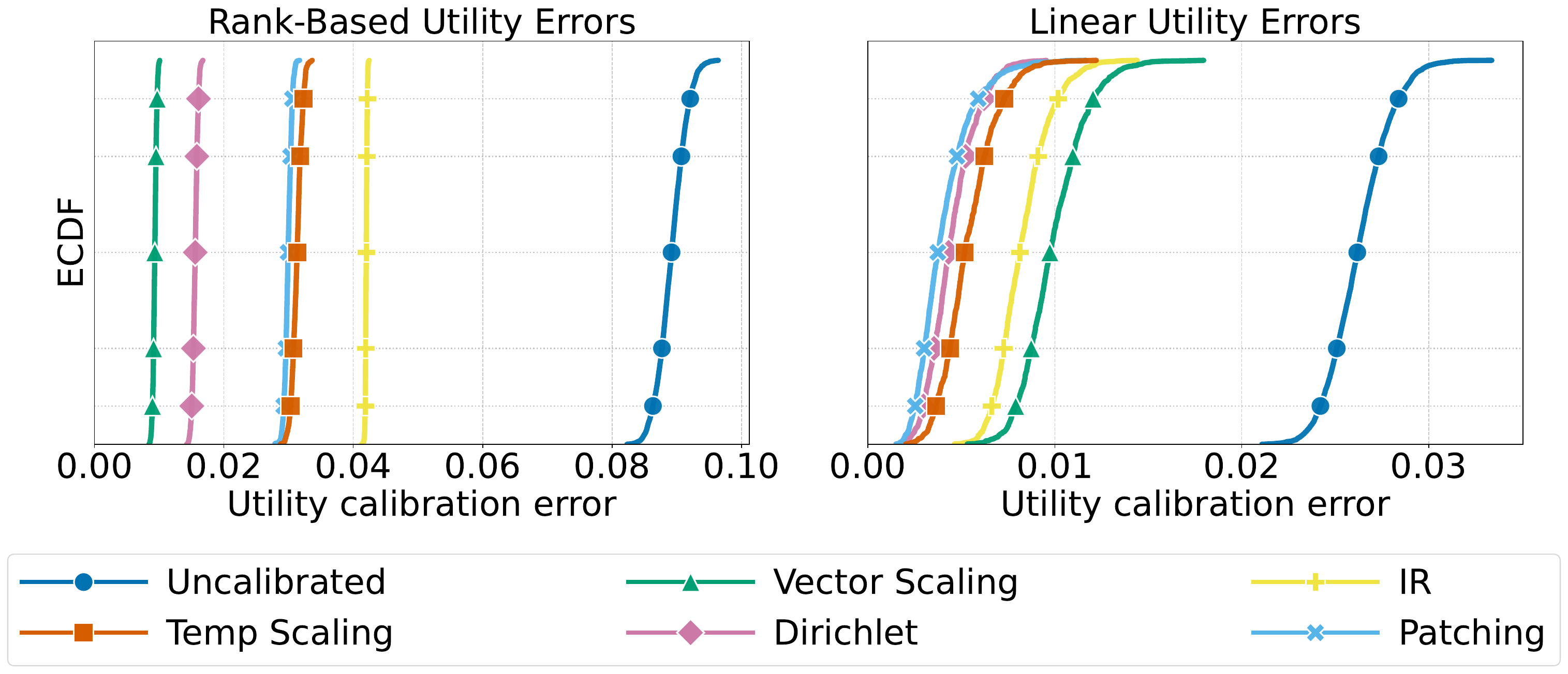}
      \caption*{\texttt{ViT\textunderscore Base\textunderscore P16\textunderscore 224-ImageNet1k}}
  \end{subfigure}

  \begin{subfigure}{\linewidth}
      \centering
      \includegraphics[width=0.9\linewidth]{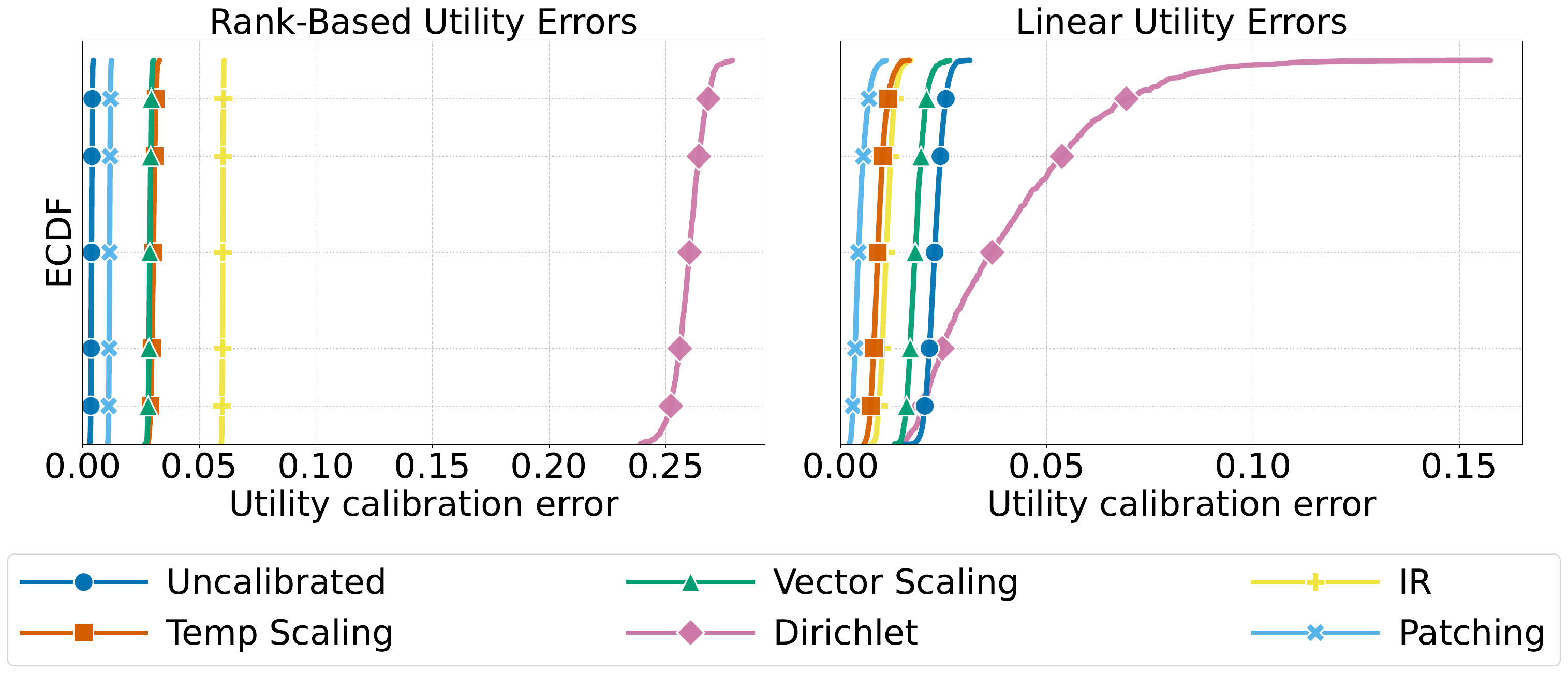}
      \caption*{\texttt{ResNet50-ImageNet1k}}
  \end{subfigure}

  \caption{ImageNet eCDF plots for $\ucal_{\lin}$ and $\ucal_{\rank}$.}
  \label{fig:app_imagenet_curves}
\end{figure}

\begin{figure}[htbp]
    \centering
    \includegraphics[width=0.9\linewidth]{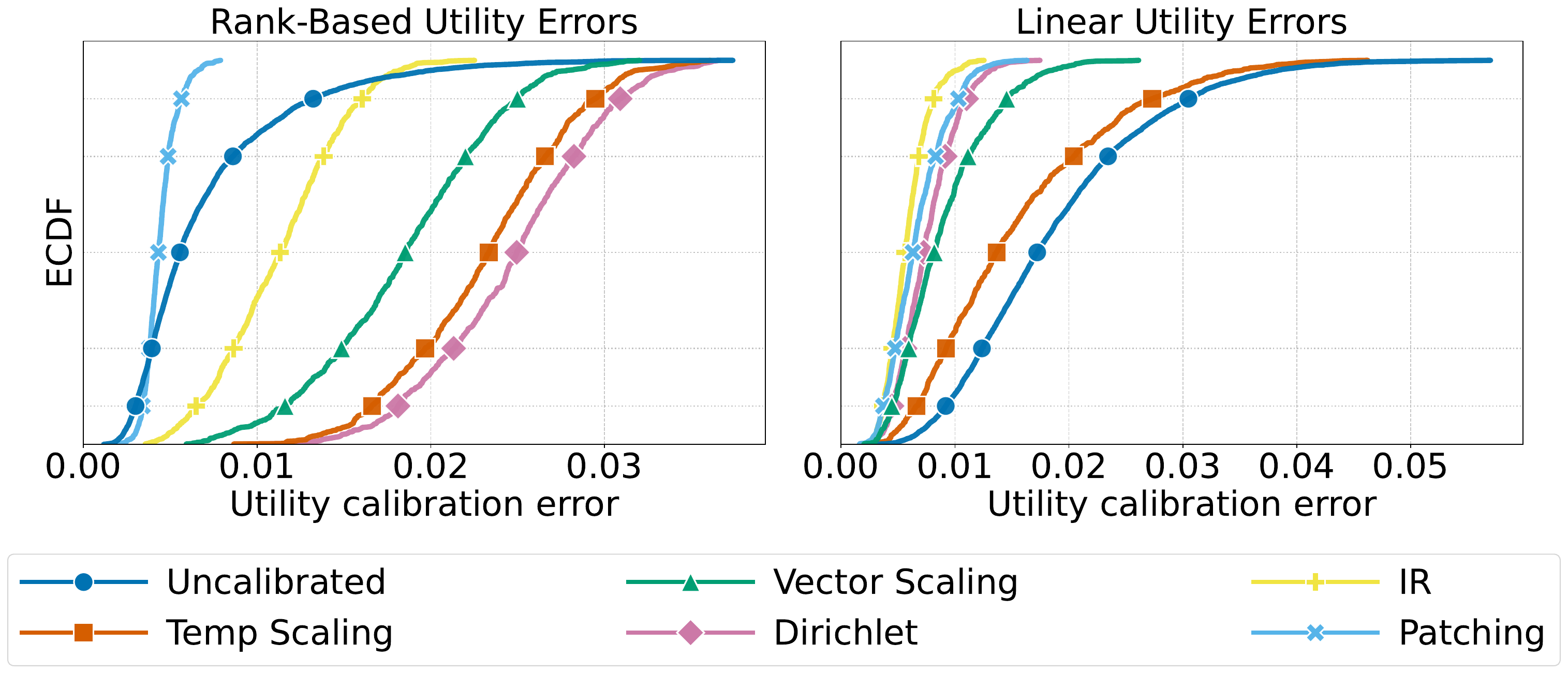}
    \caption{Yahoo Answers eCDF plots for $\ucal_{\lin}$ and $\ucal_{\rank}$.}
    \label{fig:app_yahoo_curves}
\end{figure}

\end{document}